\documentclass{article}

% if you need to pass options to natbib, use, e.g.:
    \PassOptionsToPackage{numbers, compress}{natbib}
% before loading neurips_2021

% ready for submission
% \usepackage{neurips_2023}

% to compile a preprint version, e.g., for submission to arXiv, add add the
% [preprint] option:
    % \usepackage[preprint]{neurips_2023}

% to compile a camera-ready version, add the [final] option, e.g.:
    \usepackage[final]{neurips_2023}

% to avoid loading the natbib package, add option nonatbib:
%    \usepackage[nonatbib]{neurips_2021}

\usepackage[utf8]{inputenc} % allow utf-8 input
\usepackage[T1]{fontenc}    % use 8-bit T1 fonts
\usepackage{hyperref}       % hyperlinks
\usepackage{url}            % simple URL typesetting
\usepackage{booktabs}       % professional-quality tables
\usepackage{amsfonts}       % blackboard math symbols
\usepackage{nicefrac}       % compact symbols for 1/2, etc.
\usepackage{microtype}      % microtypography
\usepackage{subcaption}

% Get better typography
% \usepackage[protrusion=true,expansion=true]{microtype}		

% Small margins
%\usepackage[top=2cm, bottom=2cm, left = 1cm, right = 1cm,columnsep=2

% For basic math, align, fonts, etc.
\usepackage{amsmath}
\usepackage{amsthm}
\usepackage{amssymb}
\usepackage{bbm}
\usepackage{mathtools}
\usepackage{mathrsfs}
\usepackage{cancel}
\mathtoolsset{showonlyrefs}

\newcommand{\bm}[1]{\mathbf{#1}}

\newtheorem{prop}{Property}[]
\newtheorem{ass}{Assumption}[]

\newtheorem{rem}{Remark}[]

% For creating sub-groups of Assumptions/Theorems etc.

\def\Re{\mathbb{R}}

\usepackage{courier} % For \texttt{foo} to put foo in Courier (for code / variables)

\usepackage{lipsum} % For dummy text

% For images
\usepackage{graphicx}
% \usepackage{subcaption}
% \usepackage[space]{grffile} % For spaces in image names

% For bibliography
% \usepackage[round]{natbib}

% For links (e.g., clicking a reference takes you to the phy)
\usepackage{hyperref}

\usepackage{theoremref}
% For color
\usepackage{graphicx}
\usepackage{wrapfig}

\usepackage[dvipsnames]{xcolor}
\definecolor{dark-red}{rgb}{0.4,0.15,0.15}
\definecolor{dark-blue}{rgb}{0,0,0.7}
\hypersetup{
    colorlinks, linkcolor={dark-blue},
    citecolor={dark-blue}, urlcolor={dark-blue}
}

\definecolor{forestgreen}{RGB}{34, 108, 31}
% Macros

\usepackage[colorinlistoftodos]{todonotes}

\DeclareMathOperator*{\argmax}{arg\,max}

\usepackage{microtype}

\usepackage{algpseudocode}
\usepackage[vlined,linesnumbered,ruled,algo2e]{algorithm2e}
\SetKwProg{Fn}{Function}{}{}
\let\oldnl\nl% Store \nl in \oldnl
\newcommand{\nonl}{\renewcommand{\nl}{\let\nl\oldnl}}% Remove line number for one line

% DEfine commands for expectation
\newcommand{\EE}[2]{\mathbb{E}_{#1\!\!}\left[#2\right]}
\newcommand{\VV}[2]{\operatorname{Var}_{#1\!\!}\left(#2\right)}

% Miscelaneous headers
\usepackage{multicol}
\usepackage{enumitem}
\usepackage[utf8]{inputenc} % allow utf-8 input
\usepackage[T1]{fontenc}    % use 8-bit T1 fonts
\usepackage{hyperref}       % hyperlinks
\usepackage{url}            % simple URL typesetting
\usepackage{booktabs}       % professional-quality tables
\usepackage{nicefrac}       % compact symbols for 1/2, etc.

% To make sure bottom images are above footnote
% \usepackage[bottom]{footmisc}

% For cropping images
\usepackage[export]{adjustbox}
\usepackage{wrapfig}
\usepackage{multirow}

%------------------------
\usepackage[framemethod=TikZ]{mdframed}
% \mdfsetup{
%   backgroundcolor=black!10,
%   roundcorner=10pt,
%   skipabove=0.5em,
%   skipbelow=0.5em
% }
\mdfsetup{
  backgroundcolor=black!5,
  topline=true,
  rightline=true,
  bottomline=true,
  leftline=true,
  roundcorner=4pt,
  skipabove=0.5em,
  skipbelow=0.5em
}
\usepackage[most]{tcolorbox}
\newcommand{\note}[1]{\begin{mdframed}#1\end{mdframed}}
%------------------------

%  defining stuff so as to have column aligned with pm
\usepackage{dcolumn}
% \newcolumntype{d}[1]{D{\pm}{\pm}{#1}}

% \title{Implicit Reward Optimization}
% \title{Implicitly Designing Reward Functions}
\title{Behavior Alignment
via \\ Reward Function Optimization}

% The \author macro works with any number of authors. There are two commands
% used to separate the names and addresses of multiple authors: \And and \AND.
%
% Using \And between authors leaves it to LaTeX to determine where to break the
% lines. Using \AND forces a line break at that point. So, if LaTeX puts 3 of 4
% authors names on the first line, and the last on the second line, try using
% \AND instead of \And before the third author name.
% \author{%
%     Dhawal Gupta\thanks{Both authors contributed equally to this work.} \And Yash Chandak\footnotemark[1] \And Scott M. Jordan\\
%     \\ \AND Philip S. Thomas \And Bruno Castro da Silva\\
%     \AND \\
%    University of Massachusetts\\
%     \small{\texttt{\{dgupta,ychandak,sjordan,pthomas,bsilva\}@cs.umass.edu}} \\
% }
\author{%
    Dhawal Gupta\thanks{Both authors contributed equally to this work. $^\dagger$ Work done while at the University of Massachusetts. Corresponding author: Dhawal Gupta (\texttt{dgupta@cs.umass.edu}). } \\
    University of Massachusetts\\
    % \texttt{dgupta@cs.umass.edu} \\
    \And
    Yash Chandak\footnotemark[1]\,\,\,\footnotemark[2]\\
    % University of Massachusetts\\
    Stanford University\\
    % \texttt{ychandak@cs.umass.edu} \\
    % \texttt{ychandak@stanford.edu} \\
    \And
    Scott M. Jordan\footnotemark[2] 
    \\
    % University of Massachusetts\\
    University of Alberta\\
    % \texttt{sjordan@cs.umass.edu} \\
    % \texttt{sjordan@ualberta.ca} \\
    \AND
    Philip S. Thomas \\
    University of Massachusetts\\
    % \texttt{pthomas@cs.umass.edu} \\
    \And
    Bruno Castro da Silva\\
    University of Massachusetts\\
    % \texttt{bsilva@cs.umass.edu} \\
   %     \AND \\
   % University of Massachusetts\\
   %  \small{\texttt{\{dgupta,ychandak,sjordan,pthomas,bsilva\}@cs.umass.edu}} \\
}
% Dhawal Gupta\thanks{Work done as a student researcher at Google Research.} \\
%   University of Massachusetts\\
\begin{document}

\maketitle

\begin{abstract}

Designing reward functions for efficiently guiding reinforcement learning (RL) agents toward specific behaviors is a complex task.
This is challenging since it requires the identification of reward structures that are not sparse and that avoid inadvertently inducing undesirable behaviors.
Naively modifying the reward structure to offer denser and more frequent feedback can lead to unintended outcomes and promote behaviors that are not aligned with the designer's intended goal. Although potential-based reward shaping is often suggested as a remedy, we systematically investigate settings where deploying it often significantly impairs performance. To address these issues, we introduce a new framework that uses a bi-level objective to learn \emph{behavior alignment reward functions}. These functions integrate auxiliary rewards reflecting a designer's heuristics and domain knowledge with the environment's primary rewards. Our approach automatically determines the most effective way to blend these types of feedback, thereby enhancing robustness against heuristic reward misspecification. Remarkably, it can also adapt an agent's policy optimization process to mitigate suboptimalities resulting from limitations and biases inherent in the underlying RL algorithms. We evaluate our method's efficacy on a diverse set of tasks, from small-scale experiments to high-dimensional control challenges. We investigate heuristic auxiliary rewards of varying quality---some of which are beneficial and others detrimental to the learning process. Our results show that our framework offers a robust and principled way to integrate designer-specified heuristics. It not only addresses key shortcomings of existing approaches but also consistently leads to high-performing solutions, even when given misaligned or poorly-specified auxiliary reward functions.

\end{abstract}
% \infty\infty

\section{Introduction}
In this paper, we investigate the challenge of enabling reinforcement learning (RL) practitioners, who may not be experts in the field, to incorporate domain knowledge through heuristic auxiliary reward functions. Our goal is to ensure that such auxiliary rewards not only induce behaviors that align with the designer's intentions but also allow for faster learning.
%
% The successful application of reinforcement learning (RL) agents requires the precise specification of a reward function that can be used by an optimization algorithm.
% that can quickly maximize the discounted sum of rewards obtained from that function.
RL practitioners typically model a given control problem by first designing simple reward functions that directly quantify whether (or how well) an agent completed a task.
These could be, for instance, functions assigning a reward of $+1$ iff the agent reaches a specified goal state, and zero otherwise. However, optimizing a policy based on such a sparse reward function often proves challenging.

%and the challenges that it poses to the credit assignment problem.

To address this issue, designers often introduce auxiliary reward functions that supplement the original rewards. Auxiliary rewards are heuristic guidelines aimed at facilitating and speeding up the learning process. One could, e.g., augment the previously described reward function (which gives a reward of $+1$ upon reaching a goal state) with an auxiliary reward accounting for the agent's distance to the goal.
However, the effectiveness of using auxiliary reward functions largely depends on the problem's complexity and the designer's skill in crafting heuristics that, when combined with the original reward function, do not induce behaviors different than the ones originally intended~\citep{ho2015teaching,ho2019people}.
%, and (b) that do not make the optimization problem harder.
% for example, by unintendedly biasing the agent away from solutions and making the learning process less sample efficient. 

% This raises the main questions of interest: \textit{Can we design a method that (i) allows for easy specification of auxiliary rewards by the users, that (ii) when available, enables the agent to effectively utilize the auxiliary rewards to maximize the objective based on primary reward, without burdening the designer with ensuring optimality, and (iii) maintains the algorithm's robustness to possibly incorrect heuristic design choices made by the user?}

Existing methods like potential-based reward shaping~\citep{ng1999policy} aim to incorporate domain knowledge without misaligning the behaviors induced by the resulting combined reward functions. However, as we discuss in Section \ref{sec:shaping}, potential-based shaping has several limitations: \textit{(i)} it is restricted to state-based functions; \textit{(ii)} it amounts to a different initialization of the $q$-function; \textit{(iii)} it does not alter policy gradients in expectation; and \textit{(iv)} it can increase the variance in policy gradient methods.

To address these challenges, we introduce a scalable algorithm that empowers RL practitioners to specify potentially imperfect auxiliary reward functions. It ensures that the resulting optimization process will not inadvertently lead to unintended behaviors and that it will allow for faster learning.
% \ycinline{Needs paraphrasing for better flow}
% that can also allow for faster learning. 
% We do so by taking an alternative perspective on behavior alignment, \ycinline{Needs paraphrasing for better flow}
In particular, this paper addresses the following challenges:%\textit{(i)} we tackle the problem of incorporating auxiliary information, alongside \textit{(ii)} mitigating biases and limitations often associated with algorithms used in practice while keeping \textit{(iii)} the proposed solution scalable to high dimensional control tasks. }
% \bcs{There's a sentence missing here, connecting the current paragraph with the items below. What are the items below? Goals/contributions of the paper? Challenges that the paper addresses? Here, we need a sentence like 'In particular, we tackle the investigate and propose principled solutions to the following three key challenges:' [and then you'd have to rewrite each one below to be a sentence in the form of a challenge/goal, like '\textbf{(1)} \textit{How to allow for auxiliary information to be incorporated into value functions?}, \textbf{(2)} \textit{How to optimize auxiliary reward functions capable of mitigating biases associated with particular learning algorithms}?, etc.]}

\textbf{(1) How to incorporate auxiliary reward information: } We introduce a novel bi-level objective to analyze and automatically fine-tune designer-created auxiliary reward functions. It ensures they remain aligned with the original reward and do not induce behaviors different from those originally intended by the designer. Additionally, we formulate the problem to shape the optimization landscape, biasing our bi-level optimizer toward auxiliary reward functions that facilitate faster learning. 

\textbf{(2) How to use auxiliary reward to mitigate algorithmic biases: } We show that our framework can automatically adjust how primary and auxiliary rewards are blended to mitigate limitations or biases inherent in the underlying RL algorithm~(Section~\ref{sec:biascorrection}). For instance, many policy-gradient-based RL algorithms are subject to biases due to issues like discounting mismatch~\citep{thomas2014bias} or partial off-policy correction \citep{schulman2017proximal}. These biases can hinder the algorithm's ability to identify near-optimal policies.

\textbf{(3) How to ensure scalability to high-dimensional problems:} We introduce an algorithm that employs \textit{implicit gradients} to automatically adjust primary and auxiliary rewards, ensuring that the combined reward function aligns with the designer's original expectations (see Figure~\ref{fig:vectorfield}). We evaluate our method's efficacy across a range of tasks, from small-scale to high-dimensional control settings (see Section \ref{sec:empirics}). In these tasks, we experiment with auxiliary rewards of varying quality; some accelerate learning, while others can be detrimental to finding an optimal policy.

% \bcs{By tackling these challenges, we thus introduce a } 
% \st{Overall, the proposed method provides a} 

% Overall, the proposed method lessens the burden on RL practitioners. It empowers them to fine-tune the feedback provided to an agent by utilizing domain-specific knowledge through auxiliary reward functions while ensuring that such modifications do not unintentionally impede the learning process, whether in regard to the behaviors it promotes or learning speed. 

\begin{figure}[t]
    \centering
    \includegraphics[width=0.25\textwidth]{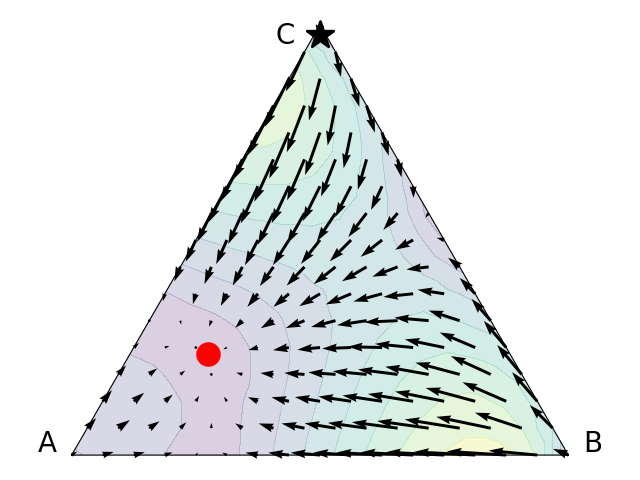}
    \includegraphics[width=0.25\textwidth]{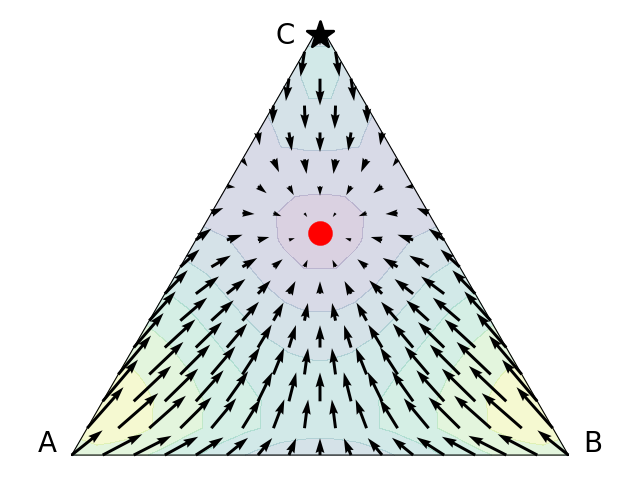}
    \includegraphics[width=0.25\textwidth]{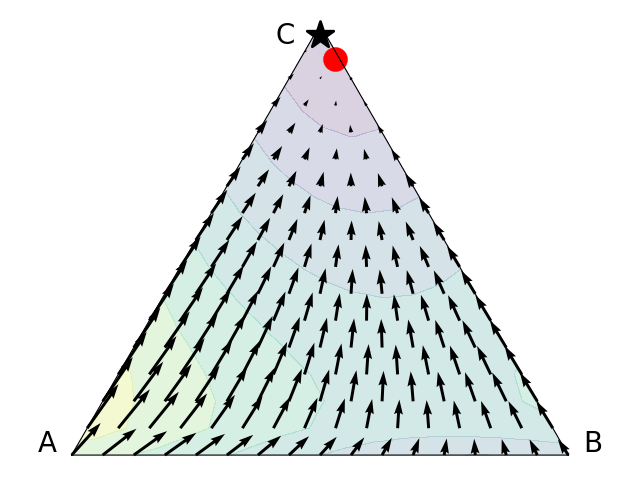}
    \caption{
Auxiliary rewards can be used to convey to the agent how we (designers) \textit{think} it should solve the problem. However, if not carefully designed, they can lead to policies that result in undesired behaviors. This figure provides a visual illustration of a toy example depicting how the proposed method works. The star represents the optimal policy, and the red dot represents the fixed point of a policy optimization process under a ``sub-optimal'' heuristic; i.e., one that, when naively combined with $r_p$, induces behaviors different from those under the optimal policy for $r_p$. \textbf{(Left)} Vector field of a policy optimization process converging to a sub-optimal policy. \textbf{(Middle and Right)} By changing the influence of auxiliary rewards, our method can dynamically \textit{correct} the \textit{entire policy optimization process} steering it towards a policy that results in the desired behavior.}  
    \label{fig:vectorfield}
\end{figure}

\section{Notation}
In this paper, we investigate sequential decision-making problems modeled as Markov decision processes (MDPs). An MDP is defined as a tuple $(\mathcal S, \mathcal A, p, r_p, r_{\texttt{aux}}, \gamma, d_0)$, where $\mathcal S$ is the state set, $\mathcal A$ is the action set, $p$ is the transition function, $r_p\!: \!\mathcal S \times \mathcal A \rightarrow \Re$ is the \textit{primary} reward function, $r_{\texttt{aux}}:\! \mathcal S \times \mathcal A \rightarrow \Re$ is an \textit{optional} auxiliary reward function (possibly designed by a non-expert in machine learning, based on domain knowledge), and $d_0$ is the starting state distribution.
Let $\pi_\theta: \!\mathcal S \times \mathcal A \rightarrow [0,1]$ be any policy parameterized using $\theta \in \Theta$.
For brevity, we will often use $\pi_\theta$ and $\theta$ interchangeably.
Let $S_t$ and $A_t$ be the random variables for the state and action observed at the time $t$.
As in the standard RL setting, the performance $J(\theta)$ of a policy $\pi_\theta$ is defined as the expected discounted return with respect to the (primary) reward function, $r_p$; i.e., $J(\theta) \coloneqq \mathbb{E}_\pi [\sum_{t=0}^T \gamma^t r_p(S_t, A_t)] $, where $T+1$ is the episode length.
% Like the standard settings, where  $r_{\texttt{aux}}$ is not provided, the \bcs{This sentence is a little bit odd. I would say 'As in standard RL settings, we define the performance J of a policy pi as the expected return with respect to the (primary) reward function, rp; i.e., blah} performance $J(\theta)$ of any policy $\pi_\theta$ is defined using what we call the primary rewards $r_p$, i.e., $J(\theta) \coloneqq \mathbb{E}_\pi [\sum_{t=0}^T \gamma^t r_p(S_t, A_t)] $, where $T+1$ is the episode length.
% and
% for any expectation, the subscript indicates the sampling procedure. 
%
An optimal policy parameter $\theta^*$ is defined as $\theta^* \in \argmax_{\theta \in \Theta} J(\theta)$. A popular technique to search for $\theta^*$ is based on constructing sample estimates $\hat \Delta(\theta, r_p)$ of the ($\gamma$-dropped) policy gradient, $\Delta(\theta, r_p)$, given an agent's interactions with the environment for one episode  \citep{sutton2018reinforcement, thomas2014bias}. Then, 
% \bcs{$\leftarrow$ The notation Delta(.) for the policy gradient is unusual. Is there a reason not to say $\nabla$ or $\partial$? Also, it'd be better to break this sentence into two: 'A common way to solve this is by blah. In particular, let psi be defined as blah. Then, the gradient of J with respect to policy parameters theta, denoted BLAH, is defined as follows, while its sample-based approximation, BLAH, can be computed according to Eq1 [is there a way of numbering each of these eqs while keeping them in the same line? Otherwise it's not clear what Eq1 refers to---the actual gradient or the estimate]}\yc{We thought about this a little but decided to use $\Delta$ because of the policy `gradient' that we use which is just a update rule and not a true gradient. Also, Adding $\nabla$ complicates things with total and partial derivatives, which have particular meaning later. Perhaps it is worth clarifying in the notation choice if space permits. }, 
using $\psi_\theta(s,a)$ as a shorthand for $\mathrm d \ln \pi_\theta(s,a)/\mathrm d \theta$, these quantities are defined as  follows:
% \bcs{$\leftarrow$ This should be a frac, not 'a/b'. Also, 'd' is typically used to refer to the case where the function has just 1 input. Gradients $\partial$ generalize that to multiple inputs/parameters, so here we should be saying $\frac{\partial a}{\partial b}$}\yc{We had to distinguish between partial and total derivatives more formally, which is useful later(Eqn 7). Might be useful to keep it $d$ instead of $\partial$ here.}
%
\begin{align}
\Delta(\theta, r_p) &= \mathbb E_{\pi_\theta}\left[\hat \Delta(\theta, r_p)\right] \quad\,\, \,\,\textrm{and} & \hat \Delta(\theta, r_p)\coloneqq \sum_{t=0}^T \psi_\theta(S_t, A_t)\sum_{j=t}^T \gamma^{j-t}r_p(S_j, A_j).  \label{eqn:PG}
\end{align}
\section{Limitations of Potential Based Reward Shaping}
\label{sec:shaping}
% \yash{highlight the lack of action dependency. Make the limitation pointwise?}

% \dhawal{We can combine the 2 plots into a single plot on MountainCar, and just remove the reinforce and actor-critic methods.  }
When the objective function $J(\theta)$ is defined with respect to a \textit{sparse} reward function $r_p$ (i.e., a reward function such that $r_p(s, a)\!\!\coloneqq 0$ for most $s \in \mathcal S$ and $a \in \mathcal A$), searching for $\theta^*$ is challenging~ \citep{hare2019dealing}.
% \bcs{$\leftarrow$ I suggested this change because one specifies the reward, not J}
%
A natural way to provide more frequent (i.e., denser) feedback to the agent, in the hope of facilitating learning, is to consider an alternate reward function, $\tilde r_{\text{naive}}\!\coloneqq\! r_p{+}r_{\texttt{aux}}$.
% , e.g., by specifying $r_{\texttt{aux}} \propto |\texttt{velocity}|$ in the mountain car domain.  
%
However, as discussed earlier, $r_{\texttt{aux}}$ may be a designer-specified auxiliary reward function not perfectly aligned with the objective encoded in $r_p$. In this case, using $\tilde r_{\text{naive}}$ may encourage undesired behavior.
An alternative way to incorporate domain knowledge to facilitate learning was introduced by~\citet{ng1999policy}. They proposed using a \textit{potential function}, $\Phi: \mathcal S \rightarrow \mathbb R$ (analogous to  $r_{\texttt{aux}}$),  to define new reward functions of the form $\tilde r_{\Phi}(S_t,A_t,S_{t+1})\!\!\coloneqq\! r_p(S_t,A_t){+}\gamma \Phi(S_{t+1}){-}\Phi(S_t)$. Importantly, they   
%  %
% \begin{align}
%     \tilde r_{\Phi}(S_t,A_t,S_{t+1}) \coloneqq r_p(S_t,A_t) + \gamma \Phi(S_{t+1}) - \Phi(S_t),  \label{eqn:shaping} 
%  \end{align}
%  %
showed that optimal policies with respect to the objective $\mathbb{E} [\sum_{t=0}^T \gamma^t \tilde r_{\Phi}(S_t, A_t, S_{t+1}) ] $ are also optimal with respect to $J(\theta)$. 
% $\theta^*$ remains an optimal policy for $\tilde J(\theta) \coloneqq \mathbb{E} [\sum_{t=0}^T \gamma^t {\color{Bittersweet}\tilde r}(S_t, A_t, S_{t+1}) ] $  as well.

\label{subsec:limitation-potential}

While potential-based reward shaping can partially alleviate some of the difficulties arising from sparse rewards, \citet{wiewiora2003potential} showed that $q$-learning using $\tilde r_{\Phi}$ produces the \textit{exact same sequence of updates} as $q$-learning using $r_p$ but with a different initialization of $q$-values.
%
% \st{Similarly, in the following,} \bcs{
In what follows, we establish a similar result: we show that performing potential-based reward shaping
%for a respective reward function
%from a single episode 
has \textit{no impact on expected policy gradient} updates; and that it can, in fact, even increase the variance of the updates.
% \footnote{\yash{TODO: In Appendix \ref{apx:sec:proofs} we also discuss that the result holds irresepctive of whether $\gamma$ is dropped or not in the policy gradient.}}
%
\begin{prop}$\mathbb E[\hat \Delta(\theta, \tilde r_{\Phi}) ]\!=\!\mathbb E[\hat \Delta(\theta, r_p) ]$\,\, \mbox{and} \,\,$\operatorname{Var} (\hat \Delta(\theta, \tilde r_{\Phi}) )$ \mbox{can be higher than} $ \operatorname{Var} ( \hat \Delta(\theta, r_p) )$.
\thlabel{prop:potential}
\end{prop}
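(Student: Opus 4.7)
My plan is to rewrite the difference $\hat\Delta(\theta, \tilde r_\Phi) - \hat\Delta(\theta, r_p)$ so that its telescoping structure becomes visible, then apply the score-function identity to get the expected-gradient equality, and finally exhibit a minimal counterexample for the variance claim.

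First, I would substitute $\tilde r_\Phi(S_j, A_j, S_{j+1}) = r_p(S_j, A_j) + \gamma\Phi(S_{j+1}) - \Phi(S_j)$ into the estimator from~\eqref{eqn:PG}. The difference $\hat\Delta(\theta, \tilde r_\Phi) - \hat\Delta(\theta, r_p)$ then becomes a double sum whose inner sum, by a one-line index shift, telescopes to $\gamma^{T+1-t}\Phi(S_{T+1}) - \Phi(S_t)$. Invoking the standard convention $\Phi(S_{T+1}) = 0$ at terminal/absorbing states~\citep{ng1999policy}, this reduces the difference to $-\sum_{t=0}^T \psi_\theta(S_t, A_t)\,\Phi(S_t)$. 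Taking expectations and conditioning on $S_t$, each term equals $\mathbb{E}\bigl[\Phi(S_t)\,\mathbb{E}[\psi_\theta(S_t, A_t)\mid S_t]\bigr] = 0$ by the score-function identity $\sum_a \nabla_\theta \pi_\theta(a\mid s) = \nabla_\theta 1 = 0$, establishing the expectation equality.

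For the variance claim, one counterexample suffices. The cleanest is an MDP with a single non-terminal state $s_0$ and $r_p \equiv 0$ over a one-step episode: then $\hat\Delta(\theta, r_p) \equiv 0$ has zero variance, while for any $\Phi$ with $\Phi(s_0) \neq 0$ and any nondegenerate stochastic policy whose score $\psi_\theta(s_0, \cdot)$ is nonconstant in the action, the displayed identity yields $\hat\Delta(\theta, \tilde r_\Phi) = -\Phi(s_0)\,\psi_\theta(s_0, A_0)$, which has strictly positive variance. Hence $\operatorname{Var}(\hat\Delta(\theta, \tilde r_\Phi))$ can strictly exceed $\operatorname{Var}(\hat\Delta(\theta, r_p))$, as claimed.

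Main obstacle: the algebra is essentially immediate. The only subtlety worth flagging is the telescoping remainder $\gamma^{T+1-t}\Phi(S_{T+1})$ --- unlike the $\Phi(S_t)$ term, its expectation against $\psi_\theta(S_t, A_t)$ is not obviously zero, because $A_t$ is correlated with later states through the transition dynamics. The cleanest fix is to adopt the terminal-state convention $\Phi(S_{T+1}) = 0$, which is both standard and the setting under which the \citet{ng1999policy} optimality-preservation result is itself stated. Without that convention, one would need either a finite-horizon argument treating $\Phi(S_{T+1})$ as an absorbing-state constant or an infinite-horizon bounded-$\Phi$ limit argument, noting that $\gamma^{T+1-t} \to 0$ for $\gamma<1$. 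Everything else is the score-function identity plus a one-line counterexample.
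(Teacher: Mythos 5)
Your proposal is correct and follows essentially the same route as the paper's proof: the expectation equality is obtained by telescoping the inner discounted sum of potential differences down to $\gamma^{T-t+1}\Phi(S_{T+1})-\Phi(S_t)$, handling the terminal term via the fixed-constant (here zero) convention of \citet{ng1999policy}, and then killing the remainder with the score-function identity $\mathbb{E}[\psi_\theta(S_t,A_t)\mid S_t]=0$; the variance claim is likewise settled by a single-step example. The only difference is cosmetic: the paper derives a general single-step condition $\Phi(s)\,\mathbb{E}_\pi[\psi_\theta(s,A)^2] > 2\,\mathbb{E}_\pi[\psi_\theta(s,A)^2 r_p(s,A)]$ and then instantiates it, whereas you jump directly to the minimal instance $r_p\equiv 0$, $\Phi(s_0)\neq 0$, which is a special case of that condition.
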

% \yc{It might be helpful to have a table summarizing the comparison between the proposed work and potential shaping.}
All proofs are deferred to Appendix \ref{apx:sec:proofs}.
The above points highlight some of the limitations of potential-based shaping for policy gradients and $q$-learning---both of which form the backbone of the majority of model-free RL algorithms \citep{sutton2018reinforcement}. %
Furthermore, potential functions $\Phi$ \textit{cannot} depend on actions \citep{ng1999policy}, which restricts the class of eligible auxiliary rewards $r_{\texttt{aux}}$ and heuristics functions that may be used. Finally, notice that $\Phi$ is designed independently of the agent's underlying learning algorithm. As we will show in the next sections, our method can autonomously discover auxiliary reward functions that not only facilitate learning but also help mitigate various types of algorithmic limitations and biases.
% these issues and algorithmic limitations and biases.
% \st{fails to consider other imperfections like improper off-policy correction or using biased updates}. \bcs{$\leftarrow$ This latter point probably needs to be clarified, because up to this point, the reader hasn't really yet been presented to the idea that raux can 'fix' problems with learning algorithms; the main focus/pitch so far was that raux can speed up learning while remaining aligned. Maybe 'and thus cannot be constructed in a way that induces learning processes capable of mitigating algorithmic limitations such as those resulting from the use of improper off-policy correction techniques or biased policy updates. As we will show in the next sections, our method is capable of discovering auxiliary reward functions that can---surprisingly---help mitigate some of these issues and algorithmic limitations and biases.'}
%
% \yash{TODO: Show variance can increase even when $\phi=q*$}

% \paragraph{Bi-Level formulation:}
%
% Building upon the insights from its limitations, we then focus on developing a bi-level optimization method to leverage the auxiliary rewards $r_{\texttt{aux}}$ to accelerate policy optimization but while remaining robust against ill-specified $r_{\texttt{aux}}$.

\section{Behavior Alignment Reward Function
}

In this section, we introduce an objective function designed to tackle the primary challenge investigated in this paper: how to effectively leverage designer-specified auxiliary reward functions to rapidly induce behaviors envisioned by the designer.
The key observation is that naively adding an auxiliary reward function $r_\texttt{aux}$ to $r_p$  may produce policies whose corresponding behaviors are misaligned with respect to the behaviors induced by $r_p$. In these cases, $r_\texttt{aux}$ should be ignored during the search for an optimal policy.
On the other hand, if $r_p$ and $r_\texttt{aux}$ may be combined in a way that results in the desired behaviors, then combinations that produce frequent and informative feedback to the agent should be favored, as they are likely to facilitate faster learning.

\label{sec:auxrewards}
% %
% Before discussing the proposed procedure, it is informative to study the limitations of potential-based reward shaping \citep{ng1999policy}.
% \subsection{Behavior Alignment Reward }
\label{subsec:section3-barfi} 
% \dhawal{Need to either change the section name or subsection name}

To tackle the challenges discussed above, we employ a bi-level optimization procedure. This approach aims to create a \textit{behavior alignment reward} by combining $r_{\texttt{aux}}$ and $r_p$ using a parameterized function. Our method is inspired by the optimal rewards framework by \citet{singh2009rewards, singh2010intrinsically}.
Let $\gamma_\varphi \in [0,1)$ be a \textit{discount rate value} parameterized by $\varphi \in \Gamma$.\footnote{Our framework can be generalized to support state-action dependent discount rates, $\gamma$.} Let $r_\phi: \mathcal S \times \mathcal A \rightarrow \mathbb R$ be a \textit{behavior alignment reward}: a function of both $r_p$ and $r_{\texttt{aux}}$, parameterized by $\phi \in \Upsilon$, where $\Upsilon$ and  $\Gamma$ are function classes.
One example of a behavior alignment reward function is $r_\phi(s,a)\!\coloneqq\! f_{\phi_1}(s,a) + \phi_2 r_p(s,a) + \phi_3 r_{\texttt{aux}}(s,a)$, where 
$f_\phi\!:\!\mathcal S \times \mathcal A \rightarrow \mathbb{R}$ and $\phi\!\coloneqq\!(\phi_1, \phi_2, \phi_3)$. 
% \yash{Make $\phi$ a function of state and actions?}%
Let $\texttt{Alg}$ 
% \dhawal{Phil had pointed out that "I know of no algorithms that take as input a reward function and output a policy. Algorithms tend to either take as input observed data (states or observations, actions, rewards) or both the reward function and transition function (planning algorithms)." } 
be any gradient/semi-gradient/non-gradient-based algorithm that outputs policy parameters.
To mitigate possible divergence issues arising from certain policy optimization algorithms like DQN~\citep{tsitsiklis1997analysis,achiam2019towards}, we make the following simplifying assumption, which can generally be met with appropriate regularizers and step sizes:
\begin{ass}
Given $r_\phi$ and $\gamma_\varphi$, the algorithm $\texttt{Alg}(r_\phi, \gamma_\varphi)$ converges to a fixed point $\theta$--- which we denote as $\theta(\phi, \varphi)\in\Theta$ to emphasize its indirect dependence on $\phi$ and $\varphi$ through $\texttt{Alg}$, $r_\phi$, and $\gamma_\varphi$. 
% hrough  with abuse of notation we write  $\theta(\phi, \varphi) \in \Theta$ as a function for a given value of $\phi, \varphi$.
% \bcs{\textbf{Here, we abuse notation to make it clearer and explicit that the policy parameters returned by $\texttt{Alg}$ depend both on $\phi$ and $\varphi$. We do this by defining $\theta$ as a function that directly depends on $\phi$ and $\varphi$, where this function returns a policy parameterization in $\Theta$.} Before, $\theta$ was defined as an element of $\Theta$ (like a vector of weights). Now it seems like it's a function. The notation is slightly inconsistent here. Is $\theta(\phi, \varphi)$ a function returning policy weights? But then what does the Algorithm return? Does the algorithm Alg return a function? Or is this notation used just to make it explicit that the $\theta$ that is being returned depends indirectly on the reward function parameters, $\phi$ and $\varphi$? In the latter case, I wonder if something like $\theta_\phi^\varphi$ could remove some of the ambiguity regarding what is $\theta$, exactly.}
\end{ass} 
Given this assumption, we now specify the following bi-level objective: 
%
% \dhawal{define lambda over here}
\begin{equation}
    \label{eqn:gammaregobj}
    \phi^*, \varphi^* \in \argmax_{\phi \in \Upsilon, \varphi \in \Gamma} \quad  {\color{blue} J(\theta(\phi, \varphi))}  - {\color{purple}  \lambda_\gamma \gamma_\varphi },  \quad
    \text{where} \quad\quad\theta(\phi, \varphi) \coloneqq \texttt{Alg}(r_\phi, \gamma_\varphi). 
\end{equation}
% \dhawal{don't use norm }
Here, $\lambda_\gamma$ serves as the regularization coefficient for the value of $\gamma_\varphi$, and $\texttt{Alg}$ denotes a given policy optimization algorithm.
%(which implicitly also depends on the environment and dataset).
%
Let, as an example, $\texttt{Alg}$ be an on-policy gradient algorithm that uses samples to estimate the gradient $\Delta(\theta, r_p)$, as in~\eqref{eqn:PG}. We can then define a corresponding variant of $\Delta(\theta, r_p)$ that is compatible with our formulation and objective, and which uses both $r_\phi$ and $\gamma_\varphi$, as follows:
\begin{align}
    \Delta_\text{on}(\theta, \phi, \varphi) \coloneqq \mathbb{E}_{\pi_\theta}\left[ \sum_{t=0}^T\psi_\theta(S_t, A_t) \sum_{j=t}^T \gamma_\varphi^{j-t} r_\phi(S_j, A_j) \right].\label{eqn:pgparam}
\end{align}
% We illustrate this process of bi-level optimization in Figure \ref{fig:vectorfield}. 
Notice that the bi-level formulation in \eqref{eqn:gammaregobj} is composed of three key components: {\color{blue}outer and inner objectives}, and an {\color{purple} outer regularization} term. In what follows, we discuss the need for these.
%
% %
% \begin{align}
%     \phi^* &\in \argmax_{\phi\in\Phi}\,\, J(\theta(\phi)), \label{eqn:outoptreward}
%     \\
%     \text{where,} \quad\theta(\phi) &\coloneqq \texttt{Alg}(r_\phi). \label{eqn:inoptreward}
% \end{align}
%
\paragraph{\color{blue}Need for Outer- and Inner-Level Objectives:} 
{\!\!\!\!}
The \textbf{outer-level objective} in Equation~\eqref{eqn:gammaregobj} serves a critical role: it evaluates different parameterizations, denoted by $\phi$, for the behavior alignment reward function. These parameterizations influence the induced policy $\theta(\phi, \varphi)$, which is evaluated using the performance metric $J$. Recall that this metric quantifies the alignment of a policy with the designer's primary reward function, $r_p$. In essence, the outer-level objective seeks to optimize the behavior alignment reward function to produce policies that are effective according to $r_p$. This design adds robustness against any misspecification of the auxiliary rewards.\footnote{``Misspecification'' indicates that an optimal policy for $r_p + r_{\texttt{aux}}$ may not be optimal for $r_p$ alone.} 
In the inner-level optimization, by contrast, $\texttt{Alg}$ identifies a policy $\theta(\phi, \varphi)$ that is optimal or near-optimal with respect to $r_\phi$ (which combines $r_\texttt{aux}$ through the behavior alignment reward). 
In the \textbf{inner-level optimization}, the algorithm $\texttt{Alg}$ works to identify a policy $\theta(\phi, \varphi)$ that is optimal or near-optimal in terms of $r_\phi$ (which incorporates $r_{\texttt{aux}}$ via the behavior alignment reward).
By employing a bi-level optimization structure, several benefits emerge. When $r_{\texttt{aux}}$ is well-crafted, $r_\phi$ can exploit its detailed information to give $\texttt{Alg}$ frequent/dense reward feedback, thus aiding the search for an optimal $\theta^*$. Conversely, if $r_{\texttt{aux}}$ leads to sub-optimal policies, then the influence of auxiliary rewards can be modulated or decreased accordingly by the optimization process by adjusting $r_\phi$.
Consider, for example, a case where the behavior alignment reward function is defined as $r_\phi(s,a) \coloneqq f_{\phi_1}(s,a) + \phi_2 r_p(s,a) + \phi_3 r_{\texttt{aux}}(s,a)$. In an adversarial setting---where the designer-specified auxiliary reward $r_{\texttt{aux}}$ may lead to undesired behavior---the bi-level optimization process has the ability to set $\phi_3$ to $0$. This effectively allows the behavior alignment reward function $r_\phi$ to exclude $r_{\texttt{aux}}$ from consideration. 
Such a bi-level approach to optimizing the parameters of behavior alignment reward functions can act as a safeguard against the emergence of sub-optimal behaviors due to a misaligned auxiliary reward, $r_{\texttt{aux}}$. This design is particularly valuable because it allows the objective in~\eqref{eqn:gammaregobj} to leverage the potentially dense reward structure of $r_{\texttt{aux}}$ to provide frequent action evaluations when the auxiliary reward function is well-specified. At the same time, the approach maintains robustness against possible misalignments.

\paragraph{\color{purple}Need for Outer Regularization:}
{\!\!\!\!}
The bi-level optimization problem \eqref{eqn:gammaregobj} may have multiple optimal solutions for $\phi$---including the trivial solution where $r_\texttt{aux}$ is always ignored.
The goal of regularizing the outer-level objective (in the form of the term $\lambda_\gamma \gamma_\varphi$) is to incorporate a prior that adds a preference for solutions, $\phi^*$, that provide useful and frequent evaluative feedback to the underlying RL algorithm. In the next paragraphs, we discuss the need for such a regularizer and motivate its mathematical form.
First, recall that \textit{sparse} rewards can pose challenges for policy optimization. An intuitive solution to this problem could involve biasing the optimization process towards \textit{denser} behavior alignment reward functions, e.g., by penalizing for sparsity of $r_\phi$.
Unfortunately, the distinction between sparse and dense rewards alone may not fully capture the nuances of what designers typically consider to be a ``good'' reward function.
This is the case because \textit{a reward function can be dense and still may not be informative}; e.g., a reward function that provides $-1$ to the agent in every non-goal state is dense but fails to provide useful feedback regarding how to reach a goal state.
A better characterization of how useful (or informative) a reward function is may be constructed in terms of how \textit{instructive} and \textit{instantaneous} the evaluation or feedback it generates is.
We consider a reward function to be \textit{instructive} if it produces rewards that are well-aligned with the designer's goals. A reward function is \textit{instantaneous} if its corresponding rewards are dense, rather than sparse, and are more readily indicative of the optimal action at any given state.\footnote{E.g., if $r_\phi \approx q^*$, then its corresponding rewards are instantly indicative of the optimal action at any state.} 
Reward functions that are both instructive and instantaneous can alleviate issues associated with settings with sparse rewards and long horizons.
To bias our bi-level optimization objective towards this type of reward function, we introduce a regularizer, $\gamma_\varphi$. This regularizer favors solutions that can generate policies with high performance (i.e., high expected return $J$ with respect to $r_p$) \emph{even when the discount factor $\gamma_\varphi$ is small}.
To see why, first notice that this regularizer encourages behavior alignment reward functions that provide more instantaneous feedback to the agent. This has to be the case; otherwise, it would be challenging to maximize long-term reward should the optimized alignment reward function be sparse.
Second, the regularizer promotes instructive alignment reward functions---i.e., functions that facilitate learning policies that maximize $J$. This is equally crucial: effective policies under the metric $J$ are the ones that align well with the designer's objectives as outlined in the original reward function, $r_p$.

\subsection{Overcoming Imperfections of Policy Optimization Algorithms}\label{subsec:imperfection}
\label{sec:biascorrection}
The advantages of the bi-level formulation in \eqref{eqn:gammaregobj} extend beyond robustness to sub-optimality from misspecified $r_{\texttt{aux}}$.
Even with a well-specified $r_{\texttt{aux}}$, RL algorithms often face design choices, such as the bias-variance trade-off, that can induce sub-optimal solutions.
Below we present examples to show how \textit{bias} in the underlying RL algorithm may be mitigated by carefully optimizing $r_\phi$ and $\gamma_\varphi$.

\paragraph{4.1.1 Bias in policy gradients:} 
\!\!\!\!Recall that the popular ``policy gradient'' $\Delta(\theta, r_p)$ is not, in fact, the gradient of any function, and using it in gradient methods may result in biased and sub-optimal policies~\citep{nota2020policy}.
However, policy gradient methods based on $\Delta(\theta, r_p)$ remain vastly popular in the RL literature since they tend to be sample efficient~\citep{thomas2014bias}.
Let $\Delta_\gamma(\theta, r_p)$ denote the \textit{unbiased} policy gradient, where $\Delta_\gamma(\theta, r_p) \coloneqq \mathbb{E}_{}[\sum_{t=0}^T {\color{red}{\gamma^t}} \psi_\theta(S_t, A_t)\sum_{j=t}^T \gamma^{j-t}r_p(S_j, A_j)]$.
We can show that with a sufficiently expressive parameterization, optimized $r_\phi$ and $\gamma_\varphi$ can effectively mimic the updates that would have resulted from using the \textit{unbiased} gradient $\Delta_\gamma(\theta, r_p)$, even if the underlying RL algorithm uses the \textit{biased} ``gradient'', $\Delta_\text{on}(\theta, \phi, \varphi)$, as defined in~\eqref{eqn:pgparam}. Detailed proofs are in Appendix \ref{apx:sec:proofs}. 
\begin{prop}
There exists $r_\phi: \mathcal S \times \mathcal A \rightarrow \mathbb R$ and $\gamma_\varphi \in [0,1)$ such that $\Delta_\text{\emph{on}}(\theta, \phi, \varphi) = \Delta_\gamma(\theta, r_p)$.
\thlabel{prop:gamma}
\end{prop}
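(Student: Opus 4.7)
The plan is to exhibit a concrete pair $(r_\phi, \gamma_\varphi)$ that makes both gradients coincide for the given $\theta$, exploiting the fact that the proposition only asks for existence over the class $\mathcal S \times \mathcal A \to \mathbb R$. The key move is to pick $\gamma_\varphi = 0$, collapsing the inner discounted sum in $\Delta_{\text{on}}$ to a single-step term, and then absorb the difference between the biased and unbiased gradients into the shape of $r_\phi$.

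First I would use the tower property (conditioning on $(S_t, A_t)$) to rewrite both quantities as occupancy-weighted sums over state-action pairs. For the unbiased gradient, this gives
\[
\Delta_\gamma(\theta, r_p) \;=\; \sum_{s,a} d^{\pi_\theta}_\gamma(s,a)\, \psi_\theta(s,a)\, q^{\pi_\theta}_\gamma(s,a),
\]
where $d^{\pi_\theta}_\gamma(s,a) \coloneqq \sum_{t=0}^T \gamma^t \Pr(S_t\!=\!s, A_t\!=\!a)$ and $q^{\pi_\theta}_\gamma$ is the action-value function of $\pi_\theta$ under $r_p$ and $\gamma$. Analogously, for $\Delta_{\text{on}}$ under any choice of $r_\phi, \gamma_\varphi$,
\[
\Delta_{\text{on}}(\theta, \phi, \varphi) \;=\; \sum_{s,a} \tilde d^{\pi_\theta}(s,a)\, \psi_\theta(s,a)\, q^{\pi_\theta}_\phi(s,a),
\]
where $\tilde d^{\pi_\theta}(s,a) \coloneqq \sum_{t=0}^T \Pr(S_t\!=\!s, A_t\!=\!a)$ is the undiscounted occupancy and $q^{\pi_\theta}_\phi$ is the action-value function associated with $r_\phi$ and $\gamma_\varphi$.

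Next I would set $\gamma_\varphi = 0$, which makes $q^{\pi_\theta}_\phi(s,a) = r_\phi(s,a)$ by construction. Factoring $d^{\pi_\theta}_\gamma(s,a) = \pi_\theta(a|s) d^{\pi_\theta}_\gamma(s)$ and $\tilde d^{\pi_\theta}(s,a) = \pi_\theta(a|s) \tilde d^{\pi_\theta}(s)$, I would then define
\[
r_\phi(s,a) \;\coloneqq\; \frac{d^{\pi_\theta}_\gamma(s)}{\tilde d^{\pi_\theta}(s)}\, q^{\pi_\theta}_\gamma(s,a),
\]
with $r_\phi(s,a) \coloneqq 0$ whenever $\tilde d^{\pi_\theta}(s) = 0$. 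Substituting into the expression for $\Delta_{\text{on}}$ matches the two occupancy-weighted sums term-by-term and yields $\Delta_{\text{on}}(\theta, \phi, \varphi) = \Delta_\gamma(\theta, r_p)$.

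The main subtlety, rather than an obstacle, is the well-definedness of the ratio $d^{\pi_\theta}_\gamma(s)/\tilde d^{\pi_\theta}(s)$: if $s$ contributes to $d^{\pi_\theta}_\gamma$ then some $t$ has $\Pr(S_t\!=\!s) > 0$, which forces $\tilde d^{\pi_\theta}(s) > 0$, so the quotient is finite on the support that matters and the null extension elsewhere does not affect the sum. A second interpretive point is that $r_\phi$ here depends on $\pi_\theta$ through $q^{\pi_\theta}_\gamma$ and the occupancies; this is consistent with the proposition's quantifier (existence for the given $\theta$) and with the paper's "sufficiently expressive parameterization" caveat. As an aside, one could alternatively keep $\gamma_\varphi = \gamma$ and solve the Bellman equation to realize the reweighted $q$-function implicitly, but the $\gamma_\varphi = 0$ construction gives the cleanest witness.
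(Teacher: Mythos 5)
Your proposal is correct and takes essentially the same route as the paper: both reduce each update to an occupancy-weighted sum via the tower property, set $\gamma_\varphi = 0$, and define $r_\phi$ as $q^{\pi_\theta}$ scaled by the ratio of discounted to undiscounted visitation. The only cosmetic difference is that you use the state-occupancy ratio $d^{\pi_\theta}_\gamma(s)/\tilde d^{\pi_\theta}(s)$ where the paper uses the state-action ratio $d^{\pi_\theta}_\gamma(s,a)/\bar d^{\pi_\theta}(s,a)$; since both occupancies factor through $\pi_\theta(a\mid s)$, these coincide.
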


\paragraph{4.1.2 Off-policy learning without importance sampling:}
\!\!\!\!To increase sample efficiency when evaluating a given policy $\pi_\theta$, it is often useful to use off-policy data collected by a different policy, $\beta$.
Under the assumption that $\forall s \in \mathcal S, \forall a \in \mathcal A, \, \frac{\pi_\theta(s,a)}{\beta(s,a)} < \infty$, importance ratios $\rho_j \coloneqq \prod_{k=0}^j \frac{\pi_\theta(s,a)}{\beta(s,a)} $ can be used to adjust the updates and account for the distribution shift between trajectories generated by $\beta$ and $\pi_\theta$.
However, to avoid the high variance stemming from $\rho_j$, many methods tend to drop most of the importance ratios and thus only partially correct for the distribution shift----which can lead to bias~\citep{schulman2017proximal}.
We can show (given a sufficiently expressive parameterization for the behavior alignment reward function) that this type of bias can also be mitigated by carefully optimizing $r_\phi$ and $\gamma_\varphi$.
%
%However, it is worth highlighting that the results above are suggestive of \textit{what is possible using $r_\phi$ and $\gamma_\varphi$, and not what is necessary for any $\texttt{Alg}$.
%
%Ideally, $r_{\phi^*}$ should try to mitigate various kinds of biases simultaneously, while also being regularized by $\gamma_{\varphi^*}$.}

% \dg{Continue from here}
%
Let us denote the unbiased off-policy update with full-distribution correction as $\Delta_{\text{off}}(\theta, r_p) \coloneqq \mathbb {E}_\beta [ \sum_{t=0}^T {\color{red} \gamma^t} \psi_\theta(S_t, A_t) \sum_{j=t}^T {\color{red}\rho_j} \gamma^j r_p(S_j, A_j)]$. Now consider an extreme scenario where off-policy evaluation is attempted \textit{without any correction for distribution shift}. In this situation, and with a slight abuse of notation, we define $\Delta_\text{off}(\theta, \phi, \varphi)  \coloneqq \mathbb{E}_{\beta}[ \sum_{t=0}^T\psi_\theta(S_t, A_t) \sum_{j=t}^T \gamma_\varphi^{j-t} r_\phi(S_j, A_j)]$.
\begin{prop}
There exists $r_\phi: \mathcal S \times \mathcal A \rightarrow \mathbb R$ and $\gamma_\varphi \in [0,1)$ such that $\Delta_\text{\emph{off}}(\theta, \phi, \varphi) = \Delta_{\text{\emph{off}}}(\theta, r_p)$.
\thlabel{prop:off}
\end{prop}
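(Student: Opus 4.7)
The plan is to mirror the proof of Proposition 2, leveraging the arbitrariness of $r_\phi$ as a function of $(s,a)$: pick a convenient $\gamma_\varphi$ that trivializes the biased estimator's inner sum, then choose $r_\phi$ to compensate once the unbiased off-policy target has been rewritten as a $\beta$-weighted sum over state-actions.

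I would set $\gamma_\varphi = 0$ (with the convention $0^0 \coloneqq 1$), collapsing the inner sum to a single term and giving
\begin{equation}
\Delta_{\text{off}}(\theta, \phi, \varphi) = \mathbb{E}_\beta\Bigl[\sum_{t=0}^T \psi_\theta(S_t, A_t)\, r_\phi(S_t, A_t)\Bigr] = \sum_{s, a} \mu^\beta(s, a)\, \psi_\theta(s, a)\, r_\phi(s, a),
\end{equation}
where $\mu^\beta(s, a) \coloneqq \beta(a \mid s)\sum_{t=0}^T d^\beta_t(s)$ is the undiscounted state-action occupation of $\beta$. Next, I would cast $\Delta_{\text{off}}(\theta, r_p)$ in the same bilinear form. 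Using the importance-sampling identity $\mathbb{E}_\beta[\rho_j\, h(S_{0:j}, A_{0:j})] = \mathbb{E}_{\pi_\theta}[h(S_{0:j}, A_{0:j})]$ (valid because $\rho_j$ reweights the full joint distribution up to step $j$) and invoking the Markov property to reduce the conditional expectation of the future reward sum given $(S_t, A_t)$ into $q^{\pi_\theta}(S_t, A_t)$, one obtains $\Delta_{\text{off}}(\theta, r_p) = \sum_{s,a} w(s, a)\, \psi_\theta(s, a)$, where the weights $w(s, a)$ are explicit functions of $(s, a)$ built from $\pi_\theta$, the dynamics, the discount $\gamma$, and $q^{\pi_\theta}$. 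The assumption $\pi_\theta(a\mid s)/\beta(a\mid s) < \infty$ ensures that every $(s, a)$ reachable by $\pi_\theta$ is also reachable by $\beta$, so $\operatorname{supp}(w) \subseteq \operatorname{supp}(\mu^\beta)$.

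Defining $r_\phi(s, a) \coloneqq w(s, a)/\mu^\beta(s, a)$ on $\operatorname{supp}(\mu^\beta)$ (and arbitrarily elsewhere, since such $(s, a)$ contribute zero to the biased estimator) then yields $\Delta_{\text{off}}(\theta, \phi, \varphi) = \Delta_{\text{off}}(\theta, r_p)$, completing the existence proof. The main obstacle is the importance-sampling bookkeeping: because $\rho_j$ depends on the entire history $S_0, A_0, \ldots, S_j, A_j$, the IS identity must be applied jointly over all these steps rather than locally at step $j$, and one must carefully interchange sums and expectations and verify support coverage. Once this machinery is in place, the remaining manipulations are purely algebraic, and the same device---absorbing all time-dependent and history-dependent corrections into a single state-action function via a change of occupation measure---closes the argument.
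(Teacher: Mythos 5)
Your proposal is correct and follows essentially the same route as the paper's proof: set $\gamma_\varphi = 0$, use the full-trajectory importance-sampling identity and the tower property to rewrite $\Delta_{\text{off}}(\theta, r_p)$ as $\sum_{s,a} q^{\pi_\theta}(s,a)\, d^{\pi_\theta}_\gamma(s,a)\, \psi_\theta(s,a)$, and then absorb the ratio of this weight to the undiscounted occupancy $\bar d^\beta(s,a)$ into $r_\phi$. Your explicit handling of the support condition (defining $r_\phi$ arbitrarily off $\operatorname{supp}(\mu^\beta)$) is slightly more careful than the paper's, but the argument is the same.
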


\begin{rem}
% Our method does not require any specialized learning rules to correct potential biases or imperfections in the underlying RL algorithms. Instead, it is capable of optimizing $r_\phi$ and $\gamma_\varphi$ so that they mitigate various types of algorithmic biases while also (due to the $\gamma_{\varphi^*}$ regularization) favoring reward functions that lead to faster learning of high-performing policies aligned with the designer's intended goals.
Our method is capable of mitigating various types of algorithmic biases and imperfections in underlying RL algorithms, without requiring any specialized learning rules. Additionally, thanks to the $\gamma_{\varphi^*}$ regularization, it favors reward functions that lead to faster learning of high-performing policies aligned with the designer's objectives, as outlined in the original reward function $r_p$.

\end{rem}

% Finally, notice that as with any optimization problem, issues of realizability and identifiability of the desired $r_\phi$ must be taken into account. 
% %
% The examples provided in this section aim to highlight the  capability of optimized behavior alignment reward functions. In particular, they not only improve and accelerate the learning process but are also capable of inducing updates capable of `fixing' imperfections in the underlying RL algorithm.

%
%\begin{rem}
%\end{rem}
%

\section{\texttt{BARFI}: Implicitly Learning Behavior Alignment Rewards}
\label{sec:learning}
%
% \dg{Make it to one page, remove equatioun4 }
Having introduced our bi-level objective and discussed the benefits of optimizing $r_\phi$ and $\gamma_\varphi$, an important question arises: Although $\theta(\phi,\varphi)$ can be optimized using any policy learning algorithm, how can we efficiently identify the optimal $\phi^*$ and $\varphi^*$ in equation~\eqref{eqn:gammaregobj}?
%
% this poses two challenges: \textbf{(a)} the outer objective $J(\theta(\phi, \varphi))$ can be severely ill-conditioned, and \textbf{(b)}
Given the practical advantages of gradient-based methods, one would naturally consider using them for optimizing $\phi^*$ and $\varphi^*$ as well. 
However, a key challenge in our setting lies in computing $\mathrm d  J(\theta(\phi, \varphi))/\mathrm d  \phi$ and $\mathrm d  J(\theta(\phi, \varphi))/\mathrm d  \varphi$. These computations require an analytical characterization of the impact that $r_\phi$ and $\gamma_\varphi$ have on the \textit{entire optimization process} of the inner-level algorithm, \texttt{Alg}.

%
% In the following, we present two techniques to resolve them.
%
% Moving on towards how to actually learn our functions $r_\phi$ and $\gamma_\varphi$; we 
% Lets take a look how we can compute expressions for $\partial J(\theta(\phi, \varphi))/\partial \phi$ and $\partial J(\theta(\phi, \varphi))/\partial \varphi$.

In addressing this challenge, we initially focus on an $\texttt{Alg}$ that employs policy gradients for updating $\pi_\theta$. Similar extensions for other update rules can be derived similarly.
% 
% Guided by the bias-correction ability of $r_\phi$ and $\gamma_\varphi$ as discussed in Section \ref{sec:biascorrection}, and to be sample-efficient by using off-policy data while avoiding variance from importance sampling.
% \yash{I don't think we need to redefine $\Delta_{off}$ or mention the equation below as we are not discussing regularization in main body and we are doing L2 regularization anyway.}
% we use the following variant of $\Delta_\text{off}(\theta, \phi, \varphi)$ that incorporates the gradient of the regularizer,
% %
% {\small
% \begin{align}
%     \Delta_\text{off}(\theta, \phi, \varphi) \coloneqq \mathbb{E}_{\mathcal D}\left[ \sum_{t=0}^T \psi_{\theta}(S_t, A_t) \sum_{j=t}^T \gamma_\varphi^{j-t} r_\phi(S_j, A_j) \right] - \lambda \theta, \label{eqn:delta1}
% \end{align}
% }
% %
% %
% where $\mathcal D$ is the off-policy data and $\lambda$ is an L2 coefficient. 
%
We start by re-writing the expression for $\mathrm d  J(\theta(\phi, \varphi))/\mathrm d  \phi$ using the chain rule: 
%
% First, observe using the chain rule that
%
{\small
\begin{align}
    \frac{\mathrm d  J(\theta(\phi, \varphi))}{\mathrm d  \phi} &=     \underbrace{\frac{\mathrm d  J(\theta(\phi, \varphi))}{\mathrm d  \theta(\phi, \varphi)}}_{(a)} \underbrace{\frac{\mathrm d  \theta(\phi, \varphi)}{\mathrm d  \phi}}_{(b)}, \label{eqn:deltachain1}
\end{align}
}
\!\!\noindent\ignorespacesafterend where \textit{\textbf{(a)}} is the policy gradient at $\theta(\phi, \varphi)$, and \textit{\textbf{(b)}} can be computed via implicit bi-level optimization, as 
% or via path-wise bi-level optimization. W
discussed below.
%
% We can compute (b) using two possible approaches. 
% term (b)  we have two possible approaches which lead to different methods to compute (b). 

% \dhawal{can look for more citations} \yash{I moved a lot of implicit grads citation to the appendix where the derivation is. Can pick up from there.} (and several applications).
{\bf Implicit Bi-Level Optimization: } We compute~\eqref{eqn:deltachain1} by leveraging implicit gradients~\cite{dontchev2009implicit,krantz2012implicit,gould2016differentiating}, an approach previously employed, e.g., in few-shot learning~\cite{lee2019meta, rajeswaran2019meta} and model-based RL algorithms~\cite{rajeswaran2020game}.
First, observe that when $\texttt{Alg}$ converges to $\theta(\phi, \varphi)$, then it follows that 
\begin{align}
    \Delta(\theta(\phi, \varphi), \phi, \varphi) = 0. \label{eqn:fixedpointcriteria}
\end{align}
% Taking the derivative of the above point of convergence w.r.t.~to $\phi$ we get the following update for $\phi$,
Let $\partial f$ denote the partial derivative with respect to the immediate arguments of $f$, and $\mathrm d f$ be the total derivative as before. That is, if $f(x, g(x))\!\coloneqq\!x g(x)$,  
then $\frac{\partial f}{\partial x}(x, g(x)) = g(x)$ and $\frac{\mathrm d f}{\mathrm d x}(x, g(x)) = g(x) + x \frac{\partial g}{\partial x}(x)$.
Therefore, taking the total derivative of \eqref{eqn:fixedpointcriteria} with respect to $\phi$ yields
% \todo{Scott had mentioned some issues here, discuss. }
% \yash{Move parts of the derivation to the appendix}
%
{\footnotesize
\begin{align}
 \frac{\mathrm d  \Delta(\theta(\phi, \varphi), \phi, \varphi)}{\mathrm d  \phi} =  \frac{\partial \Delta(\theta(\phi, \varphi), \phi, \varphi)}{\partial \phi}    +  \frac{\partial \Delta(\theta(\phi, \varphi), \phi, \varphi)}{\partial \theta(\phi, \varphi)} \frac{\partial \theta(\phi, \varphi)}{\partial \phi} = 0. \label{eqn:deltatotald1_2}
\end{align}
}
By re-arranging terms in \eqref{eqn:deltatotald1_2}, we obtain the term \textit{\textbf{(b)}} in \eqref{eqn:deltachain1}. In particular, 
{\footnotesize
\begin{align}
    \frac{\partial \theta(\phi, \varphi)}{\partial \phi} &= - \left(  \frac{\partial \Delta(\theta(\phi, \varphi), \phi, \varphi)}{\partial \theta(\phi, \varphi)}  \right)^{-1}    \frac{\partial \Delta(\theta(\phi, \varphi), \phi, \varphi)}{\partial \phi}.  \label{eqn:deltainv1_2}
\end{align}
}
Furthermore, by combining~\eqref{eqn:deltainv1_2} and~\eqref{eqn:deltachain1} 
 we obtain the desired gradient expression for $\phi$: 
{\footnotesize \begin{align}
    \frac{\partial J(\theta(\phi, \varphi))}{\partial \phi} &=     - 
    % \underbrace{\frac{\partial J(\theta(\phi, \varphi))}{\partial \theta(\phi, \varphi)}}_{\bm c}
    \frac{\partial J(\theta(\phi, \varphi))}{\partial \theta(\phi, \varphi)}
    \Bigg(  \underbrace{\frac{\partial \Delta(\theta(\phi, \varphi), \phi, \varphi)}{\partial \theta(\phi, \varphi)} }_{\bm H} \Bigg)^{-1} \underbrace{\frac{\partial \Delta(\theta(\phi, \varphi), \phi, \varphi)}{\partial \phi}}_{\bm A}. \label{eqn:phiupdate}  
\end{align}}
% \yash{Here we assume that the inverse exists. In the appendix, we discuss how to regularize the inner optimization to mitigate any problems due to potential ill-conditioning. }
%
% more detailed derivation for the above updates is provided in Appendix \ref{sec:appendix_barfi_derivation}.
%
\!\noindent\ignorespacesafterend 

Similarly, a gradient expression for $\varphi$ can be derived; the full derivation is detailed in Appendix~\ref{sec:appendix_barfi_derivation}. 
% it can be observed that the gradient expression for $\varphi$,
% %
% \dhawal{Move this $\gamma$ equation to appendix, and can also remove the term for B now., }
% {\small
% \begin{align}
%     \frac{\partial \big(J(\theta(\phi, \varphi)) - \gamma_\varphi \big) }{\partial \varphi} &=     - 
%     % \underbrace{\frac{\partial J(\theta(\phi, \varphi))}{\partial \theta(\phi, \varphi)} }_{\bm c}
%     \frac{\partial J(\theta(\phi, \varphi))}{\partial \theta(\phi, \varphi)}
%     \Bigg( \underbrace{\frac{\partial \Delta(\theta(\phi, \varphi), \phi, \varphi)}{\partial \theta(\phi, \varphi)}}_{\bm H}  \Bigg)^{-1}    \underbrace{\frac{\partial \Delta(\theta(\phi, \varphi), \phi, \varphi)}{\partial \varphi}}_{\bm B} - \frac{\partial \gamma_\varphi}{\partial \varphi}.  
%     \label{eqn:varphiupdate}
% \end{align}
Using $\theta^*$ as shorthand for $\theta(\phi, \varphi)$, we find that the terms $\bm A$ and $\bm H$ can be expressed as
%
% \tiny
% \scriptsize
% \footnotesize
% \small
% \normalsize
% \large
% \Large (the "L" is uppercase)
% \LARGE (all letters are uppercase)
% \huge
% \Huge (the "H" is uppercase)
{
\scriptsize
\begin{align}
    % \bm c = \mathbb{E}_{\pi_{\theta^*}}\left[ \sum_{t=0}^T \gamma^t \psi_{\theta^*}(S_t, A_t)
    % \left(\sum_{j=t}^T  \gamma^{j-t} r(S_j, A_j)  \right) \right],
    % \\
    \bm A = \mathbb{E}_{\mathcal D}\left[ \sum_{t=0}^T \psi_{\theta^*}(S_t, A_t)
    \left(\sum_{j=t}^T \gamma_\varphi^{j-t} \frac{\partial r_\phi(S_j, A_j)}{\partial \phi}\right)^\top \right]\!\!,\,\,
    % \label{eqn:deltaA}
    % \\
    % \bm B = \mathbb{E}_{\mathcal D}\left[ \sum_{t=0}^T \psi_{\theta^*}(S_t, A_t)
    % \left(\sum_{j=t}^T \frac{\partial \gamma_\varphi^{j-t}}{\partial \varphi}  r_\phi(S_j, A_j) \right) \right], 
    % %\label{eqn:deltaB}
    % \\
    \bm H = \mathbb{E}_{\mathcal D}\left[ \sum_{t=0}^T \frac{\partial \psi_{\theta^*}(S_t, A_t)}{\partial \theta^*} \left(\sum_{j=t}^T  \gamma_\varphi^{j-t}  r_\phi(S_j, A_j)\right)\right]. %\hspace{100pt}
    \label{eqn:deltac}
\end{align}
}

When working with the equations above, we assume the inverse of $\bm  H^{-1}$ exists. To mitigate the risk of ill-conditioning, we discuss regularization strategies for $\texttt{Alg}$ in Appendix \ref{sec:smooth}.
% the inner optimization to mitigate problems due to potential ill-condition.
%
% \dhawal{Talk about Computational Tractability - with itemized, like A, H, and talk about the last starting }
% One may notice that in order to approximate $\partial J(\theta(\phi, \varphi))/\partial \phi$ we have to approximate a Jacobian (i.e. $\bm A$ ) as well as inverse of a Hessian (i.e. $\bm H^{-1}$) which can be computationally expensive for large function approximations.
Notice that equations \eqref{eqn:phiupdate} and \eqref{eqn:varphiupdate} are the key elements needed to calculate the updates to $\phi$ and $\varphi$ in our bi-level optimization's outer loop.
However, computing $\bm A$ and $\bm H$ directly can be impractical for high-dimensional problems due to the need for outer products and second derivatives. To address this, we employ two strategies: \textit{(1)} We approximate $\bm H^{-1}$ using the Neumann series~\citep{lorraine2020optimizing}, and \textit{(2)} we calculate~\eqref{eqn:phiupdate} and~\eqref{eqn:varphiupdate} via Hessian-vector products~\citep{pearlmutter1994fast}, which are readily available in modern auto-diff libraries~\citep{paszke2019pytorch}. These methods eliminate the need for explicit storage or computation of $\bm H$ or $\bm A$.
%\end{enumerate}

The mathematical approach outlined above results in an algorithm with linear compute and memory footprint, having $O(d)$ complexity, where $d$ is the number of parameters for both the policy and the reward function. Details can be found in Appendix~\ref{apx:algorithm}. 
We refer to our method as $\texttt{BARFI}$, an acronym for \textit{\underline{b}ehavior \underline{a}lignment \underline{r}eward \underline{f}unction's \underline{i}mplicit} optimization.\footnote{``\texttt{BARFI}'' commonly refers to a type of south-Asian sweet confectionery, typically pronounced as `bur-fee'.} 
$\texttt{BARFI}$ is designed to iteratively solve the bi-level optimization problem defined in~\eqref{eqn:gammaregobj}. With policy regularization, the updates to $r_\phi$ and $\gamma_\varphi$ incrementally modify $\theta(\phi,\varphi)$. This enables us to initialize $\texttt{Alg}$ using the fixed point achieved in the previous inner optimization step, further reducing the time for subsequent inner optimizations.

% \ycinline{Can remove these two paragraphs}

% \dhawal{it can have drawbacks in terms of compute and memot hwchi we demonstrate and discuss in the empirical section. }

% {\bf Limitation}: An important limitation to point out about these methods, is the requirement of reaching $\theta(\phi, \varphi)$ in actuality can take an infinite amount of time (as running $\texttt{Alg}$ to convergence can take infinite time), hence in our methods described above we actually run $\texttt{Alg}$ only for a fixed number of update steps followed by reward alignment, followed by policy optimization, so and so forth. \citet{rajeswaran2019meta} had shown the effect of this approximation. Also because of small changes in the values of $\phi$ and $\varphi$, we can always initialize the solution of the next set of policy parameters with the final parameters found at the fixed point of previous values of $(\phi, \varphi)$. The detailed algorithm is provided in the Appendix as Algorithm \ref{apx:Alg:1}.

\section{Empirical Analyses}
 \label{sec:empirics}

Our experiments serve multiple purposes and include detailed ablation studies. First, we demonstrate our bi-level objective's efficacy in discovering behavior alignment reward functions that facilitate learning high-performing policies. We focus especially on its robustness in situations where designers provide poorly specified or misaligned auxiliary rewards that could disrupt the learning process (Section~\ref{sec:robustness}). Second, we present a detailed analysis of the limitations of potential-based reward shaping, showing how it can lead to suboptimal policies (Section \ref{sec:pitfall_pbrs}). We then provide a qualitative illustration of the behavior alignment reward function learned by $\texttt{BARFI}$ (Section \ref{subsec:visualization}).  Finally, we evaluate how well $\texttt{BARFI}$ scales to problems with high-dimensional, continuous action and state spaces (Section \ref{sec:scalability}).

% \ycinline{This is the same as the first paragraph of this section. I recommend merging the details from here into that para and removing this para.} In the next sections, we first empirically demonstrate that \texttt{BARFI}'s performance is consistently robust across various types of auxiliary reward functions---including significantly misaligned reward functions provided by designers (Section \ref{sec:robustness}). We then present  the pitfalls and limitations of using potential-based reward shaping, and show that they may lead agents to learn suboptimal policies (Section \ref{sec:pitfall_pbrs}).
% % We then investigate whether \texttt{BARFI}'s performance is consistently robust across various types of auxiliary reward functions---including significantly misaligned reward functions provided by designers (Section \ref{sec:robustness}). 
% Finally, we illustrate the scalability of our method when deployed in high-dimensional continuous action and state spaces (Section \ref{sec:scalability}).

In the sections that follow, we examine a range of methods and reward combinations for comparison:
% \begin{itemize}[leftmargin=*]
\begin{itemize}[leftmargin=*, itemsep=3pt, parsep=0pt] %, after=\vspace{-\baselineskip}]
    \item \textit{Baseline RL methods}: We consider baseline RL methods that employ a naive reward combination strategy: $\tilde r_{\text{naive}}(s,a) \coloneqq r_p(s,a) + r_{\texttt{aux}}(s,a)$. In this case, the auxiliary reward from the designer is simply added to the original reward without checks for alignment. Both the REINFORCE and Actor-Critic algorithms are used for optimization.
    %\footnote{In REINFORCE, we drop the $\gamma^t$ term and hence make use of \eqref{eqn:PG}.} 
     
    % implementing \reinforce\footnote{We drop the $\gamma^t$ term and hence use essentially \eqref{eqn:PG}} and \ac.
    \item \textit{Potential-based shaping}: To assess how well potential-based reward shaping performs, we introduce variants of the baseline methods. Specifically, we investigate the effectiveness of the reward function $\tilde r_{\Phi} (s,a,s') \coloneqq r_p(s,a) + \gamma r_{\texttt{aux}}(s') - r_{\texttt{aux}}(s)$. 
 
    % In cases where $r_{\texttt{aux}}$ is action dependent, we use a direct extension $\tilde r_{\Phi} (s,a,s') \coloneqq r_p(s,a) + \gamma r_{\texttt{aux}}(s',a') - r_{\texttt{aux}}(s,a')$.\todo{Dhawal: Confirm that this what is done in practice}
    \item $\texttt{BARFI}$: We use REINFORCE as the underlying RL algorithm when implementing $\texttt{BARFI}$ and define $r_\phi(s,a) \coloneqq \phi_1(s, a) + \phi_2(s) r_p(s, a) + \phi_3 (s) r_{\texttt{aux}}(s, a)$. Our implementation includes a warm-up period wherein the agent collects data for a fixed number of episodes, using $\tilde r_{\text{naive}}$, prior to performing the first updates to $\phi$ and $\varphi$ (See Appendix \ref{apx:Alg:1} for the complete algorithm). 
\end{itemize}

We evaluate each algorithm across four distinct environments: GridWorld, MountainCar~\cite{sutton2018reinforcement}, CartPole~\citep{florian2007cartpole}, and HalfCheetah-\texttt{v4}~\citep{brockman2016openai}. These domains offer increasing levels of complexity and are intended to assess the algorithms' adaptability. Furthermore, we examine their performance under a variety of auxiliary reward functions, ranging from well-aligned to misaligned with respect to the designer's intended objectives.

In our experiments, we investigate different types of auxiliary reward functions for each environment: some are action-dependent, while others are designed to reward actions aligned with either effective or ineffective known policies. These functions, therefore, vary in their potential to either foster rapid learning or inadvertently mislead the agent away from the designer's primary objectives, hindering the efficiency of the learning process. Comprehensive details of each environment and their corresponding auxiliary reward functions can be found in Appendix \ref{apx:environments}.

% Old paragraph. 
% In the next sections, we first empirically demonstrate the pitfalls and limitations of using potential-based reward shaping, and show that they may lead agents to learn suboptimal policies (Section \ref{sec:pitfall_pbrs}). We then investigate whether \texttt{BARFI}'s performance is consistently robust across various types of auxiliary reward functions---including significantly misaligned reward functions provided by designers (Section \ref{sec:robustness}). Finally, we illustrate the scalability of our method when deployed in high-dimensional continuous action and state spaces (Section \ref{sec:scalability}).

\begin{table*}[!h]
  % \centering
  \begin{minipage}{0.95\textwidth}
        \caption{Summary of the performance of various reward combination methods and types of $r_\texttt{aux}$}\label{tab:cartpole_mountaincar}
    {\tiny
    \centering
    \resizebox{\linewidth}{!}{
    \setlength{\tabcolsep}{3pt}
      \begin{tabular}{lcccc}
        \toprule
        \multirow{2}{*}{Method for Reward Combination } & \multicolumn{2}{c}{CartPole} & \multicolumn{2}{c}{MountainCar} \\
        \cmidrule(lr){2-3} \cmidrule(lr){4-5}
        & Well-aligned $r_\texttt{aux}$ & Misaligned $r_\texttt{aux}$ & Well-aligned $r_\texttt{aux}$ & Partially-aligned $r_\texttt{aux}$ \\
        &  &  & (w.r.t. \textit{energy policy}) & (w.r.t. \textit{high velocity policy})\\
        \midrule
        $\texttt{BARFI}$ (\textit{our method}) & ${\color{black}487.2 \pm 9.4}$ & ${\color{black}475.5 \pm 15.5}$ & ${\color{black}0.99 \pm 0.0}$ & ${\color{black}0.90 \pm 0.1}$ \\
        $\tilde r_{\text{naive}}$ (\textit{naive reward combination})   & \color{black} ${498.9 \pm 1.0}$ & \color{red}$9.04 \pm 0.2$ & ${\color{black}0.99 \pm 0.0}$ & \color{red}$0.63 \pm 0.1$ \\
        $\tilde r_{\Phi}$ (\textit{potential-based shaping}) & \color{red}$8.98\phantom{0} \pm 0.2$ & \color{black}${500\pm 0.0}$ & \color{red}$0.00 \pm 0.0$ & \color{red}$0.00 \pm 0.0$  \\
        \bottomrule
      \end{tabular}
       }
      }
      \end{minipage}
      \vspace{3pt}
    \noindent\\
    {{$\texttt{BARFI}$}}'s performance compared to  two baselines that use: $\tilde r_{\text{naive}}$ and $\tilde r_{\Phi}$, respectively. CartPole uses an action-dependent $r_\text{aux}$ function that either rewards agents when actions align with a known effective policy ({{\textit{well-aligned $r_\texttt{aux}$}}}) or with a poorly-performing policy ({{\textit{misaligned $r_\texttt{aux}$}}}). MountainCar uses either an action-dependent function aligned with an energy-pumping policy~\cite{ghiassian2020improving} or a partially-aligned function incentivizing higher velocities. 
    {{$\texttt{BARFI}$}} consistently achieves near-optimal performance across scenarios, even if given poorly specified/misaligned auxiliary rewards. Competitors, by contrast, often induce suboptimal policies. Performances significantly below the optimal are shown in red, above. 
\end{table*}
\subsection{\texttt{BARFI}'s Robustness to Misaligned Auxiliary Reward Functions}
\label{sec:robustness}

% \begin{figure}[!tb]
% \begin{minipage}[c]{0.65\textwidth}
%     \centering
%     \includegraphics[width=\textwidth]{}
%   \end{minipage}
%   \hfill
%    \begin{minipage}[c]{0.30\textwidth}
%     \caption{\footnotesize
%     \textbf{(Left)} GridWorld with a misspecified reward that provides bonuses for visiting particular irrelevant states. \textbf{(Right)} CartPole environment with auxiliary reward providing positive feedback for mimicking a known poor-performing policy.
%     }
%     \label{fig:results_heurisitic}
%   \end{minipage}
%   \vspace{-10pt}
% \end{figure} 
% CHAT GPT

In this section, we evaluate the performance of various methods for reward combination, particularly in scenarios where auxiliary reward functions can either be well-aligned with a designer's intended goals or be misaligned or poorly specified, thus inadvertently hindering efficient learning.
We introduce two types of auxiliary reward functions for CartPole. First, we used domain knowledge to design an $r_\texttt{aux}$ that provides bonuses when the agent's actions align with a known effective policy in this domain.
Second, we designed an adversarial example where the auxiliary reward function rewards actions that are consistent with a particularly poorly performing policy.
For MountainCar, we first leveraged knowledge about an \emph{energy pumping policy} (i.e., a well-known effective policy~\cite{ghiassian2020improving}) to craft an auxiliary reward function that provides bonuses for actions in line with such a control strategy. We also experimented with a partially-aligned auxiliary function that rewards high velocity---a factor not particularly indicative of high performance.

Table~\ref{tab:cartpole_mountaincar} summarizes the results across different reward functions and combination methods. The results suggest that if auxiliary rewards provide positive feedback when agents' actions align with effective policies, naive combination methods perform well. In such cases, auxiliary rewards effectively ``nudge'' agents towards emulating expert actions. However, our experimental results also indicate that \textit{all} baseline methods are susceptible to poor performance when auxiliary rewards are not well aligned with the designer's goals. 
% Finally, contrary to our expectations,
% Notably, potential-based shaping methods do not perform well even though rewards are \emph{misaligned}. We investigate and 
We provide more discussion for potential-based shaping in Section \ref{sec:pitfall_pbrs}.

% \texttt{BARFI}, on the other hand can, can effectively use the information provided via auxiliary reward functions to facilitate rapid policy learning and improvement. 
% Furthermore, it is noteworthy that \texttt{BARFI} can effectively adapt to adversarially-constructed, misaligned auxiliary reward functions. Specifically, its behavior alignment reward system can preclude such poorly-specified heuristic rewards from hindering learning, preventing agents from being led toward unintended behaviors. 
% \st{This reinforces the challenge of crafting auxiliary rewards that can seamlessly integrate with the primary reward to enhance learning and maintain the desired behavior.} \bcs{$\leftarrow$ I rewrote this entire paragraph and I think it flows well, but I'm not sure about how to ``fix'' the last sentence (the one strikethrough). It seems like it doesn't follow logically from the previous one. It's like "BARFI can ignore bad heuristics. Therefore, we can conclude that designing auxiliary rewards is hard". It seems unconnected and possibly unnecessary as a way of wrapping up this paragraph} 
% \dhawal{Remove the details about how well barfi is performing form the above paragraph. }

\vspace{5pt}
\note{
% \bcs{[\textbf{I suggest adding this box here, to emphasize the takeaway messages readers should get from Table1:]} 
The key takeaway from the experimental results in Table~\ref{tab:cartpole_mountaincar} is that $\texttt{BARFI}$ consistently performs well across various domains and under different types of auxiliary rewards. Specifically, when designer-specified feedback is appropriate and can assist in accelerating learning, $\texttt{BARFI}$ efficiently exploits it to produce high-performing policies. Conversely, if auxiliary rewards are misaligned with the designer's intended goals, $\texttt{BARFI}$ is capable of adapting and effectively dismissing ``misleading rewards''. This adaptability ensures that high-performing policies can be reliably identified. Other methods, by contrast, succeed only in some of these scenarios. Importantly, the unpredictability of whether a given auxiliary reward function will aid or hinder learning makes such alternative methods less reliable, as they may fail to learn effective policies.}
% }

% Figure \ref{fig:results_heurisitic} provides learning curves for a misspecified reward function in the case of GridWorld as well as the bad policy reward for CartPole. 
% \begin{wrapfigure}[15]{r}{0.4\textwidth}
%     \centering
%     \includegraphics[width=0.40\textwidth]{}
%     \caption{\footnotesize
%     (\textbf{Left}) Misaligned auxiliary reward (+50) for entering the center state, and primary reward (+100) at the goal state. (\textbf{Right}) illustrates learned weight $\phi_3(s)$, wherein the reward function actively penalizes the agent when entering the center state.}
%     \label{fig:learned_reward}
% \end{wrapfigure}

%(Figure \ref{fig:results_heurisitic} left) 

\subsection{Pitfalls of potential-based reward shaping}
\label{sec:pitfall_pbrs}
We now turn our attention to the (possibly negative) influence of action-dependent auxiliary rewards, particularly when used in combination with potential-based reward shaping. Our results in Table~\ref{tab:cartpole_mountaincar} reveal a key limitation: potential-based shaping struggles to learn efficient policies even when auxiliary rewards are well-aligned with effective strategies. This shortcoming is attributable to the action-dependent nature of the auxiliary rewards, which compromises the potential shaping technique's guarantee of policy optimality. 

% We now explore the implications of combining potential-based reward shaping with action-dependent reward functions and investigate whether this may adversely impact learning. Recall that in Table~\ref{tab:cartpole_mountaincar}, both auxiliary reward functions $r_\texttt{aux}$ for CartPole are action dependent, as well as the well-aligned auxiliary reward for MountainCar. 
% In Table \ref{tab:cartpole_mountaincar}, we present a summary of results across different methods and various types of action-dependent auxiliary reward functions. 
%
% A key finding from the results in Table~\ref{tab:cartpole_mountaincar} is that potential-based reward shaping ($\tilde r_{\Phi}$) struggles to learn efficient policies even when provided with auxiliary reward functions, $r_\texttt{aux}$, \textit{that are well-aligned with known good policies}. This is due to the action-dependent nature of $r_{\texttt{aux}}$, which invalidates potential-based shaping guarantees of preserving optimal policies. 

% For completeness, it is worth discussing how we employ reward shaping for action-dependent $r_\text{aux}$. 
As there is no prescribed way for designing potential shaping when $r_\text{aux}$ is action-dependent, we use a direct extension of the original formulation \citep{ng1999policy} by considering $\tilde r_{\Phi} (s, a,s', a'):= r_p(s,a) + \gamma r_{\texttt{aux}}(s', a') - r_{\texttt{aux}}(s, a)$. 
% To elaborate, we define our shaping function as $\tilde r_{\Phi} (s, a,s', a'):= \gamma r_{\texttt{aux}}(s', a') - r_{\texttt{aux}}(s, a)$. 
%
Furthermore, we designed an auxiliary reward function,  $r_{\texttt{aux}}$, that is well aligned: it provides positive reward signals of fixed magnitude both for $(s,a)$ and ($s',a')$ whenever the agent's actions coincide with the optimal policy. 
Notice, however, that if $\gamma < 1$, the resultant value from $\gamma r_{\texttt{aux}}(s',a') - r_{\texttt{aux}}(s,a)$ is \textit{negative}. Such a negative component may deter the agent from selecting actions that are otherwise optimal, depending on how $r_{\texttt{aux}}$ and $r_p$ differ in magnitude.
Conversely, potential-based shaping can also occasionally perform well under misaligned rewards. In these cases, the shaping function may yield a \emph{positive} value whenever the agent selects an optimal action, which could induce well-performing behaviors.

\subsection{What does $r_\phi$ learn? }\label{subsec:visualization}
\begin{figure}
\centering
\includegraphics[width=\textwidth]{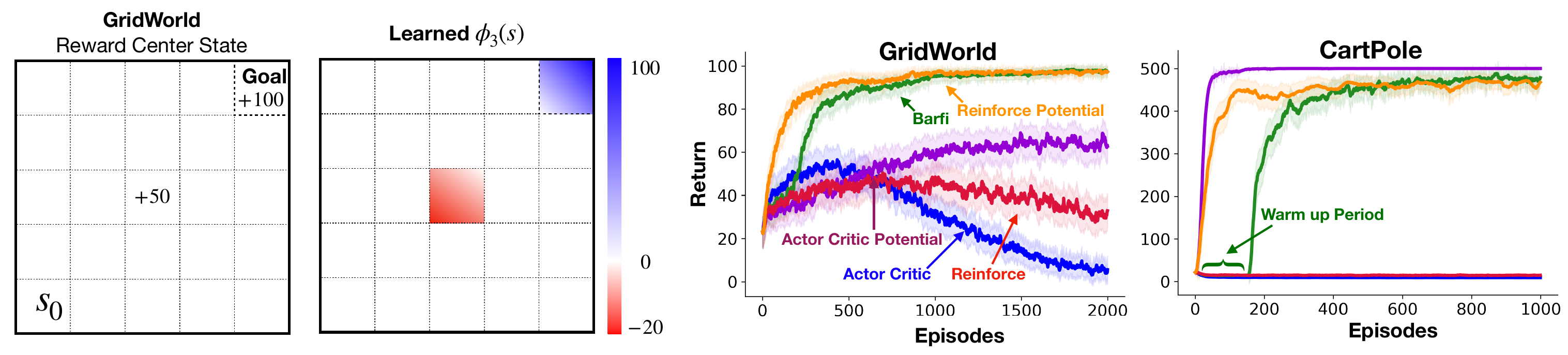}
\caption{\footnotesize  (\textbf{Left}) Misaligned $r_\texttt{aux}$ (+50) for entering the center state and $r_p$ (+100) at the goal state. (\textbf{Center~Left}) Optimized weight $\phi_3(s)$ depicting how the learned reward function penalizes the agent for entering the center state. \textbf{(Center Right)} Performances in the GridWorld domain under a misspecified $r_\texttt{aux}$. \textbf{(Right)} Performances in the CartPole domain under a $r_\texttt{aux}$ providing positive feedback for mimicking a known ineffective policy.}
\label{fig:result_gridworld}
\end{figure} 
We now investigate $\texttt{BARFI}$'s performance and robustness in the GridWorld when operating under misspecified auxiliary reward functions. Consider the reward function depicted in Figure 
\ref{fig:result_gridworld} [left]. This reward function provides the agent with a bonus for visiting the state at the center, akin to providing intermediate feedback to the agent when it makes progress towards the goal.
However, such intermediate positive feedback can lead to behaviors where the agent repeatedly cycles around the middle state (i.e., behaviors that are misaligned with the original objective of reaching the goal state at the top right corner of the grid). Importantly, $\texttt{BARFI}$ is capable of autonomously realizing that it should disregard such misleading incentives (Figure~\ref{fig:result_gridworld} [center left]), thereby avoiding poorly-performing behaviors that focus on revisiting irrelevant central states. Similarly, when  Cartpole operates under a misspecified $r_\texttt{aux}$ (Figure~\ref{fig:result_gridworld} [right]), $\texttt{BARFI}$ is capable of rapidly adapting (after a warm-up period) and effectively disregarding misleading auxiliary reward signals. These results highlight once again $\texttt{BARFI}$'s robustness when faced with reward misspecification.

% (explained later in Figure \ref{fig:learned_reward}).  

% Similarly, in the CartPole experiments (Figure \ref{fig:result_gridworld} (Right)), we introduce an auxiliary reward function that provides positive feedback to the agent whenever it mimics a known poor-performing policy. This misleading reward significantly disrupts the learning process for traditional baselines like REINFORCE and Actor-Critic, causing them to converge to suboptimal strategies. \texttt{BARFI}, by contrast, is capable of rapidly adapting and (after a warm-up period) 
% %to this misleading auxiliary reward. In particular, after an initial warm-up period,  
% %associated with the naive reward combination, 
% %\texttt{BARFI} successfully 
% effectively disregards the misleading signals provided by the auxiliary reward; this result once again highlights its robustness when faced with reward misspecification.
% The good performance of potential-based reward shaping can be explained using the issues highlighted in Section \ref{sec:pitfall_pbrs}. 

% Figure~\ref{fig:learned_reward} illustrates what kind of reward functions $r_\phi$ might end up learning using the bi-level formulation.  In the figure, we can notice that $r_\phi(s)$ starts to actively penalize the agent from entering the center states, so as to avoid entering into a loop of getting the +50 reward.
% \bcs{$\leftarrow$ this is already discussed 2 paragraphs up, and in the caption of Fig3}
\subsection{Scalability to High-Dimensional Continuous Control}
\label{sec:scalability}
One might wonder whether computing implicit gradients for $\phi$ and $\varphi$ would be feasible in high-dimensional problems, due to the computational cost of inverting Hessians. To address this concern, we leverage Neumann series approximation with Hessian-vector products (See Appendix \ref{apx:algorithm}) and conduct further experiments, as shown in Figure~\ref{fig:results_mujoco}. These experiments focus on evaluating the scalability of $\texttt{BARFI}$ in control problems with high-dimensional state spaces and continuous actions---scenarios that often rely on neural networks for both the policy and critic function approximators.
For a more comprehensive evaluation, we also introduced an alternative method named \texttt{BARFI unrolled}. Unlike \texttt{BARFI}, which uses implicit bi-level optimization, \texttt{BARFI unrolled} employs path-wise bi-level optimization. It maintains a complete record of the optimization path to determine updates for $\phi$ and $\varphi$. Further details regarding this alternative method can be found in Appendix \ref{apx:path-wise}.

% We experiment with a reward function composed of multiple components combined by a linear scaled addition at different scales, and how an arbitrary choice of weighting between different reward components can affect the learning of different agents. To that extent, we consider the task of  $HalfCheetah-\texttt{v4}$, in $HalfCheetah-\texttt{v4}$ the agent's task is to move forward, and it receives a positive primary reward based on its forward movement (denoted as $r_p$), and a small negative reward based on the magnitude of torque applied to the joints i.e., $r_{\texttt{aux}}(s, a) \coloneqq c |a|_2^2$. The weighting between the main reward and the control cost is pre-defined as $c$ for this environment, and we form the reward as $\tilde r(s,a) = r_p(s,a) + r_{\texttt{aux}}(s, a)$.

We conducted experiments on the HalfCheetah-\texttt{v4} domain and investigated, in particular, a reward function comprising two components with varying weights. This empirical analysis was designed to help us understand how different weight assignments to each reward component could influence the learning process. Specifically, in HalfCheetah-\texttt{v4}, the agent receives a positive reward $r_p$ proportional to how much it moved forward. It also incurs a small negative reward (concretely, an auxiliary reward, $r_{\texttt{aux}}(s, a) \coloneqq c \|a\|_2^2$, known as a \emph{control cost}) for the torque applied to its joints. A hyperparameter $c$ determines the balance between these rewards. The naive combination of such primary and auxiliary rewards is defined as $\tilde r_{\text{naive}}(s,a) = r_p(s,a) + r_{\texttt{aux}}(s, a)$.
Figure~\ref{fig:results_mujoco} [left] shows that the baselines and both variants of $\texttt{BARFI}$ appear to learn effectively. With alternative reward weighting schemes, however, only $\texttt{BARFI}$ and $\texttt{BARFI unrolled}$ show learning progress, as seen in Figure~\ref{fig:results_mujoco} [middle]. It is worth noting that path-wise bi-level optimization can become impractical as the number of update steps in~\eqref{eqn:deltachain1} increases, due to growing computational and memory requirements (Figure~\ref{fig:results_mujoco} [right]). Although we do not recommend $\texttt{BARFI unrolled}$, we include its results for completeness. Additional ablation studies on \emph{(a)} the effect of the inner optimization step; \emph{(b)} Neumann approximations; \emph{(c)} decay of $\gamma$; and \emph{(d)} returns based on $r_\phi$, are provided in Appendix~\ref{apx:ablation}. 
%
% , to Appendix \ref{apx:ablation}.
 \begin{figure}[t]
    \centering
    \includegraphics[width=\textwidth]{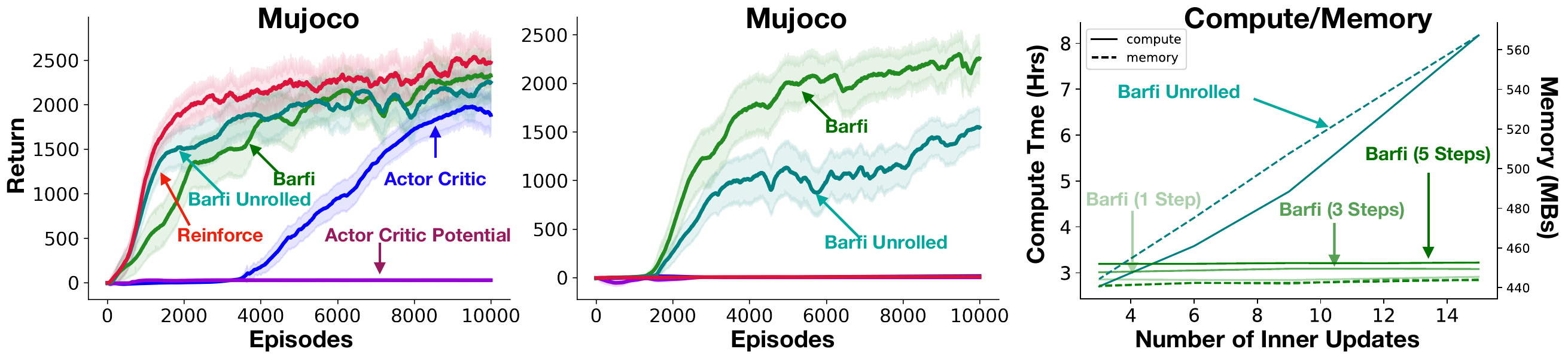}
    \caption{
    \footnotesize
    Results for MuJoCo environment.   (\textbf{Left})  Auxiliary reward is defined to be  $-c\|a\|^2_2$, where $c$ is a positive hyperparameter and $a$ is the continuous high-dimensional action vector. (\textbf{Middle}) Similar setting as before, but uses an amplified variant of the auxiliary reward: $-4c\|a\|_2^2$. It is worth highlighting that even under alternative reward weighting schemes, both variants of our (behavior-aligned) bi-level optimization methods demonstrate successful learning. Learning curves correspond to mean return over 15 trials, and the shaded regions correspond to one standard error. (\textbf{Right}) Required compute and memory for $\texttt{BARFI unrolled}$, compared to $\texttt{BARFI}$, as a function of the number of inner-optimization updates. This figure also showcases  $\texttt{BARFI}$'s characteristics under various orders of Neumann approximation.
    }
    \vspace{-10pt}\label{fig:results_mujoco}
\end{figure}

% \subsection{Implicit gradients vs Pathwise gradients} \label{sec:implicit_vs_pathwise}
% \ycinline{Paraphrase and introduce unrolled BARFI} In Figure \ref{fig:results_mujoco} (Middle) we notice that both $\texttt{BARFI}$ and $\texttt{BARFI unrolled}$ are able to circumvent inappropriate reward scaling and learn a good policy. However,  
% and show that $\texttt{BARFI}$ performs as well as, if not better than, $\texttt{BARFI unrolled}$ while remaining much more efficient in terms of both memory and compute.

\section{Related work}
This paper focuses primarily on how to efficiently leverage auxiliary rewards $r_{\texttt{aux}}$. Notice, however, that in the absence of $r_{\texttt{aux}}$, the resulting learned behavior alignment rewards $r_\phi$ may be interpreted as \textit{intrinsic rewards}~\citep{zheng2018learning,zheng2020can}.
Furthermore, several prior works have investigated meta-learning techniques, which are methods akin to the bi-level optimization procedures used in our work. Such prior works have employed meta-learning in various settings, including automatically inferring the effective return of trajectories~ \citep{xu2018meta,wang2019beyond,bechtle2019meta,zheng2020can}, parameters of potential functions \citep{zou2019reward,hu2020learning,fu2019automatic}, targets for TD learning~\citep{xu2020meta}, rewards for planning~ \citep{sorg2010reward,guo2016deep}, and even fully specified reinforcement learning update rules~\citep{kirsch2019improving,oh2020discovering}. Additionally, various other relevant considerations to effectively learning rewards online have been discussed by~\citet{armstrong2020pitfalls}. Our work complements these efforts by focusing on the reward alignment problem, specifically in settings where auxiliary information is available. An extended discussion on related works can be found in Appendix~\ref{apx:related_works}.
It is worth mentioning that among the above-mentioned techniques, most rely on path-wise meta-gradients. As discussed in Section~\ref{sec:scalability}, this approach can be disadvantageous as it often performs only one or a few inner-optimization steps, which limits its ability to fully characterize the result of the inner optimization~\citep{wu2018understanding}. Further, it requires caching intermediate steps, which increases computational and memory costs.
$\texttt{BARFI}$, by contrast, exploits implicit gradients to alleviate these issues by directly characterizing the fixed point of $\texttt{Alg}$ induced by learned behavior alignment rewards. 

Finally, it is also important to highlight that a concurrent work on reward alignment using bi-level optimization was made publicly available after our manuscript was submitted for peer-reviewing at NeurIPS~\citep{chakraborty2023aligning}. While our work analyses drawbacks of potential-based shaping and establishes different forms of correction that can be performed via bi-level optimization, this concurrent work provides complementary analyses on the convergence rates of bi-level optimization, as well as a discussion on its potential applications to Reinforcement Learning from Human Feedback (RLHF).

\section{Conclusion and Future Work}

In this paper, we introduced $\texttt{BARFI}$, a novel framework that empowers RL practitioners---who may not be experts in the field---to incorporate domain knowledge through heuristic auxiliary reward functions. Our framework allows for more expressive reward functions to be learned while ensuring they remain aligned with a designer's original intentions. $\texttt{BARFI}$ can also identify reward functions that foster faster learning while mitigating various limitations and biases in underlying RL algorithms.
We empirically show that $\texttt{BARFI}$ is effective in training agents in sparse-reward scenarios where (possibly poorly-specified) auxiliary reward information is available. If the provided auxiliary rewards are determined to be misaligned with the designer's intended goals, $\texttt{BARFI}$ autonomously adapts and effectively disincentivizes their use as needed. This adaptability results in a reliable pathway to identifying high-performing policies.
The conceptual insights offered by this work provide RL practitioners with a structured way to design more robust and easy-to-optimize reward functions. We believe this will contribute to making RL more accessible to a broader audience.

\section*{Acknowledgement and Funding Disclosures}

We thank Andy Barto for invaluable discussions and insightful feedback on an earlier version of this manuscript, which significantly improved the quality of our work.

This work is partially supported by the National Science Foundation under grant no. CCF-2018372 and by a gift from the Berkeley Existential Risk Initiative.

\bibliographystyle{mybibstyle}
\bibliography{mybib}
% \bibliographystyle{abbrvnat}

%%%%%%%%%%%%%%%%%%%%%%%%%%%%%%%%%%%%%%%%%%%%%%%%%%%%%%%%%%%%

% %%% BEGIN INSTRUCTIONS %%%
% The checklist follows the references.  Please
% read the checklist guidelines carefully for information on how to answer these
% questions.  For each question, change the default \answerTODO{} to \answerYes{},
% \answerNo{}, or \answerNA{}.  You are strongly encouraged to include a {\bf
% justification to your answer}, either by referencing the appropriate section of
% your paper or providing a brief inline description.  For example:
% \begin{itemize}
%   \item Did you include the license to the code and datasets? \answerYes{See Section~\ref{gen_inst}.}
%   \item Did you include the license to the code and datasets? \answerNo{The code and the data are proprietary.}
%   \item Did you include the license to the code and datasets? \answerNA{}
% \end{itemize}
% Please do not modify the questions and only use the provided macros for your
% answers.  Note that the Checklist section does not count towards the page
% limit.  In your paper, please delete this instructions block and only keep the
% Checklist section heading above along with the questions/answers below.
% %%% END INSTRUCTIONS %%%

%%%%%%%%%%%%%%%%%%%%%%%%%%%%%%%%%%%%%%%%%%%%%%%%%%%%

\clearpage

\onecolumn 
\appendix

\setcounter{thm}{0}
\setcounter{prop}{0}

\section*{\centering Behavior Alignment
via Reward Function Optimization\\ (Supplemental Material)}

% \section{Notation*}
\begin{table}[ht]
\centering
\caption{Notations}
\label{tab:notations}
\begin{tabular}{cp{10cm}}
\toprule
\textbf{Symbol} & \textbf{Description} \\
\midrule
$\theta$ & Parameters for policy $\pi$ \\
$\phi$ & Parameters for reward function \\
$\varphi$ & Parameters for learned $\gamma$\\
$\pi_\theta, r_\phi, \gamma_\varphi$ & Functional form of policy, reward and $\gamma$ with their respective parameters\\
$\alpha_\theta, \alpha_\phi, \alpha_\varphi$ & Step sizes for the respective parameters \\
$\lambda_\theta, \lambda_\phi, \lambda_\varphi$ & Regularization for policy, reward and $\gamma$ function \\
$\delta$ & Number of on-policy samples collected between subsequent updates to $\phi, \varphi$\\
$\eta$ & Neumann Approximator Eigen value scaling factor\\
$n$ & Number of loops used in Neumann Approximation \\
$\texttt{optim}$ & Any standard optimizer like Adam, RMSprop, SGD, which takes input as gradients and outputs the appropriate update\\
$E$ & Total Number of episodes to sample from the environment\\
$N_i$ & Number of updates to be performed for updating the $\pi$ by $\texttt{Alg}$  \\
$N_0$ & Number of initial updates to be peformed\\
$\tau$ & Sample of a trajectory from a full episode \\
\bottomrule
\end{tabular}
\end{table}

\section{Proofs for Theoretical Results}
\label{apx:sec:proofs}

In this section, we provide proofs for \thref{prop:potential}, \thref{prop:gamma}, and \thref{prop:off}.
For the purpose of these proofs, we introduce some additional notation.
To have a unified MDP notation for goal-based and time-based tasks, we first consider that in the the time-based task, time is a part of the state such that Markovian dynamics is ensured.

The (un-normalized) discounted and (un-normalized) undiscounted visitation probability is denoted as
\begin{align}
    d^\pi_\gamma(s,a) &\coloneqq \sum_{t=0}^T \gamma^t \Pr(S_t=s, A_t=a; \pi), \label{apx:eqn:gammavisit}
    \\
    \bar{d}^\pi(s,a) &\coloneqq \sum_{t=0}^T \Pr(S_t=s, A_t=a; \pi). \label{apx:eqn:visit}\\
    \shortintertext{We can normalize it so that it is a distribution as follows : }
    d^\pi (s,a) & \coloneqq \frac{\bar{d}^\pi(s,a)}{\sum_{s'\in \mathcal S, a' \in \mathcal A} \bar{d}^\pi(s',a')}. \label{apx:eqn:visitdist}
    % \hat{Q}^\pi(s,a) &\coloneqq \sum_{t=0}^T \Pr(S_t=s, A_t=a; \pi) \sum_{t' = t}^{T} \gamma^{t' - t} r_p (S_t',A_t'). \label{apx:eqn:qhat}\\
\end{align}
% \yash{The change of the above notation breaks the notation in earlier proofs.}
%
% The $q$-value of any state-action tuple under a policy $\pi$,
%
% \begin{align}
%     q^\pi(s,a) \coloneqq \mathbb{E}_\pi\left[\sum_{j=t}^T \gamma^{j-t} r_p(S_j, A_j) \middle| S_t=s, A_t=a \right].
% \end{align}

\begin{prop}    The expected update performed by the biased policy gradient update is same when using the primary reward and reward modified with potential-based shaping, i.e., $\Delta(\theta, {\color{red}\tilde r}) = \Delta(\theta, r_p)$.
Further, the variance of the update when using potential-based reward shaping can be higher than the variance of the update performed using the primary reward, i.e., $\operatorname{Var}  \left(\hat \Delta(\theta, {\color{Bittersweet}\tilde r}) \right) \geq \operatorname{Var}  \left(\hat \Delta(\theta, r_p) \right)$.
\end{prop}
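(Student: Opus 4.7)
My plan is to prove the two claims separately, treating the potential-shaped reward $\tilde r_\Phi(S_t, A_t, S_{t+1}) = r_p(S_t, A_t) + \gamma\Phi(S_{t+1}) - \Phi(S_t)$ as a pointwise perturbation of $r_p$ and analyzing how this perturbation propagates through the gradient estimator $\hat\Delta(\theta,\cdot)$ defined in \eqref{eqn:PG}. The key will be to exhibit the difference $\hat\Delta(\theta,\tilde r_\Phi) - \hat\Delta(\theta, r_p)$ in closed form and then exploit it in two different ways: first by taking expectations for the equality of means, and then by constructing a concrete instance where it strictly inflates variance.

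For the first claim $\Delta(\theta, \tilde r_\Phi) = \Delta(\theta, r_p)$, I would substitute the shaped reward into the definition of $\hat\Delta$ and collect the additional contribution:
\begin{align}
\hat\Delta(\theta, \tilde r_\Phi) - \hat\Delta(\theta, r_p) \;=\; \sum_{t=0}^T \psi_\theta(S_t,A_t) \sum_{j=t}^T \gamma^{j-t}\bigl[\gamma\Phi(S_{j+1}) - \Phi(S_j)\bigr].
\end{align}
The inner sum telescopes: writing $\gamma^{j-t}[\gamma\Phi(S_{j+1}) - \Phi(S_j)] = \gamma^{j-t+1}\Phi(S_{j+1}) - \gamma^{j-t}\Phi(S_j)$ and collapsing, it equals $\gamma^{T-t+1}\Phi(S_{T+1}) - \Phi(S_t)$, which (with the standard convention $\Phi(S_{T+1})=0$ at termination, as in \citet{ng1999policy}) reduces to $-\Phi(S_t)$. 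Thus the difference becomes $-\sum_{t=0}^T \psi_\theta(S_t,A_t)\Phi(S_t)$. Taking expectation under $\pi_\theta$ and conditioning on $S_t$, the score-function identity $\mathbb{E}[\psi_\theta(S_t,A_t)\mid S_t] = \sum_a \nabla_\theta \pi_\theta(S_t,a) = \nabla_\theta 1 = 0$ kills every term, yielding $\Delta(\theta,\tilde r_\Phi) = \Delta(\theta, r_p)$.

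For the second claim, I would use the decomposition just derived: writing $X \coloneqq \hat\Delta(\theta, r_p)$ and $Y \coloneqq \sum_{t} \psi_\theta(S_t,A_t)\Phi(S_t)$, we have $\hat\Delta(\theta,\tilde r_\Phi) = X - Y$ with $\mathbb{E}[Y]=0$, so
\begin{align}
\operatorname{Var}\bigl(\hat\Delta(\theta,\tilde r_\Phi)\bigr) = \operatorname{Var}(X) + \operatorname{Var}(Y) - 2\,\operatorname{Cov}(X, Y).
\end{align}
To establish that the variance \emph{can} be strictly larger, it suffices to exhibit a single MDP and potential $\Phi$ where $\operatorname{Var}(Y) > 2\,\operatorname{Cov}(X,Y)$. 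The cleanest witness is $r_p \equiv 0$, which forces $\operatorname{Var}(X) = 0$ and $\operatorname{Cov}(X,Y)=0$; then any non-constant $\Phi$ combined with a stochastic $\pi_\theta$ (say, a two-state MDP with a Bernoulli policy) makes $Y$ genuinely random, so $\operatorname{Var}(\hat\Delta(\theta,\tilde r_\Phi)) = \operatorname{Var}(Y) > 0 = \operatorname{Var}(\hat\Delta(\theta, r_p))$.

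The only mildly subtle step is the telescoping boundary term, which relies on the standard convention that the potential vanishes at absorbing/terminal states; once that is handled, the score-function identity gives the mean equality immediately, and the variance claim is established by the explicit counterexample above. I expect no substantial obstacles beyond bookkeeping.
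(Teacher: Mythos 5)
Your proof of the mean-equality part is essentially identical to the paper's: telescope the inner sum to $\gamma^{T-t+1}\Phi(S_{T+1})-\Phi(S_t)$, invoke the convention that the potential is a constant (zero) at the terminal state, and kill the remaining term with the score-function identity $\mathbb{E}[\psi_\theta(S_t,A_t)\mid S_t]=0$. That part is correct and matches the paper step for step.

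For the variance claim you take a genuinely different, and in fact cleaner, route. The paper restricts to a one-step episode with scalar $\theta$, expands $\operatorname{Var}(\hat\Delta(\theta,\tilde r_\Phi))-\operatorname{Var}(\hat\Delta(\theta,r_p))=\mathbb{E}[\psi_\theta(S_0,A_0)^2(\Phi(S_0)^2-2\Phi(S_0)r_p(S_0,A_0))]$, and then exhibits a single-state example where an ``overly optimistic'' potential, $\Phi(s)>2r_p(s,a)$ for all $a$, makes this positive. You instead use the global decomposition $\hat\Delta(\theta,\tilde r_\Phi)=X-Y$ with $\mathbb{E}[Y]=0$ and pick the degenerate witness $r_p\equiv 0$, which forces $\operatorname{Var}(X)=\operatorname{Cov}(X,Y)=0$ and leaves $\operatorname{Var}(Y)>0$ for any non-constant contribution $\psi_\theta(S_0,A_0)\Phi(S_0)$ under a stochastic policy. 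Your witness is more minimal and requires no case analysis on the sign or magnitude of $\Phi$; the paper's version buys a more informative sufficient condition that connects the variance inflation to the relationship between the potential and the reward scale (e.g., using an optimistic value function as a potential early in training). One small point worth making explicit in your write-up: for $\operatorname{Var}(Y)>0$ you need $\psi_\theta(S_0,A_0)\Phi(S_0)$ to be non-degenerate, which holds in your two-state/Bernoulli example because $\sum_a\pi_\theta(s,a)\psi_\theta(s,a)=0$ forces $\psi_\theta(s,\cdot)$ to take distinct values across actions unless the parameterization is degenerate; stating this (and fixing a scalar $\theta$ so that ``variance'' is unambiguous, as the paper does) closes the argument. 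Otherwise no gaps.
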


\begin{proof}

\textbf{Part 1: Equality of the expected update}
\begin{align}
    \Delta(\theta, {\color{red}\tilde r}) &= \mathbb{E}_{\pi_\theta}\left[\sum_{t=0}^T \psi_\theta(S_t, A_t)\sum_{j=t}^T \gamma^{j-t}{\color{red}\tilde r}(S_j, A_j)\right]
    \\
    &= \mathbb{E}_{\pi_\theta}\left[\sum_{t=0}^T \psi_\theta(S_t, A_t) \left(\sum_{j=t}^T \gamma^{j-t} \left(   r_p(S_j,A_j) + \gamma \Phi(S_{j+1}) - \Phi(S_j)\right)\right)\right]
    \\
    &= \mathbb{E}_{\pi_\theta}\left[\sum_{t=0}^T \psi_\theta(S_t, A_t) \sum_{j=t}^T \gamma^{j-t}   r_p(S_j,A_j)\right] +
    \mathbb{E}_{\pi_\theta}\left[\sum_{t=0}^T \psi_\theta(S_t, A_t) \sum_{j=t}^T \gamma^{j-t}(\gamma \Phi(S_{j+1}) - \Phi(S_j))\right]
    \\
    &= \Delta(\theta, r_p) +
    \mathbb{E}_{\pi_\theta}\left[\sum_{t=0}^T \psi_\theta(S_t, A_t) \sum_{j=t}^T \gamma^{j-t}(\gamma \Phi(S_{j+1}) - \Phi(S_j))\right]
    \\
    &\overset{(a)}{=} \Delta(\theta, r_p) +
    \mathbb{E}_{\pi_\theta}\left[\sum_{t=0}^T \psi_\theta(S_t, A_t) (\gamma^{T-t+1}\Phi(S_{T+1}) - \Phi(S_t))\right]
    \\
    &\overset{(b)}{=} \Delta(\theta, r_p) +
    \mathbb{E}_{\pi_\theta}\left[\sum_{t=0}^T \psi_\theta(S_t, A_t)  (\gamma^{T-t+1}c - \Phi(S_t))\right] \label{eqn:dhawal1}
    \\
    &\overset{(c)}{=} \Delta(\theta, r_p) +
    \mathbb{E}_{\pi_\theta}\left[\sum_{t=0}^T  (\gamma^{T-t+1}c - \Phi(S_t)) \mathbb{E}_{\pi_\theta}\left[\psi_\theta(S_t, A_t) \middle| S_t\right] \right]
    \\
    &\overset{(d)}{=} \Delta(\theta, r_p),
\end{align}
where (a) holds because on the expansion of future return, intermediate potential values cancel out, (b) holds because $S_{T+1}$ is the terminal state and potential function is defined to be a fixed constant $c$ for any terminal state \citep{ng1999policy}, (c) holds from the law of total expectation, and (d) holds because,
\begin{align}
    \mathbb{E}_{\pi_\theta}\left[\psi_\theta(S_t, A_t) \middle| S_t\right] = \sum_{a \in \mathcal A} \pi_\theta(S_t, a) \frac{\partial \ln \pi_\theta(S_t, a)}{\partial \theta} = \sum_{a \in \mathcal A}  \frac{\partial  \pi_\theta(S_t, a)}{\partial \theta} = \frac{\partial }{\partial \theta} \sum_{a\in\mathcal A} \pi_\theta(S_t, a) = 0.
\end{align}
In the stochastic setting, i.e., when using sample average estimates instead of the true expectation,  $\gamma^{T-t+1}c - \phi(S_t)$ is analogous to a state-dependent baseline for the sum of discounted future primary rewards.
It may reduce or increase the variance of $\Delta(\theta, r_p)$, depending on this baseline's co-variance with $\sum_{j=t}^T\gamma^{j-t}r_p(S_j, A_j)$.

\textbf{Note: } As we encountered the potential at the terminal state to be $c$, as it is a constant, we will use the value of $c=0$ in accordance with \cite{ng1999policy}.

% and a fixed positive value can actually change the optimal policy in case of the finite horizon problem, we define the potential of the terminal states to be 0, i.e., $c=0$. \yash{Why does adding a constant to every \textit{trajectory} change the optimal policy? Adding constant to every step reward changes. Anyay, check the original work by Ng, I think they define it to be zero as well, in that case just cite them.} 

\textbf{Part 2: Variance characterization} 

For this result that discusses the possibility of the variance being higher when using potential-based reward shaping, we demonstrate the result using a simple example.
We will consider the single-step case wherein an episode lasts for one time step. That is, the agent takes an action $A_0$ at the starting state $S_0$  and then transitions to the terminal state. Hence, the stochastic update $\hat{\Delta}(\theta, \tilde r )$ can be written as:
\begin{align*}
    \hat{\Delta}(\theta, \tilde r )=&\psi_\theta(S_0,A_0) \tilde r(S_0,A_0)\\
    =&\psi_\theta(S_0, A_0) (r_p(S_0,A_0) - \Phi(S_0)), 
\end{align*}
wherein, we assume that $\Phi$ for terminal states is 0 and similarly, $\hat{\Delta}(\theta,  r_p) =  \psi_\theta(S_0, A_0) r_p(S_0,A_0)$. 

For the purpose of this proof we will consider the case wherein we have a scalar $\theta$, i.e., $\theta \in \Re$, such that, $\psi_\theta(.,.) \in \Re$. 

Hence, $\VV{}{\hat{\Delta}(\theta, \tilde r)}$ can be written as:
\begin{align*}
    \VV{}{\hat{\Delta}(\theta, \tilde r)} =& \EE{}{\hat{\Delta}(\theta, \tilde r)^2} - \EE{}{\hat{\Delta}(\theta, \tilde r)}^2\\
    =& \EE{}{(\psi_\theta(S_0, A_0) (r_p(S_0,A_0) - \Phi(S_0)))^2} - \EE{}{\psi_\theta(S_0, A_0) (r_p(S_0,A_0) - \Phi(S_0))}^2\\
    \overset{(a)}{=}&  \EE{}{\psi_\theta(S_0, A_0)^2 (r_p(S_0,A_0)^2 + \Phi(S_0)^2 - 2\Phi(S_0)r_p(S_0, A_0))} - \EE{}{\psi_\theta(S_0, A_0) (r_p(S_0,A_0))}^2\\
    =& \EE{}{\psi_\theta(S_0, A_0)^2 (\Phi(S_0)^2 - 2\Phi(S_0)r_p(S_0, A_0))} + \\& \underbrace{\EE{}{\psi_\theta(S_0, A_0)^2 r_p(S_0, A_0)^2}- \EE{}{\psi_\theta(S_0, A_0) (r_p(S_0,A_0))}^2 }_{ \VV{}{  \hat{\Delta} (\theta, r_p)  }  }.
\end{align*}

Therefore,
\begin{align*}
    \VV{}{\hat{\Delta}(\theta, \tilde r)} - \VV{}{\hat{\Delta}(\theta,  r_p)} = \EE{}{\psi_\theta(S_0, A_0)^2 (\Phi(S_0)^2 - 2\Phi(S_0)r_p(S_0, A_0))}.
\end{align*}
\newcommand{\hatdeltatilde}{\hat{\Delta}(\theta, \tilde r)}
\newcommand{\hatdeltap}{\hat{\Delta}(\theta,  r_p)}

Subsequently, variance of $\hatdeltatilde$ will be higher than that of $\hatdeltap$ if $\EE{}{\psi_\theta(S_0, A_0)^2 \Phi(S_0)^2} - 2 \EE{}{\psi_\theta(S_0, A_0)^2 \Phi(S_0)r_p(S_0,A_0) } > 0$.

\paragraph{Example: } Let us look at an example where the above condition can be true. Let us consider an MDP with a single state and a single-step horizon. In that case, we can consider the variance of the update to the policy at the said state, i.e., 
    \begin{align*}
        \VV{\pi}{\hatdeltatilde}  - \VV{\pi}{\hatdeltap} &= \Phi(s)^2 \EE{\pi}{\psi_\theta(s,A)^2 } - 2 \Phi(s) \EE{\pi}{\psi_\theta(s,A)^2 (r_p(s,A))},
    \end{align*}
    where $s$ is the fixed state.
    Hence, the variance of the potential-based method might be more than the variance from using only the primary reward when
    \begin{align*}
        \Phi(s)^2 \EE{\pi}{\psi_\theta(s,A)^2 } - 2 \Phi(s) \EE{\pi}{\psi_\theta(s,A)^2 (r_p(s,A))} &> 0 \\
        \Phi(s)^2 \EE{\pi}{\psi_\theta(s,A)^2 } &> 2 \Phi(s) \EE{\pi}{\psi_\theta(s,A)^2 (r_p(s,A))}.
    \end{align*}
   Further, let us consider the case where $\Phi(s) \neq 0$, because otherwise the variance of the update for those states would be same, and $\Phi(s) > 0$.
    \begin{align*}
        \Phi(s)^{\cancel{2}} \EE{\pi}{\psi_\theta(s,A)^2 } &> 2 \cancel{\Phi(s)} \EE{\pi}{\psi_\theta(s,A)^2 (r_p(s,A))}\\
        \Phi(s) \EE{\pi}{\psi_\theta(s,A)^2 } &> 2  \EE{\pi}{\psi_\theta(s,A)^2 (r_p(s,A))}.
    \end{align*}
    We can see that the above condition can  be satisfied by choosing a potential function that might be overly optimistic about the average reward of the state $s$, i.e. any $\Phi(s),$ s.t. $\Phi(s) > 2 r_p(s,a) \forall a$ would lead to an increase in variance. A common place where this might be true is the use of an optimal value function (as hinted by \cite{ng1999policy}) as a baseline for a bad/mediocre policy initially. 
\end{proof}

\begin{prop}
There exists $r_\phi: \mathcal S \times \mathcal A \rightarrow \mathbb R$ and $\gamma_\varphi \in [0,1)$ such that $\Delta_\text{on}(\theta, \phi, \varphi) = \Delta_\gamma(\theta, r_p)$.

\end{prop}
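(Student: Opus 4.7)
The goal is to exhibit an explicit pair $(r_\phi, \gamma_\varphi)$ for which the bias-free policy gradient $\Delta_\gamma(\theta, r_p)$ is reproduced by the $\phi$-parameterized biased update $\Delta_\text{on}(\theta, \phi, \varphi)$. My plan is to reduce both expressions to a single-step form via the tower property and then match them term-by-term.

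First, I would apply the law of iterated expectations, conditioning on $(S_t, A_t)$, to rewrite
\[
\Delta_\gamma(\theta, r_p) \;=\; \mathbb{E}_{\pi_\theta}\!\Bigl[\sum_{t=0}^T \gamma^t \psi_\theta(S_t,A_t)\, Q^{\pi_\theta}_{r_p}(S_t,A_t)\Bigr],
\]
where $Q^{\pi_\theta}_{r_p}(s,a) \coloneqq \mathbb{E}_{\pi_\theta}\bigl[\sum_{j=t}^T \gamma^{j-t} r_p(S_j,A_j) \mid S_t = s, A_t = a\bigr]$ is a well-defined function of $(s,a)$ by the convention, stated at the start of Appendix~\ref{apx:sec:proofs}, that time is included as part of the state. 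This step just shifts the randomness of the future return into a deterministic function of the current state-action pair, which is what lets me pack everything into a one-step reward.

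Next, I would choose $\gamma_\varphi = 0 \in [0,1)$, adopting the standard convention $0^0 = 1$. Then $\gamma_\varphi^{j-t} = \mathbb{1}(j = t)$, so the inner discounted sum in $\Delta_\text{on}$ collapses and
\[
\Delta_\text{on}(\theta, \phi, \varphi) \;=\; \mathbb{E}_{\pi_\theta}\!\Bigl[\sum_{t=0}^T \psi_\theta(S_t, A_t)\, r_\phi(S_t, A_t)\Bigr].
\]
Finally, define
\[
r_\phi(s_t, a_t) \;\coloneqq\; \gamma^t \, Q^{\pi_\theta}_{r_p}(s_t, a_t).
\]
Because $t$ is encoded in $s_t$, the factor $\gamma^t$ is a legitimate function of the state, so $r_\phi : \mathcal S \times \mathcal A \to \mathbb R$ is well-defined. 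Substituting into the collapsed expression for $\Delta_\text{on}$ immediately matches the rewriting of $\Delta_\gamma$ above, yielding the desired equality.

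The main obstacle (more conceptual than technical) is the need to rely on two implicit assumptions that the paper has already granted: (i) the parameterization class $\Upsilon$ is expressive enough to realize $r_\phi = \gamma^t Q^{\pi_\theta}_{r_p}$, and (ii) time is part of the state, which is what makes this reward a function of $(s,a)$ alone. Once these are in place, the construction is essentially a one-line observation; the subtlety is simply recognizing that the existence claim in the proposition is really a statement about what an \emph{optimal} $r_\phi$ can absorb, not about what $\texttt{BARFI}$ will necessarily discover in practice. An alternative construction (with $\gamma_\varphi > 0$) would require solving a recursive equation for $r_\phi$ whose fixed point again exists under sufficient expressiveness, but the $\gamma_\varphi = 0$ choice makes the argument cleanest.
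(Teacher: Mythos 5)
Your proof is correct and follows essentially the same route as the paper's: both apply the tower property to replace the future return by $q^{\pi_\theta}$, set $\gamma_\varphi=0$ so that $\Delta_\text{on}$ collapses to a one-step update, and fold the residual $\gamma^t$ weighting into $r_\phi$. The only cosmetic difference is that the paper writes this weight as the visitation ratio $d^{\pi_\theta}_\gamma(s,a)/\bar{d}^{\pi_\theta}(s,a)$ rather than as $\gamma^t$ read off from the time-augmented state, and these coincide under the time-in-state convention you both invoke.
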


\begin{proof}
Recall the definition of $\Delta_\gamma(\theta, r_p)$ from Section \ref{sec:biascorrection}:
\begin{align}
    \Delta_\gamma(\theta, r_p) &= \mathbb{E}_{\pi_\theta}\left[\sum_{t=0}^T {\color{red}{\gamma^t}} \psi_\theta(S_t, A_t)\sum_{j=t}^T \gamma^{j-t}r_p(S_j, A_j)\right].
\end{align}
Using the law of total expectation,
\begin{align}
       \Delta_\gamma(\theta, r_p) &= \mathbb{E}_{\pi_\theta}\left[\sum_{t=0}^T {\color{red}{\gamma^t}} \psi_\theta(S_t, A_t) \mathbb{E}_{\pi_\theta}\left[\sum_{j=t}^T \gamma^{j-t}r_p(S_j, A_j) \middle| S_t, A_t \right]\right]
    \\
    &= \mathbb{E}_{\pi_\theta}\left[\sum_{t=0}^T {\color{red}{\gamma^t}} \psi_\theta(S_t, A_t) q^{\pi_\theta}(S_t, A_t)\right]
    \\
    &= \sum_{s\in\mathcal S, a\in\mathcal A} \sum_{t=0}^T {\color{red}{\gamma^t}} \Pr(S_t=s, A_t=a; \pi_\theta) \psi_
    \theta(s,a) q^{\pi_\theta}(s,a)
    \\
    &= \sum_{s\in\mathcal S, a\in\mathcal A} \psi_
    \theta(s,a) q^{\pi_\theta}(s,a)
    \sum_{t=0}^T {\color{red}{\gamma^t}} \Pr(S_t=s, A_t=a; \pi_\theta)
    \\
    &= \sum_{s\in\mathcal S, a\in\mathcal A} \psi_
    \theta(s,a) q^{\pi_\theta}(s,a)
    d^{\pi_\theta}_\gamma(s,a). \label{apx:eqn:inter1}
\end{align}
Notice from \eqref{apx:eqn:gammavisit} and \eqref{apx:eqn:visit} that for any $(s,a)$ pair, if $d^{\pi_\theta}_\gamma(s,a) > 0$, then $\bar{d}^{\pi_\theta}(s,a) > 0$ since $\gamma \geq 0$. Therefore, dividing and multiplying by $\bar{d}^{\pi_\theta}(s,a)$ leads to:
\begin{align}
    \Delta_\gamma(\theta, r_p)  &= \sum_{s\in\mathcal S, a\in\mathcal A}\bar{d}^{\pi_\theta}(s,a) \psi_
    \theta(s,a) q^{\pi_\theta}(s,a)
    \frac{d^{\pi_\theta}_\gamma(s,a)}{\bar{d}^{\pi_\theta}(s,a)}
    \\
    &= \sum_{s\in\mathcal S, a\in\mathcal A}\sum_{t=0}^T \Pr(S_t=s, A_t=a; \pi_\theta) \psi_
    \theta(s,a) q^{\pi_\theta}(s,a)
    \frac{d^{\pi_\theta}_\gamma(s,a)}{\bar{d}^{\pi_\theta}(s,a)}
    \\
    &= \mathbb{E}_{\pi_\theta}\left[\sum_{t=0}^T  \psi_
    \theta(S_t,A_t)  q^{\pi_\theta}(S_t,A_t)
    \frac{d^{\pi_\theta}_\gamma(S_t,A_t)}{\bar{d}^{\pi_\theta}(S_t,A_t)}\right].
\end{align}
Now, notice that if $\gamma_\varphi = 0$ and $r_\phi(s,a) = q^{\pi_\theta}(s,a)
    \frac{d^{\pi_\theta}_\gamma(s,a)}{\bar{d}^{\pi_\theta}(s,a)}$, for all $s \in \mathcal S$ and $a \in \mathcal A$, then 
\begin{align}
    \Delta_\text{on}(\theta, \phi, \varphi) = \Delta_\gamma(\theta, r_p).
\end{align}

\end{proof}

\begin{prop}
There exists $r_\phi: \mathcal S \times \mathcal A \rightarrow \mathbb R$ and $\gamma_\varphi \in [0,1)$ such that $\Delta_\text{off}(\theta, \phi, \varphi) = \Delta_{\text{off}}(\theta, r_p)$.
\end{prop}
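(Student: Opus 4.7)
The plan is to mirror the proof template used for Proposition 2, with one preliminary importance-sampling step to first convert the off-policy quantity $\Delta_\text{off}(\theta, r_p)$ into an on-policy expectation. After that, the law of total expectation collapses the inner sum into a $q$-function term, and the choice $\gamma_\varphi=0$ plus a suitable pointwise definition of $r_\phi$ matches the two sides.

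First I would strip the importance weights. Using the identity
\[
\mathbb{E}_\beta[\rho_j\, g(S_0, A_0, \ldots, S_j, A_j)] = \mathbb{E}_{\pi_\theta}[g(S_0, A_0, \ldots, S_j, A_j)],
\]
valid for any trajectory-measurable $g$ depending only on steps $0$ through $j$ (and legitimate because the assumption $\pi_\theta/\beta < \infty$ ensures $\beta$'s support dominates $\pi_\theta$'s), I would apply this term-by-term to each $(t, j)$ pair with $t \leq j$ inside the double sum defining $\Delta_\text{off}(\theta, r_p)$. Taking the sums outside the expectation (they are finite) and reassembling gives
\[
\Delta_\text{off}(\theta, r_p) = \mathbb{E}_{\pi_\theta}\!\left[\sum_{t=0}^T \gamma^t \psi_\theta(S_t, A_t) \sum_{j=t}^T \gamma^j r_p(S_j, A_j)\right].
\]
Next I would factor $\gamma^j = \gamma^t \gamma^{j-t}$, condition on $(S_t,A_t)$, and use the definition of $q^{\pi_\theta}$ to get the inner conditional expectation equal to $\gamma^t q^{\pi_\theta}(S_t,A_t)$. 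Unrolling over $t$ via the definition \eqref{apx:eqn:gammavisit} (with $\gamma$ replaced by $\gamma^2$) yields
\[
\Delta_\text{off}(\theta, r_p) = \sum_{s \in \mathcal S,\, a \in \mathcal A} \psi_\theta(s,a)\, q^{\pi_\theta}(s,a)\, d^{\pi_\theta}_{\gamma^2}(s,a),
\]
where $d^{\pi_\theta}_{\gamma^2}(s,a) := \sum_{t=0}^T \gamma^{2t}\Pr(S_t=s, A_t=a; \pi_\theta)$.

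Finally I would set $\gamma_\varphi = 0$, which collapses the inner geometric sum in $\Delta_\text{off}(\theta, \phi, \varphi)$ to the single term $r_\phi(S_t,A_t)$, giving
\[
\Delta_\text{off}(\theta,\phi,\varphi) = \mathbb{E}_\beta\!\left[\sum_{t=0}^T \psi_\theta(S_t,A_t)\, r_\phi(S_t,A_t)\right] = \sum_{s,a} \bar d^{\beta}(s,a)\, \psi_\theta(s,a)\, r_\phi(s,a),
\]
by \eqref{apx:eqn:visit}. Equating this, state-action by state-action, with the expression obtained in the previous paragraph suggests the choice
\[
r_\phi(s,a) \;:=\; q^{\pi_\theta}(s,a)\,\frac{d^{\pi_\theta}_{\gamma^2}(s,a)}{\bar d^{\beta}(s,a)}
\]
on the support of $\bar d^{\beta}$, and arbitrary (e.g., zero) elsewhere. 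The main obstacle is the well-definedness of this ratio: I would resolve it by invoking the finite-importance-ratio hypothesis, which forces $\beta(s,a) = 0 \Rightarrow \pi_\theta(s,a)=0$; combined with identical dynamics and start-state distribution under $\beta$ and $\pi_\theta$, this propagates to $\bar d^{\beta}(s,a)=0 \Rightarrow d^{\pi_\theta}_{\gamma^2}(s,a)=0$, so the null set where the denominator vanishes contributes nothing to either sum. This establishes the claimed equality and completes the proof.
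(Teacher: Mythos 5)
Your proof is correct and follows essentially the same route as the paper's: convert to an on-policy expectation via importance sampling, collapse the inner sum to $q^{\pi_\theta}$ by the law of total expectation, rewrite the result as a sum weighted by a discounted visitation measure, and then set $\gamma_\varphi=0$ and absorb everything into $r_\phi$ after multiplying and dividing by $\bar{d}^{\beta}(s,a)$. The only divergence---your $d^{\pi_\theta}_{\gamma^2}$ versus the paper's $d^{\pi_\theta}_{\gamma}$---traces to the paper's own inconsistency between the main-text definition of $\Delta_\text{off}(\theta,r_p)$ (inner factor $\gamma^{j}$) and the appendix's (inner factor $\gamma^{j-t}$); you took the former literally, and since the claim is existential, either reading yields a valid witness.
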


\begin{proof}
This proof follows a similar technique as the proof for \thref{prop:gamma}.
Recall the definition of $\Delta_\text{off}(\theta, r_p)$:
\begin{align}
    \Delta_\text{off}(\theta, r_p) &\coloneqq \mathbb{E}_{\beta}\left[\sum_{t=0}^T {\color{red}{\gamma^t}} \psi_\theta(S_t, A_t)\sum_{j=t}^T \gamma^{j-t}{\color{red}{\rho_j}}r_p(S_j, A_j)\right]
    \\
    &\coloneqq \mathbb{E}_{\beta}\left[\sum_{t=0}^T {\color{red}{\gamma^t \rho_t}} \psi_\theta(S_t, A_t)\sum_{j=t}^T \gamma^{j-t}{\color{red}{\rho_{j-t}}}r_p(S_j, A_j)\right].
\end{align}
Now using the law of total expectations,
\begin{align}
    \Delta_\text{off}(\theta, r_p)
    &= \mathbb{E}_{\beta}\left[\sum_{t=0}^T {\color{red}{\gamma^t \rho_t}} \psi_\theta(S_t, A_t) \mathbb{E}_{\beta}\left[\sum_{j=t}^T \gamma^{j-t}{\color{red}{\rho_{j-t}}}r_p(S_j, A_j) \middle| S_t, A_t \right]\right]
    \\
    &= \mathbb{E}_{\pi_\theta}\left[\sum_{t=0}^T {\color{red}{\gamma^t}} \psi_\theta(S_t, A_t) \mathbb{E}_{\pi_\theta}\left[\sum_{j=t}^T \gamma^{j-t}r_p(S_j, A_j) \middle| S_t, A_t \right]\right]
    \\
    &= \mathbb{E}_{\pi_\theta}\left[\sum_{t=0}^T {\color{red}{\gamma^t}} \psi_\theta(S_t, A_t) q^{\pi_\theta}(S_j, A_j)\right]
    % \\
    % &= \sum_{s\in\mathcal S, a\in\mathcal A} \sum_{t=0}^T {\color{red}{\gamma^t}} \Pr(S_t=s, A_t=a; \pi_\theta) \psi_
    % \theta(s,a)q^{\pi_\theta}(s,a)
    % \\
    % &= \sum_{s\in\mathcal S, a\in\mathcal A} \psi_
    % \theta(s,a)q^{\pi_\theta}(s,a)
    % \sum_{t=0}^T {\color{red}{\gamma^t}} \Pr(S_t=s, A_t=a; \pi_\theta)
    \\
    &= \sum_{s\in\mathcal S, a\in\mathcal A} \psi_
    \theta(s,a) q^{\pi_\theta}(s,a)
    d^{\pi_\theta}_\gamma(s,a),
\end{align}
where the last line follows similar to \eqref{apx:eqn:inter1}.
Now, notice that for any $(s,a)$ pair, the assumption that $\pi_\theta(s,a)/\beta(s,a) < \infty$ for all $s\in \mathcal S, a\in\mathcal A$, implies  $d^{\pi_\theta}_\gamma(s,a) /d^{\beta}_\gamma(s,a) < \infty$.
Further, if $d^{\beta}_\gamma(s,a) > 0$ it has to be that $d^{\beta}(s,a) > 0$ as well.
Therefore,   $d^{\pi_\theta}_\gamma(s,a) /d^{\beta}(s,a) < \infty$ as well.
Multiplying and dividing by $d^{\beta}(s,a)$ results in: 
% \ycinline{Under the updated notation, should $d^\beta$ be replaced with $\bar d^\beta$?}
%
\begin{align}
      \Delta_\text{off}(\theta, r_p)
    &= \sum_{s\in\mathcal S, a\in\mathcal A}\bar d^{\beta}(s,a) \psi_
    \theta(s,a) q^{\pi_\theta}(s,a)
    \frac{d^{\pi_\theta}_\gamma(s,a)}{\bar d^{\beta}(s,a)}
    \\
    &= \sum_{s\in\mathcal S, a\in\mathcal A}\sum_{t=0}^T \Pr(S_t=s, A_t=a; \beta) \psi_
    \theta(s,a) q^{\pi_\theta}(s,a)
    \frac{d^{\pi_\theta}_\gamma(s,a)}{\bar d^{\beta}(s,a)}
    \\
    &= \mathbb{E}_{\beta}\left[\sum_{t=0}^T  \psi_
    \theta(S_t,A_t) q^{\pi_\theta}(S_t,A_t)
    \frac{d^{\pi_\theta}_\gamma(S_t,A_t)}{\bar d^{\beta}(S_t,A_t)}\right].
\end{align}
Finally, notice that if $\gamma_\varphi = 0$ and $r_\phi(s,a) = q^{\pi_\theta}(s,a)
    \frac{d^{\pi_\theta}_\gamma(s,a)}{\bar d^{\beta}(s,a)}$ for all $s \in \mathcal S$ and $a \in \mathcal A$,  
\begin{align}
    \Delta_\text{off}(\theta, \phi, \varphi) = \Delta_\text{off}(\theta, r_p).
\end{align}
\end{proof}

\begin{rem}
Notice that as with any optimization problem, issues of realizability and identifiability of the desired $r_\phi$ must be taken into account. 
The examples provided in this section aim to highlight the  capability of optimized behavior alignment reward functions. In particular, they not only improve and accelerate the learning process but are also capable of inducing updates capable of `fixing' imperfections in the underlying RL algorithm.
\end{rem}
\section{Extended Related Works}\label{apx:related_works}

% \yash{There are several other related work that we have missed here. Worth properly searching and citing them.}
The bi-level objective draws inspiration from the seminal work of \citet{singh2009rewards,singh2010intrinsically} that provides an optimal-rewards framework for an agent.
Prior works have built upon it to explore search techniques using evolutionary algorithms \citep{niekum2010genetic,grunitzki2017flexible}, develop extensions for  multi-agent setting \citep{wolpert2001optimal,grunitzki2018towards}, and mitigate sub-optimality due to use of inaccurate models
\citep{sorg2010internal, sorg2011optimal,sorg2011optimalb}.
Our work also builds upon this direction and focuses on various aspects of leveraging auxiliary rewards $r_{\texttt{aux}}$, while staying robust against its misspecification.

Apart from specifying auxiliary rewards $r_{\texttt{aux}}$ directly, other techniques for reward specification include linear temporal logic \citep{kress2009temporal,ding2011ltl,wolff2012robust,littman2017environment} or reward machines \citep{icarte2018using,icarte2019learning,icarte2020reward} that allow exposing the reward functions as a white-box to the agent.
Recent works also explore $\gamma$ that is state-action dependent \citep{white2017unifying,pitis2019rethinking}, or establishes connection between $\gamma$ and value function regularization in TD learning \citet{amit2020discount}.
These ideas are complementary to our proposed work and combining these with $\texttt{BARFI}$ remains interesting directions for the future. 

The concept of path-based meta-learning was initially popularized for few-shot task learning in supervised learning \cite{finn2017model, nichol2018on}. Similar path-based approaches have been adopted in reinforcement learning (RL) in various forms \cite{hu2020learning, wang2019beyond, xu2020meta, zheng2020can}. Initially designed for stochastic gradient descent, these methods have been extended to other optimizers such as Adam \cite{kingma2015adam} and RMSprop \cite{hinton2012neural}, treating them as differentiable counterparts \cite{grefenstette2020higher}.

% \newpage
% \section*{Rebuttal: Behavior Alignment via Reward Function Optimization }

\section{Algorithm}
\label{apx:algorithm}
%
	
%
% \yash{Write down algo for implicit gradient.}
%
In this section we discuss the algorithm for the proposed method.
As the proposed method does \underline{b}ehavior \underline{a}lignment \underline{r}eward \underline{f}unction's \underline{i}mplicit optimization, we name it $\texttt{BARFI}$.
Pseudo-code for $\texttt{BARFI}$ is presented in Algorithm \ref{apx:Alg:1}.
We will first build on some preliminaries to understand the concepts.
\subsection{Vector Jacobian Product }
Let us assume that, $x\in \Re^d, y\in \Re^m, f(x,y) \in \Re$.
Then, we know that $\partial f(x,y) / \partial x \in \Re^d,\partial f(x,y) / \partial y \in \Re^m, \partial^2 f(x,y) / \partial y \partial x \in \Re^{d\times m} $. Let us also assume that we have a vector $v\in \Re^d$, and if we need to calculate the following, we can pull the derivative outside as shown:
\begin{align*}
    \underbrace{\underbrace{v}_{\Re^d} \underbrace{\frac{\partial^2 f(x,y)}{\partial y \partial x}}_{\Re^{d \times m}}}_{\Re^m} = \underbrace{ \dfrac{\partial }{\partial y} \underbrace{\left<\underbrace{v}_{\Re^d}, \underbrace{\tfrac{\partial  f(x,y)}{\partial x}}_{\Re^d} \right>}_{\Re^1}}_{\Re^m}.
\end{align*}
As we can see, the vector Jacobian product can be broken down into differentiating a vector product but shifting the place of multiplication, in which case we assume that the gradient passes through $v$ w.r.t. $y$ and hence we don't ever have to deal with large multiplications. Also note that the outer partial w.r.t. can easily be handled by $\texttt{autodiff}$ packages. A pseudo-code is show in Algorithm \ref{apx:jvp}.

\IncMargin{1em}
    \begin{algorithm2e}[ht]
        \caption{Jacobian Vector Product}
        \label{apx:jvp} 
            \textbf{Input: } $f(x,y)\in\Re^1, x\in\Re^d, y\in\Re^m, v\in\Re^d$
            \\
            $f' \leftarrow \texttt{grad}(f(x,y), x)$
            \\
            $\operatorname{jvp} \leftarrow \texttt{grad}(f', y, \texttt{grad\_outputs} = v)$
            \\
            \textbf{Return: } $\operatorname{jvp}$	
    \end{algorithm2e}
\DecMargin{1em}   

\subsection{Neumann Series Approximation for Hessian Inverse}

Recall, that for a given real number $\beta \in \Re$, such that $0 \leq \beta < 1$, we know that the geomertric series of this has a closed form solution, i.e.,
\begin{align*}
    s &= 1 + \beta^1 + \beta^2 + \beta^3 +  \dots + \\
    &= \frac{1}{1-\beta}.
\end{align*}
Similarly, given we have a value $\alpha$ such that $\beta = 1 - \alpha$, we can write $\alpha^{-1}$ as follows:
\begin{align*}
    \frac{1}{1-\beta} &= 1 + \beta + \beta^2 + \beta^3 +  \dots + \\
    \frac{1}{1-(1-\alpha)} &= 1 + (1 - \alpha) + (1-\alpha)^2 + (1-\alpha)^3 +  \dots + \\
    \alpha^{-1} &= 1 + (1 - \alpha) + (1-\alpha)^2 + (1-\alpha)^3 +  \dots + \\
    \alpha^{-1} &= \sum_{i=0}^\infty (1-\alpha)^i.
\end{align*}
The same can be generalized for a matrix, i.e., given a matrix $\bm A \in \Re^{d\times d}$, we can write $\bm A^{-1}$ as follows:
\begin{align*}
    \bm A^{-1} &= \sum_{i=0}^\infty (\bm I- \bm A)^i.
\end{align*}
Note for the above to hold, matrix $\bm A$, where we represent $\texttt{eig}(\bm A)$ as the eigenvalues of matrix $\bm A$, we should have the following condition to hold, $ 0 < \texttt{eig}(\bm A) < 1$. Note here we would regularize $\bm A$ to ensure that all eigenvalues are positive, and then we can always scale the matrix $\bm A$, by its biggest eigenvalue to ensure that the above condition holds. Let say $\eta = 1/ \max \texttt{eig}(\bm A)$. Then we can write the following:
\begin{align*}
    \bm A^{-1} 
    &= \frac{\eta}{\eta} \bm A^{-1} \\
    &= \eta (\eta \bm A)^{-1}\\
    &= \eta \sum_{i=0}^\infty (\bm I - \eta \bm A)^i.\\
\end{align*}
As $\eta \bm A$ would always satisfy the above condition.

\subsection{Neumann Approximation for Hessian Vector Product}
Given we have seen how we can approximate the Inverse of a matrix without relying $O(d^3)$ operations, through Neumann approximation, lets look what needs to be done for our updates. Recall that the update $\phi, \varphi$ \eqref{eqn:phiupdate} and \eqref{eqn:varphiupdate} were,  

\begin{align}
    \frac{\partial J(\theta(\phi, \varphi))}{\partial \phi} &=     - 
    % \underbrace{\frac{\partial J(\theta(\phi, \varphi))}{\partial \theta(\phi, \varphi)}}_{\bm c}
    \underbrace{\frac{\partial J(\theta(\phi, \varphi))}{\partial \theta(\phi, \varphi)}}_{v}
    \Bigg(  \underbrace{\frac{\partial \Delta(\theta(\phi, \varphi), \phi, \varphi)}{\partial \theta(\phi, \varphi)} }_{\bm H} \Bigg)^{-1} \underbrace{\frac{\partial \Delta(\theta(\phi, \varphi), \phi, \varphi)}{\partial \phi}}_{\bm A}
\end{align}
and

\begin{align}
    \frac{\partial \big(J(\theta(\phi, \varphi)) - \frac{1}{2}\lVert\gamma_\varphi \rVert^2 \big) }{\partial \varphi} &=     - 
    % \underbrace{\frac{\partial J(\theta(\phi, \varphi))}{\partial \theta(\phi, \varphi)} }_{\bm c}
    \underbrace{\frac{\partial J(\theta(\phi, \varphi))}{\partial \theta(\phi, \varphi)}}_{v}
    \Bigg( \underbrace{\frac{\partial \Delta(\theta(\phi, \varphi), \phi, \varphi)}{\partial \theta(\phi, \varphi)}}_{\bm H}  \Bigg)^{-1}    \underbrace{\frac{\partial \Delta(\theta(\phi, \varphi), \phi, \varphi)}{\partial \varphi}}_{\bm B} - \frac{\partial \gamma_\varphi}{\partial \varphi}.  
\end{align}
Let us look closely at the update for $\phi$ and we can generalize the updates easily for the case of $\varphi$.
\begin{align*}
\frac{\partial J(\theta(\phi, \varphi))}{\partial \phi} &=  - v \bm H^{-1} \bm A
\end{align*}
We first look at how we can approximate the value of $v\bm H^{-1}$ efficiently, as we can always make use of the Jacobian Vector product later to get $(v\bm H^{-1})\bm A$, as  $v\bm H^{-1}$ becomes a vector. 
Let us assume we wish to run the Neumann approximation up to $n$ steps, i.e., we want to approximate $\bm H^{-1}$ up to $n$ order Neumann expansion, 
\begin{align}
    \bm \eta (\eta \bm H^{-1}) &\approx \eta\sum_{i=0}^{n} (I - \eta\bm H)^{i} \label{eqn:inverseapprox}
\end{align}

Here we are assuming that the outer optimization for update \eqref{eqn:PG} is for the function $J(\theta(\phi, \varphi))$ and the inner optimization which is represented by the update \eqref{eqn:pgparam} is $f(\theta(\phi, \varphi), \phi, \varphi)$, i.e.,
\begin{align*}
    \Delta(\theta, r_p) &= \frac{\partial J(\theta(\phi, \varphi))}{\partial \theta}, \quad  \Delta(\theta, \phi, \varphi) = \frac{\partial f(\theta(\phi, \varphi), \phi, \varphi)}{\partial \theta}.
\end{align*}
The most common form in which $f(;B)$ is usually defined is the following:
\begin{align*}
    f(\theta, \phi, \varphi;B) \coloneqq \frac{1}{|B|} \sum_{\tau \in B}\left[ \sum_{t=0}^T\log(\pi_\theta(S^\tau_t, A^\tau_t)) \sum_{j=t}^T \gamma_\varphi^{j-t} r_\phi(S^\tau_j, A^\tau_j) \right].
\end{align*}
Similarly this can be defined for $J$, except making use of $r_p$ and problem defined $\gamma$: 
\begin{align*}
    J(\theta;B) \coloneqq \frac{1}{|B|} \sum_{\tau \in B}\left[ \sum_{t=0}^T\log(\pi_\theta(S^\tau_t, A^\tau_t)) \sum_{j=t}^T \gamma^{j-t} r_p(S^\tau_j, A^\tau_j) \right].
\end{align*}
\IncMargin{1em}
    \begin{algorithm2e}[h]
        \caption{Vector Hessian Inverse Product for \eqref{eqn:phiupdate} i.e., $v \bm H^{-1}$}
        \label{apx:hvp} 
            \textbf{Input: } $\theta, \phi, \varphi, J, f, n, \eta, \mathcal D_{\text{off}}, \mathcal D_{\text{on}}$
            \\
            $v \leftarrow \texttt{grad}(J(\theta; \mathcal D_{\text{on}}), \theta)$
            \\
            $v' \leftarrow \eta \times \texttt{grad}(f(\theta, \phi,\varphi; \mathcal D_{\text{off}}), \theta) $
            \\
            \textbf{Let: } $v_0 \leftarrow v, p_0 \leftarrow v$
            \\
            \For{ $\texttt{i} \in [0, n)$} 
		    {
	        $v_{i+1} \leftarrow v_i - \texttt{grad}(v', \theta, \texttt{grad\_outputs} = v_i)$
	        \\
                $p_{i+1} \leftarrow p_i + v_{i+1}$
                }
            \textbf{Return: } $\eta p_{n}$	\tcp*{Approximation of $v\bm H^{-1}$ as in \eqref{eqn:inverseapprox}}
    \end{algorithm2e}
\DecMargin{1em}   

Finally, once we have $v\bm H^{-1}$, we can use the Vector Jacobian Product to calculate $(v \bm H^{-1}) \bm A$ as described in Algorithm \ref{apx:alg:phiupdate}:

\IncMargin{1em}
    \begin{algorithm2e}[h]
        \caption{ Update for $\phi$, i.e. \eqref{eqn:phiupdate} i.e., $v \bm H^{-1} \bm A$}
        \label{apx:alg:phiupdate} 
            \textbf{Input: } $\theta, \phi, \varphi,  J, f, n, \eta, \mathcal D_{\text{off}}, \mathcal D_{\text{on}}$
            \\
            $v \leftarrow $ Algorithm \ref{apx:hvp} ($\theta, \phi, \varphi, J, f, n, \eta, \mathcal D_{\text{off}}, \mathcal D_{\text{on}}$)
            \\
            $v' \leftarrow  \texttt{grad}(f(\theta, \phi,\varphi; \mathcal D_{\text{off}}), \theta) $
            \\
            $\Delta_\phi \leftarrow \texttt{grad}(v', \phi, \texttt{grad\_outputs} = v )$
            \\
            \textbf{Return } $\Delta_\phi$
            
    \end{algorithm2e}
\DecMargin{1em}   

We can similarly derive updates for $\varphi$.
\IncMargin{1em}
    \begin{algorithm2e}[h]
        \caption{ Update for $\phi$, i.e. \eqref{eqn:varphiupdate} i.e., $v \bm H^{-1} \bm B$}
        \label{apx:alg:varphiupdate} 
            \textbf{Input: } $\theta, \phi, \varphi,  J, f, n, \eta, \mathcal D_{\text{off}}, \mathcal D_{\text{on}}$
            \\
            $v \leftarrow $ Algorithm \ref{apx:hvp} ($\theta, \phi, \varphi, J, f, n, \eta, \mathcal D_{\text{off}}, \mathcal D_{\text{on}}$)
            \\
            $v' \leftarrow  \texttt{grad}(f(\theta, \phi,\varphi),\mathcal D_{\text{off}}), \theta) $
            \\
            $\Delta_\varphi \leftarrow \texttt{grad}(v', \varphi, \texttt{grad\_outputs} = v )$
            \\
            \textbf{Return } $\Delta_\varphi$
    \end{algorithm2e}
\DecMargin{1em}  
Note we are not including the different forms of regularizers over here to reduce clutter, but adding them is simple. 
% \dhawal{Talk about how to write info about different optimizers}
% \dhawal{Should I include the different kinds of regularizers to the text}
\subsection{Pseudo Code (Algorithm \ref{apx:Alg:1})}
\IncMargin{1em}
	\begin{algorithm2e}[ht]
		\caption{$\texttt{BARFI}$: Behavior Alignment Reward Function's Implicit optimization}
		\label{apx:Alg:1} 
		\textbf{Input:} $J, f, \alpha_\theta, \alpha_\phi, \alpha_\varphi, \eta, n, \delta, \texttt{optim}, E, N_i, N_0, $
		\\
		\textbf{Initialize:} $\pi_\theta, r_\phi, \gamma_\varphi$
            \\
            \textbf{Initialize: } $\texttt{optim}_\theta \leftarrow \texttt{optim}(\alpha_\theta), \texttt{optim}_\phi \leftarrow \texttt{optim} (\alpha_\phi), \texttt{optim}_\varphi \leftarrow \texttt{optim}(\alpha_\varphi)$
		\\
		$\mathcal D_{\text{off}} \leftarrow [\,]$
		% \\
		% $\tilde \pi \leftarrow \pi_\theta $
            \\
		% \While{ $r_\phi$ and $\gamma_\varphi$ converges }
            \nonl \textcolor[rgb]{0.5,0.5,0.5}{\# Collect a batch of data for warmup period}\\
            \For{ $e \in [1, N_0)$ }
            {
                Generate $\tau_e$ using $\pi_\theta$\\
                Append $\tau_e$ to $\mathcal D_{\text{off}}$\\
            }
            \nonl \textcolor[rgb]{0.5,0.5,0.5}{\# Initial training steps using warmup data}\\
            \For {$i \in [0,N_i + N_0)$ }
            {
                Sample a batch of trajectories $B$ from $\mathcal D_{\text{off}}$\\
                \nonl \textcolor[rgb]{0.5,0.5,0.5}{\# Update policy}\\
                $\theta \leftarrow \theta + \texttt{optim}_\theta ( \texttt{grad}(f(\theta, \phi,\varphi; B), \theta))$ \\
                % Update $\theta$ using trajectories $B$ and update rule $\Delta(\theta, \phi, \gamma)$ in \eqref{eqn:pgparam}\\
            }
            \nonl \textcolor[rgb]{0.5,0.5,0.5}{\# Start reward alignment}\\
            \For{ $e \in [N_0, E)$ }
		{   

                \nonl \textcolor[rgb]{0.5,0.5,0.5}{\# Collect a batch of on-policy data}\\
                $\mathcal D_{\text{on}} \leftarrow [\,]$\\
                \For{ $j \in [0, \delta)$}{
                    Generate trajectory $\tau_{e+j}$ using $\pi_\theta$ and append in $\mathcal D_{\text{on}}$\\
                }
                $e \leftarrow e + \delta$\\
                \nonl \textcolor[rgb]{0.5,0.5,0.5}{\# Update $r_\phi$ and $\gamma_\varphi$}\\
                $\Delta_\phi \leftarrow$ Algorithm \ref{apx:alg:phiupdate}($\theta, \phi, \varphi,  J, f, n, \eta, \mathcal D_{\text{off}}, \mathcal D_{\text{on}}$)\\
                $\Delta_\varphi\leftarrow$ Algorithm \ref{apx:alg:varphiupdate}($\theta, \phi, \varphi,  J, f, n, \eta, \mathcal D_{\text{off}}, \mathcal D_{\text{on}}$)\\
                $\phi \leftarrow \phi + \texttt{optim}_\phi(\Delta_\phi)$\\
                $\varphi \leftarrow \varphi + \texttt{optim}_\varphi(\Delta_\varphi)$\\
                $\mathcal D_{\text{off}} \leftarrow \mathcal D_{\text{off}} + \mathcal D_{\text{on}}$\\
                \nonl \textcolor[rgb]{0.5,0.5,0.5}{\# Learn policy for new reward function, initializing from the last}\\
                \For {$i \in [0,N_i)$ }
                    {
                        Sample a batch of trajectories $B$ from $\mathcal D_{\text{off}}$
        		    \\
                         \nonl \textcolor[rgb]{0.5,0.5,0.5}{\# Update policy}\\
                        $\theta \leftarrow \theta + \texttt{optim}_\theta ( \texttt{grad}(f(\theta, \phi,\varphi; B), \theta))$ \\
                        % Update $\theta$ using trajectories $B$ and update rule $\Delta(\theta, \phi, \gamma)$ in \eqref{eqn:pgparam}\\
                    }   
            }  
    \end{algorithm2e}
\DecMargin{1em}

% % \uIf{if condition}{
  % %   something if \;
  % % }
  % % \uElseIf{elseif condition}{
  % %   something elseif \;
  % % }
  % % \Else{
  % %   something else \;
  % % }
                
		%     $\pi_\theta \leftarrow \tilde \pi$ \tcp*{Re-initialize from previous solution} 
		%     \\
		%     \While{ $\pi_\theta$ converges }
		%     {
		%         Sample a batch of trajectories $B$ from $\mathcal D$
		%         \\
		%         Update $\theta$ using trajectories $B$ and update rule $\Delta(\theta, \phi, \gamma)$ in \eqref{eqn:delta1}
		%     }
		%     $\tilde \pi \leftarrow \pi_\theta$
		%     \\
		% 		\vspace{8pt}
  %   		 \nonl \textcolor[rgb]{0.5,0.5,0.5}{\# Collect a batch of on-policy data}
  %   		 \\
		%     $\mathcal D_2 \leftarrow \{\}$
		%     \\
		%     \For{ $\texttt{j} \in [1, T_1]$} 
		%     {
	 %        Generate trajectory $\tau_j$ using $\pi_\theta$
	 %        \\
	 %        Save $\tau_j$ in $\mathcal D_2$
		%     }
	 %    Update $\phi$ using \eqref{eqn:phiupdate}, $\mathcal D$, and $\mathcal D_2$ \tcp*{Update parameters for $r_\phi$}
	 %    \\
	 %    Update $\varphi$ using \eqref{eqn:varphiupdate}, $\mathcal D$, and $\mathcal D_2$ \tcp*{Update parameters for $\gamma_\varphi$}
		% \\
		% $\mathcal D \leftarrow \mathcal D \cup \mathcal D_2$
		
Lines 8--10 and 21--23 of Algorithm \ref{apx:Alg:1} represent the inner optimization process, and the outer optimization process if from lines 16-17. Lines 8--10 is the initial step of updates to converge to the current values of $\phi, \varphi$, and from there onwards after each update of outer optimization, we consequently update the policy in (21--23).  The flow of the algorithm is show in Figure \ref{apx:fig:flow}.
\begin{figure}
    \centering
    \includegraphics[width = 0.8\textwidth]{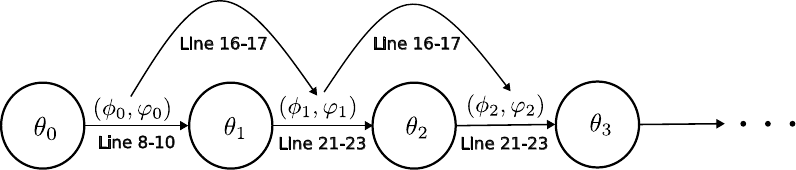}
    % \adjincludegraphics[valign=m, width=0.25\textwidth]{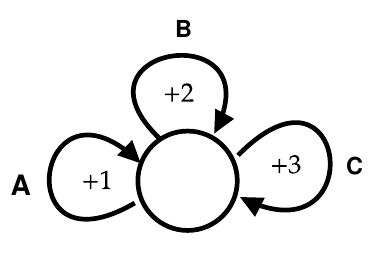}
    % \adjincludegraphics[ width=0.35\textwidth, valign=m]{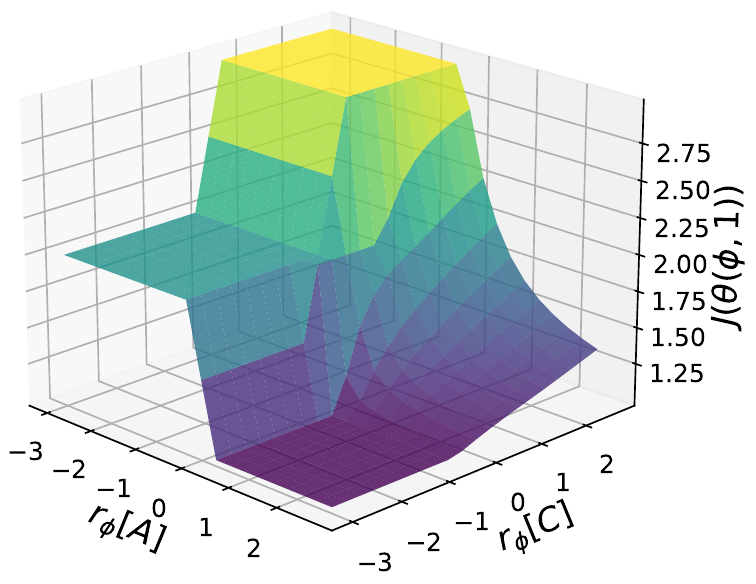}
    % \adjincludegraphics[valign=m, width=0.35\textwidth]{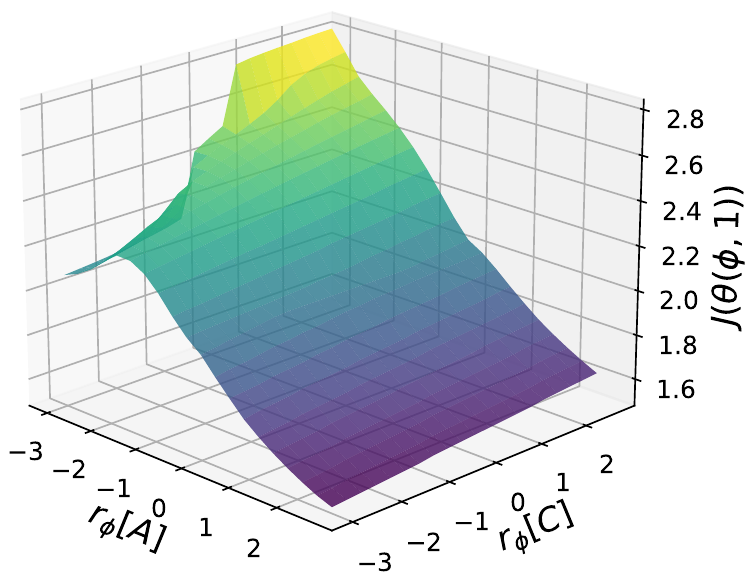}
    \caption{\textbf{Algorithm Flow: } The change in different parameters}
    \label{apx:fig:flow}
\end{figure}
% \dhawal{Make figure text size same as paper text size}
% \todo{udpate the arrow to say that they update the reward parameters, andgamma, and in the brackets mention the line numbers. yas}

% Line $5$ and $7$ initiate the outer and inner optimization process, respectively.
%
As discussed in Section \ref{sec:smooth}, using regularizers in $\Delta(\theta, \phi, \varphi)$ smoothens the objective $J(\theta(\phi, \varphi))$ with respect to $\phi$ and $\varphi$.
This is helpful as gradual changes in $r_\phi$ an $\gamma_\varphi$ can result in gradually changes in the fixed point for the inner optimization.
Therefore, for computational efficiency, we initialize the policy parameters from the fixed-point of the previous inner-optimization procedure such that the inner-optimization process may start close to the new fixed-point. 

In lines 8--10, the inner optimization for the policy parameters $\theta$ are performed till (approximate) convergence.
%
% For our experiments we optimize $\theta$ until a running mean of $\Delta(\theta, \phi, \varphi)$ becomes lesser than $1e-4$.
%
Note that only trajectories from past interactions are used and no new-trajectories are sampled for the inner optimization.

In Lines 13--14, a new batch $\mathcal D_\text{on}$ of data is sampled using the policy returned by the inner-optimization process.
This data is used to compute $\partial J(\theta(\phi,\varphi))/\partial \theta(\phi,\varphi)$.
Existing data $\mathcal D_\text{off}$ that was used in the inner-optimization process is then used to compute $\partial \theta(\phi, \varphi)/\partial \phi$ and $\partial \theta(\phi, \varphi)/\partial \varphi$.
Using these in \eqref{eqn:phiupdate} and \eqref{eqn:varphiupdate}, the parameters for $r_\phi$ and $\gamma_\varphi$ are updated in Lines $16$ and $17$, respectively.
%

% Update in \eqref{eqn:phiupdate} and \eqref{eqn:varphiupdate} require estimates of matrices $\bm H$ and $\bm A$, which for high-dimensions might be impractical to compute exactly.
% %
% Therefore, based on the diagonal Gauss-Newton approximation,  we approximate the Hessian as $\bm H \approx \texttt{diag}({ \bm h})$, where
% %
%  \begin{align}
%     \bm  h &=  \mathbb{E}_{\mathcal D}\left[ \sum_{t=0}^T \frac{\partial \ln \pi_\theta(S_t, A_t)}{\partial \theta} \odot \frac{\partial \ln \pi_\theta(S_t, A_t)}{\partial \theta}  \left(\sum_{j=t}^T  \gamma_\varphi^{j-t}  r_\phi(S_j, A_j)\right)\right] - \lambda,
%     \end{align}
% and $\odot$ represents Hadamard product, and $\texttt{diag}$ is an operation to convert a $\mathbb R^d$ vector into a diagonal matrix $\mathbb R^{d \times d}$.
% %
% % {\color{red} Note the absolute value around $\nu(s,a)$. This is done to ensure Hessian approximation is a PSD matrix always. }
% %
% %
% Similarly, we also
% use diagonal approximation for $\bm A \approx  \texttt{diag}({\bm a})$, where
% %
%     \begin{align}
%         \bm a &=  \mathbb{E}_{\mathcal D}\left[ \sum_{t=0}^T \frac{\partial \ln \pi_\theta(S_t, A_t)}{\partial \theta} \odot
%     \left(\sum_{j=t}^T \gamma_\varphi^{j-t} \frac{\partial r_\phi(S_j, A_j)}{\partial \phi}\right) \right].
%     \end{align}
%     %

Finally, the new data $\mathcal D_\text{on}$ is merged into  the existing data $\mathcal D_\text{off}$ and the entire process continues.

\subsection{Note on Approximation}
 % {\bf Limitation}: 
 An important limitation of the methods discussed above is that $\theta(\phi, \varphi)$ is considered such that $\Delta(\theta(\phi, \varphi), \phi, \varphi) =  0$, i.e., the $\texttt{Alg}$ is run to convergence. 
 % $ = 0$ 
 % exactly \bcs{$\leftarrow$ This sentence is confusing and needs to be clarified}. 
 % which n actuality can take an infinite amount of time (as running $\texttt{Alg}$ to convergence can take infinite time).
 In practice, we only execute $\texttt{Alg}$ for a predetermined number of update steps that need not result in convergence to an optimum \textit{exactly}. 
 % before proceeding with reward alignment and policy optimization, in a repeating sequence.
However, the impact of this approximation can be bounded by assuming convergence to an $\epsilon$-neighborhood of the optima~\citep{rajeswaran2019meta}. 
Furthermore, due to smoothness in the functional space, slight changes to $\phi$ and $\varphi$ should result in slight shifts in the optimum $\theta(\phi, \varphi)$. 
 % This continuity allows us to expedite the optimization process by initializing the solution for the next set of policy parameters using the final parameters obtained at the previous fixed point of $(\phi, \varphi)$. 
 % \bcs{
 The continuity property allows for improvements in the optimization process: it suffices to initialize the parameters of each inner-loop optimization problem with the final parameters of the approximate fixed point solution, $(\phi, \varphi)$, identified in the previous iteration of the inner loop.
 % }
 The complete resulting algorithm is presented in the appendix as Algorithm \ref{apx:Alg:1}.
\subsection{Path-wise Bi-level Optimization}\label{apx:path-wise}

An alternative approach for computing the term \textbf{(b)} in \eqref{eqn:deltachain1} is possible. The formulation of $\texttt{BARFI}$ described above, based on implicit bi-level optimization, is agnostic to the optimization path taken by $\texttt{Alg}$. For the sake of completeness, let us also consider a version of $\texttt{BARFI}$ that does take into account the path followed by the inner optimization loop. This is advantageous because it allows us to eliminate the need for the convergence criteria \eqref{eqn:fixedpointcriteria}. We call this variant  $\texttt{BARFI unrolled}$. The main difference, in this case, is that when computing the term \textbf{(b)} in~\eqref{eqn:deltachain1}, we now consider each inner update step until the point $\theta(\phi, \varphi)$ is reached---where the sequence of steps depends on the specific $\texttt{Alg}$ used for the inner updates. Details are deferred to Appendix \ref{apx:algorithm}. Notice that this approach results in a path-wise optimization process that can be more demanding in terms of computation and memory. We further discuss this issue, and demonstrate the efficacy of this alternative approach, in the empirical analyses section (Section \ref{sec:implicit_vs_pathwise}).

%%%%%%%%%%%%%%%%%%%%%%%%%%%%%%%%%%%%%%%%%%%%%%%%%%%%%%%%%%%%

\section{Smoothing the objective}\label{sec:smooth}
To understand why $J(\theta(\phi, \varphi))$ might be ill-conditioned is to note that, often a small perturbation in the reward function doesn't necessarily lead to a change in the corresponding optimal policy. This can lead lack of gradient directions in the neighborhood of $\phi, \varphi$ for gradient methods to be effective. This issue can be addressed by employing common regularization techniques like L2 regularization of the policy parameters or entropy regularization for the policy\footnote{This regularization is performed so as to avoid a noninvertible Hessian as we had discussed in Section \ref{sec:learning}}. 
% We provide a simple example in Figure \ref{fig:illcondition}. 
We discuss two ways to regularize the objective in the upcoming sections. 
% of the above issue and the details of different regularizers in Appendix \ref{sec:smooth}. 
% \yash{this can be moved to the appendix}

\subsection{L2 Regularization}
\label{sec:l2_smooth}
% \yash{Mention this in one line and move this subsection to the appendix}
%
To understand how severely ill-conditioned $J(\theta(\phi, \varphi))$ can be, notice that a small perturbation in the reward function often does not change the corresponding optimal policies or the outcome of a policy optimization algorithm $\texttt{Alg}$.
Therefore, if the parameters of the behavior alignment reward are perturbed from $\phi$ to $\phi'$, it may often be that $J(\theta(\phi, \varphi)) = J(\theta(\phi', \varphi))$ and this limits any gradient based optimization for $\phi$ as $\partial J(\theta(\phi,\varphi))/\partial \phi$ is $0$. 
Similarly, minor perturbations in $\varphi$ may result in no change in $J(\theta(\phi, \varphi))$ either.

Fortunately, there exists a remarkably simple solution: incorporate regularization for the \textit{policy parameters} $\theta$ in objective for $\texttt{Alg}$ in the inner-level optimization.
For example, the optimal policy for the following regularized objective $\mathbb E_{\pi_\theta} [\sum_{t=0}^T \gamma_\varphi^t r_\phi(S_t, A_t)] - \frac{\lambda}{2} \lVert\theta\rVert^2$
% then the optimal policy $\argmax_{\theta \in \Theta} F(\theta, \gamma, \varphi)$ need not change when $\theta$ is perturbed.
%
% However, the optimal policy $\argmax_{\theta \in \Theta} F(\theta, \gamma, \varphi) + \lVert \theta \rVert^2$
%
varies smoothly to trade-off between the regularization value of $\theta$ and the magnitude of the performance characterized by $(r_\phi, \gamma_\varphi)$, which changes with the values of $r_\phi$ and $\gamma_\varphi$.
See Figure \ref{fig:illcondition} for an example with L2 regularization.
%
% Other regularizations could also be used; we discuss entropy-based regularization in Appendix \ref{apx:entropy}.
%
\begin{figure}


    \centering
    \adjincludegraphics[valign=m, width=0.25\textwidth]{images/bandit.pdf}
    \adjincludegraphics[ width=0.35\textwidth, valign=m]{images/L2_0_crop.pdf}
    \adjincludegraphics[valign=m, width=0.35\textwidth]{images/L2_1_crop.pdf}
    \caption{\textbf{(Left)} A bandit problem, where the data is collected from a policy $\beta$ that samples action $A$ mostly. \textbf{(Middle)} 
    Each point on the 3D surface corresponds to the performance of $\theta(\phi, 1)$ returned by an $\texttt{Alg}$ that uses the update rule $\Delta_\text{off}(\theta, \phi, 1)$ corresponding to the value of $r_\phi$ for actions $A$ and $C$ in  the bottom axes;  $r_\theta$ for action $B$ is set to $0$ to avoid another variable in a 3D plot.
    Notice that small perturbation in $r_\phi$ may lead to no or sudden changes in $J(\theta(\phi,1))$. \textbf{(Right)}  Performance of $\theta(\phi, 1)$ returned by an $\texttt{Alg}$ that uses the update rule $\Delta_\text{off}(\theta, \phi, 1) -  \theta$ that incorporates gradient of the L2 regularizer. 
    Vector fields in Figure \ref{fig:vectorfield} were also obtained from this setup. 
    % \yash{Move this to the appendix.}
    }
    \label{fig:illcondition}
\end{figure}

\subsection{Entropy Regularized}
\label{apx:entropy}

In Section \ref{sec:l2_smooth}, smoothing of $J(\theta(\phi, \varphi))$ was done by using L2 regularization on the policy parameters $\theta$ in the inner-optimization process.
However, alternate regularization methods can also be used.
For example, in the following we present an alternate update rule for $\theta$ based on entropy regularization,
\begin{align}
    \Delta(\theta, \phi, \varphi) \coloneqq \mathbb{E}_{\mathcal D}\left[ \sum_{t=0}^T \psi_{\theta}(S_t, A_t) \sum_{j=t}^T \gamma_\varphi^{j-t}\left( r_\phi(S_j, A_j) - \lambda \ln \pi_\theta(S_j, A_j) \right) \right]. %\label{eqn:deltaa}
\end{align}
Notice that new update rule for $\phi$ and $\varphi$ can be obtained from steps \eqref{eqn:deltachain1} to \eqref{eqn:varphiupdate} with the following $\bm A$, $\bm B$, and $\bm H$ instead, where for shorthand $\theta^* = \theta(\phi, \varphi)$,
{
\small
\begin{align}
    % \bm c &= \mathbb{E}_{\pi_{\theta^*}}\left[ \sum_{t=0}^\infty \gamma^t \psi_{\theta^*}(S_t, A_t)    \left(\sum_{j=t}^\infty  \gamma^{j-t} r(S_j, A_j)  \right) \right],
    % \\
    \bm A &= \mathbb{E}_{\mathcal D}\left[ \sum_{t=0}^T \psi_{\theta^*}(S_t, A_t)
    % \frac{\partial \ln \pi_{\theta^*}(S_t, A_t)}{\partial \theta^*}
    \left(\sum_{j=t}^T \gamma_\varphi^{j-t} \frac{r_\phi(S_j, A_j)}{\partial \phi}\right)^\top \right],
    % \label{eqn:deltaA}
    \\
    \bm B &= \mathbb{E}_{\mathcal D}\left[ \sum_{t=0}^T \psi_{\theta^*}(S_t, A_t)
    % \frac{\partial \ln \pi_{\theta^*}(S_t, A_t)}    {\partial \theta^*}
    \left(\sum_{j=t}^T \frac{\partial \gamma_\varphi^{j-t}}{\partial \varphi} \left( r_\phi(S_j, A_j) - \lambda \ln \pi_{\theta^*}(S_j, A_j) \right) \right) \right], 
    %\label{eqn:deltaB}
    \\
    \bm H &= \mathbb{E}_{\mathcal D}\left[ \sum_{t=0}^T \frac{\partial \psi_{\theta^*}(S_t, A_t)}{\partial \theta^*} \left(\sum_{j=t}^T  \gamma_\varphi^{j-t} \left( r_\phi(S_j, A_j)  - \lambda \ln \pi_{\theta^*}(S_j, A_j)\right)\right) - \lambda  \psi_{\theta^*}(S_t, A_t)\left(\sum_{j=t}^T  \gamma_\varphi^{j-t} \psi_{\theta^*}(S_j, A_j)^\top\right)  \right].
    % \label{eqn:deltac}
\end{align}
}

\section{Meta Learning via Implicit Gradient: Derivation}\label{sec:appendix_barfi_derivation}

The general technique of implicit gradients \citep{dontchev2009implicit,krantz2012implicit,gould2016differentiating} has been used in a vast range of applications, ranging from energy models \citep{domke2012generic,kunisch2013bilevel},
 differentiating through black-box solvers \citep{vlastelica2019differentiation}, 
few-shot learning \citep{lee2019meta,rajeswaran2019meta},
model-based RL \citep{rajeswaran2020game},
differentiable convex optimization  neural-networks layers \citep{amos2017optnet, agrawal2019differentiable}, 
to hyper-parameter optimization \citep{larsen1996design,bengio2000gradient, do2007efficient,lorraine2020optimizing}.
%
% See the works by  \citet{dontchev2009implicit}, \citet{krantz2012implicit}, and \citet{gould2016differentiating} for a more detailed review.
%
In this work, we show how implicit gradients can also be useful to efficiently leverage auxiliary rewards $r_a$ and overcome various sub-optimalities.
% , and the need for various regularizers specific for that purpose.

%
Taking total derivative in \eqref{eqn:fixedpointcriteria} with respect to $\phi$,
% \yash{Move parts of the derivation to the appendix}
%
{\small
\begin{align}
 \frac{d \Delta(\theta(\phi, \varphi), \phi, \varphi)}{d \phi} =  \frac{\partial \Delta(\theta(\phi, \varphi), \phi, \varphi)}{\partial \phi}    +  \frac{\partial \Delta(\theta(\phi, \varphi), \phi, \varphi)}{\partial \theta(\phi, \varphi)} \frac{\partial \theta(\phi, \varphi)}{\partial \phi} = 0. \label{eqn:deltatotald1}
\end{align}
}
Let us try to understand why the above is true, considering the finite difference approach for this derivative,
\begin{align*}
    \frac{d \Delta(\theta(\phi, \varphi), \phi, \varphi)}{d \phi} &= \lim_{\|d\phi\| \rightarrow 0} \frac{\Delta(\theta(\phi + d\phi, \varphi), \phi + d\phi, \varphi) - \Delta(\theta(\phi, \varphi), \phi, \varphi)}{d\phi}\\
    &= \frac{0 - 0}{d\phi} = 0,
\end{align*}
    
$\Delta(\theta(\phi + d\phi, \varphi), \phi + d\phi, \varphi) = \Delta(\theta(\phi , \varphi), \phi , \varphi) = 0$, as $\theta(\cdot,\cdot)$ defines convergence to fixed point.
% \begin{align*}
%     \frac{d \Delta(\theta(\phi, \varphi), \phi, \varphi)}{d \phi} &= \frac{0 - 0}{d\phi} = 0.
% \end{align*}

By re-arranging terms in \eqref{eqn:deltatotald1} we obtain the term (b) in \eqref{eqn:deltachain1},
    {\small
    \begin{align}
        \frac{\partial \theta(\phi, \varphi)}{\partial \phi} &= - \left(  \frac{\partial \Delta(\theta(\phi, \varphi), \phi, \varphi)}{\partial \theta(\phi, \varphi)}  \right)^{-1}    \frac{\partial \Delta(\theta(\phi, \varphi), \phi, \varphi)}{\partial \phi}.  \label{eqn:deltainv1}
    \end{align}
    }
By combining \eqref{eqn:deltainv1}  with \eqref{eqn:deltachain1} we obtain the desired gradient expression for $\phi$, 
{\small
\begin{align*}
    \frac{\partial J(\theta(\phi, \varphi))}{\partial \phi} &=     - 
    % \underbrace{\frac{\partial J(\theta(\phi, \varphi))}{\partial \theta(\phi, \varphi)}}_{\bm c}
    \frac{\partial J(\theta(\phi, \varphi))}{\partial \theta(\phi, \varphi)}
    \Bigg(  \underbrace{\frac{\partial \Delta(\theta(\phi, \varphi), \phi, \varphi)}{\partial \theta(\phi, \varphi)} }_{\bm H} \Bigg)^{-1} \underbrace{\frac{\partial \Delta(\theta(\phi, \varphi), \phi, \varphi)}{\partial \phi}}_{\bm A},
\end{align*}
}
and following similar steps, it can be observed that the gradient expression for $\varphi$,
{\small
\begin{align*}
    \frac{\partial \big(J(\theta(\phi, \varphi)) - \frac{1}{2}\lVert\gamma_\varphi \rVert^2 \big) }{\partial \varphi} &=     - 
    % \underbrace{\frac{\partial J(\theta(\phi, \varphi))}{\partial \theta(\phi, \varphi)} }_{\bm c}
    \frac{\partial J(\theta(\phi, \varphi))}{\partial \theta(\phi, \varphi)}
    \Bigg( \underbrace{\frac{\partial \Delta(\theta(\phi, \varphi), \phi, \varphi)}{\partial \theta(\phi, \varphi)}}_{\bm H}  \Bigg)^{-1}    \underbrace{\frac{\partial \Delta(\theta(\phi, \varphi), \phi, \varphi)}{\partial \varphi}}_{\bm B} - \frac{\partial \gamma_\varphi}{\partial \varphi},  
    \label{eqn:varphiupdate}
\end{align*}
}
where using $\theta^*$ as a shorthand for $\theta(\phi, \varphi)$ the terms $\bm A, \bm B$ and $\bm H$ can be expressed as
{
\small
\begin{align}
    % \bm c = \mathbb{E}_{\pi_{\theta^*}}\left[ \sum_{t=0}^T \gamma^t \psi_{\theta^*}(S_t, A_t)
    % \left(\sum_{j=t}^T  \gamma^{j-t} r(S_j, A_j)  \right) \right],
    % \\
    \bm A = \mathbb{E}_{\mathcal D}\left[ \sum_{t=0}^T \psi_{\theta^*}(S_t, A_t)
    \left(\sum_{j=t}^T \gamma_\varphi^{j-t} \frac{\partial r_\phi(S_j, A_j)}{\partial \phi}\right)^\top \right],
    % \label{eqn:deltaA}
    \quad
    \bm B = \mathbb{E}_{\mathcal D}\left[ \sum_{t=0}^T \psi_{\theta^*}(S_t, A_t)
    \left(\sum_{j=t}^T \frac{\partial \gamma_\varphi^{j-t}}{\partial \varphi}  r_\phi(S_j, A_j) \right) \right], 
    %\label{eqn:deltaB}
    \\
    \bm H = \mathbb{E}_{\mathcal D}\left[ \sum_{t=0}^T \frac{\partial \psi_{\theta^*}(S_t, A_t)}{\partial \theta^*} \left(\sum_{j=t}^T  \gamma_\varphi^{j-t}  r_\phi(S_j, A_j)\right)\right] - \lambda. \hspace{100pt}
    % \label{eqn:deltac}
\end{align}
}
These provide the necessary expressions for updating $\phi$ and $\varphi$ in the outer loop. 
As $\bm A$ involves an outer product and $\bm H$ involves second derivatives, computing them \textit{exactly} might not be practical when dealing with high-dimensions. 
Standard approximation techniques like conjugate-gradients or Neumann series can thus be used to make it more tractable \citep{lorraine2020optimizing}.
%
% In our experiments, we found that the simple method of using the diagonal approximation suffice for our purpose.
In our experiments, we made use of the Neumann approximation to the Hessian Inverse vector product ($\bm A \bm H^{-1}$), which requires the same magnitude of resources as the baseline policy gradient methods that we build on top off. 

\textbf{Algorithm: }
% \yash{Move to appendix}
Being based on implicit gradients, we call our method $\texttt{BARFI}$, shorthand for \textit{behavior alignment reward function's implicit} optimization.
Overall, $\texttt{BARFI}$ iteratively solves the bi-level optimization specified in \eqref{eqn:gammaregobj} by alternating between using \eqref{eqn:pgparam} till approximate converge of $\texttt{Alg}$ to $\theta(\phi, \varphi)$ and then updating $r_\phi$ and $\gamma_\varphi$.
Importantly, being based on \eqref{eqn:pgparam} for sample efficiency, $\texttt{Alg}$ leverages only the past samples and does \textit{not} sample any new trajectories for the inner level optimization.
Further, due to policy regularization which smoothens the objective as discussed in \ref{sec:smooth}, updates in $r_\phi$ and $\gamma_\varphi$ changes the policy resulting from $\texttt{Alg}$ gradually.
Therefore, for compute efficiency, we start $\texttt{Alg}$ from the policy obtained from the previous inner optimization, such that it is in proximity of the new fixed point.
This allows $\texttt{BARFI}$ to be both sample and compute efficient while solving the bi-level optimization iteratively online.
Pseudo-code for $\texttt{BARFI}$ and more details on the approximation techniques can be found in Appendix \ref{apx:algorithm}.

\section{Environment \& Reward Details}\label{apx:environments}

The first environment is a \textbf{GridWorld} (GW), where the start state is in the bottom left corner and a goal state is in the top right corner. The agent receives an $r_p$ of $+100$ on reaching the goal followed by termination of the episode. The second environment is \textbf{MountainCar} (MC) \cite{sutton2018reinforcement}, wherein we make use of the sparse reward variant, wherein the agent receives a $+1$ reward on reaching on top of the hill and $0$ otherwise. The third environment is \textbf{CartPole} (CP) \citep{florian2007cartpole}. Finally, to assess the scalability we pick HalfCheetah-\texttt{v4} from \textbf{Mujoco} (MJ) suite of OpenAI Gym \citep{brockman2016openai}.

For each environment, we define two auxiliarly reward functions. For GridWorld, we define the functions: $r^1_{\texttt{aux},\texttt{GW}} \coloneqq - (s - s_{\text{goal}})^2$, which provides the negative L2 squared distance from the goal position, and $r^1_{\texttt{aux},\texttt{GW}} \coloneqq 50 \times \textbf{1}_{s \in \mathcal{S}_{\texttt{Center}}} $, which provides an additional bonus of $+50$ to the agent along the desired path to the goal state (i.e. the center states). 
% \begin{align}
%  r^1_{\texttt{aux}, \texttt{GW}}(s) &\coloneqq - (s - s_{\text{goal}})^2 & r^1_{\texttt{aux},\texttt{GW}}(s) \coloneqq 50 \times 1_{s \in \mathcal{S}_{\texttt{Center}}}   
% \end{align}
%
%
In MountainCar the state is composed of two components: the position $x$, and velocity $\texttt{v}$. The first auxiliary reward function, $r^1_{\texttt{aux}, \texttt{MC}}(s,a)\coloneqq |\texttt{v}|$, encourages a higher absolute velocity of the car, and the second, $r^1_{\texttt{aux}, \texttt{MC}}(s,a) \coloneqq \textbf 1_{\texttt{sign(v)} = a}$, encourages the direction of motion to increase the magnitude of the velocity (also knows as the \textit{energy pumping policy} \citep{ghiassian2020improving}). %, i.e.,
% \begin{align}
%  r^1_{\texttt{aux},\texttt{MC}}(s) &\coloneqq |\texttt{v}| & r^1_{\texttt{aux},\texttt{MC}} (s,a) \coloneqq \mathrm 1_{\texttt{sign(v)} = a}  
% \end{align}
% \dg{Check with notation if we want this to be a RV or a function of actual states and actions}
%
For CartPole, we consider a way to reuse knowledge from a hand crafted policy. CartPole can be solved using a Proportional Derivate (PD) controller \cite{aastrom2006pid}, hence we tune a PD controller, $\texttt{PD}^*: \mathcal{S} \rightarrow \mathcal{A}$, to solve CartPole for the max possible return. We design two auxiliary reward functions which make use of this PD controller.  %i.e., $r^{1}_{a,\texttt{CP}},$ and $r^1_{\texttt{aux}, \texttt{CP}}$. 
The first, $r^1_{\texttt{aux}, \texttt{CP}}(s,a) \coloneqq  5 \times \textbf 1_{\texttt{PD}^*(s) = a} - (1-\mathrm 1_{\texttt{PD}^*(s) = a})$,
%$r^1_{\texttt{aux}, \texttt{CP}}(s,a) \coloneqq \{  5, \text{if } \texttt{PD}^*(s) = a, \text{else }-1\}$, 
encourages the agent to match the action of the optimal PD controller, and penalizes it for not matching. The second auxiliary reward function, $r^1_{\texttt{aux},\texttt{GW}}(s,a) \coloneqq - r^1_{\texttt{aux}, \texttt{CP}}(s,a)$, encourages the agent to do the opposite. % The reward functions would look like this: 
In the case of Mujoco, the reward function provided by the environment is itself composed of multiple different functions. We explain the same and the respective auxiliary functions for this case later.
% \begin{align*}
%         r^1_{\texttt{aux}, \texttt{CP}}(s,a) &\coloneqq \{  5, \text{if } \texttt{PD}^*(s) = a, \text{else }-1\}
%         % \begin{cases}
%         % 5, \text{if } \texttt{PD}^*(s) = a\\
%         % -1, \text{else } 
%         % \end{cases}
%     & r^1_{\texttt{aux},\texttt{GW}}(s,a) \coloneqq - r^1_{\texttt{aux}, \texttt{CP}}(s,a)
% \end{align*}

We have considered several forms of information encoded as auxiliary rewards for these experiments. We have heuristic-based reward functions (i.e., $r^1_{\texttt{aux},\texttt{GW}},r^1_{\texttt{aux},\texttt{GW}}, r^1_{\texttt{aux},\texttt{MC}}$). Reward functions that encode a guess of an optimal policy (i.e., $r^1_{\texttt{aux},\texttt{MC}},r^1_{\texttt{aux},\texttt{CP}}$) and reward functions that change the optimal policy (i.e., $r^1_{\texttt{aux},\texttt{GW}},r^1_{\texttt{aux},\texttt{CP}}$). We also have rewards that only depend on states (i.e., $r^1_{\texttt{aux},\texttt{GW}}, r^1_{\texttt{aux},\texttt{GW}},r^1_{\texttt{aux},\texttt{MC}}$) as well as ones that depend on both state and actions (i.e., $r^1_{\texttt{aux},\texttt{MC}}, r^1_{\texttt{aux},\texttt{CP}},r^1_{\texttt{aux},\texttt{CP}}$).
Therefore, we can test if \texttt{BARFI} can overcome misspecified auxiliary reward functions and does not hurt performance when well-specified.

\textbf{Mujoco Environment}
In this experiment, we investigate the scalability of \texttt{BARFI} in learning control policies for high-dimensional state spaces with continuous action spaces. In HalfCheetah-\texttt{v4} the agent's task is to move forward, and it receives a reward based on its forward movement (denoted as $r_p$). Additionally, there is a small cost associated with the magnitude of torque required for action execution (denoted as $r_{\texttt{aux}}(s,a) \coloneqq c |a|_2^2$). The weighting between the main reward and the control cost is pre-defined as $c$ for this environment, and we form the reward as $\tilde r(s,a) = r_p(s,a) + r_{\texttt{aux}}(s,a)$.
We explore how an arbitrary weighting choice can cause the agent to fail in learning, while \texttt{BARFI} is still able to adapt and learn the appropriate weighting, remaining robust to possible misspecification. We consider two different weightings for the control cost: the first weighting, denoted as $r^1_{\texttt{aux},\texttt{MJ}}(s,a) \coloneqq -c|a|2^2$, uses the default setting, while the second weighting, denoted as $r^1_{\texttt{aux},\texttt{MJ}}(s,a) \coloneqq -4c|a|_2^2$, employs a scaled variant of the first weighting. Additionally, we implement the path-wise bi-level optimization variant i.e., \texttt{BARFI Unrolled}.
% , which computes the gradients of $\phi$ and $\varphi$ through the complete inner optimization loop.
In these experiments, we keep the value of $\gamma$ fixed to isolate the agent's capability to adapt and recover from an arbitrary reward weighting. We will also measure what computational and performance tradeoffs we might have to make between using the implicit version i.e., \texttt{BARFI} against, the path-wise version i.e., \texttt{BARFI Unrolled}.

\section{Details for the Empirical Results}
\subsection{Implementation Details}
In this section we will briefly describe the implementation details around the different environments that were used.

{\bf GridWorld (GW): } In the case of GridWorld we made use of the Fourier basis (of Order = 3) over the raw coordinates of agent position in the GridWorld. Details about this could be found in the $\texttt{src/utils/Basis.py}$ file.

{\bf MountainCar (MC): } For this environment, to reduce the limitation because of the function approximator we used TileCoding \cite{sutton2018reinforcement}, which offers a suitable representation for the MountainCar problem. We used $4$ Tiles and Tilings of $5$.

{\bf CartPole (CP): } For CartPole also make use of Fourier Basis of (Order = 3), with linear function approximator on top of that.

{\bf MuJoco (MJ): } For this we made use of a  neural network with 1 hidden layer of 32 nodes and ReLU activation as the function approximator over the raw observations. The output of the policy is continuous actions, hence we used a Gaussian representation, where the policy outputs the mean of the  multivariate Gaussian and we used a fixed diagonal standard deviation, fixed to $\sigma = 0.1$.

{\bf General Details: } All the outer returns are evaluated without any discounting, whereas all the inner optimizations were initialized with $\gamma_\varphi = 0.99$. Hence to do this we made $\varphi$ a single bias unit, initialized to $4.6$, and passed through a sigmoid (i.e., $\sigma(4.6) = 0.99$). 

For GW, CP and MC $r_\phi$ is defined as below
\begin{align}
    r_\phi(s,a) = \phi_1(s) + \phi_2(s) r_p + \phi_3(s)r_a
\end{align}
Wherein $\phi_1, \phi_2, \phi_3$ are scalar outputs of  a 3-headed function, in this case simply a linear layer over the states inputs. 

Whereas in the case of MJ, we have 
\begin{align}
    r_\phi(s,a) = \phi_1 +  r_p + \phi_3 r_a
\end{align}
Wherein $\phi_1$ is initialized to zero and $\phi_3$ is $1.0$ act like bias units. 

Gradient normalization was used for all the cases where neural nets were involved (i.e., MJ), and also for MJ we modified the Baseline (REINFORCE) update to subtract the running average of the performance as a baseline to get acceptable performance for the baseline method.

\subsection{Hyper-parameter Selection}

As different make use of different function approximators the range of hyper-params can vary we talk about all the above over here. 
%
%
%
% All the experiments were conducted on a personal computer with $32$ GiB of memory and an Intel Core i$7$ CPU with $12$ threads. Total runtime for all the experiments combined was less than a day.

Best-performing Parameters for different methods and environments are listed where
\begin{table}[ht!]
\centering
\caption{Hyper-parameters for GridWorld}
\label{tab:hyperparam_gw}
\begin{tabular}{lccc}
\toprule
\textbf{Hyper Parameter} & \textbf{BARFI Value} & \textbf{REINFORCE Value} & \textbf{Actor-Critic Value}\\
\midrule
$\alpha_\theta$ & $1\times10^{-3}$ & $1\times10^{-3}$ & $1\times10^{-3}$\\
$\alpha_\phi$ & $5\times10^{-3}$ &  $-$ & $-$\\
$\alpha_\varphi$ & $5\times10^{-3}$ &  $-$ & $-$\\
$\texttt{optim}$ & RMSprop & RMSprop & RMSprop \\
$\lambda_\theta$ & $0.25$ & $0.25$ & $0.25$ \\
$\lambda_\phi$ & $0.0625$  &  $-$ & $-$\\
$\lambda_\varphi$ & $4.0$ &  $-$ & $-$\\
Buffer & $1000$ & $-$ & $-$\\
Batch Size & $1$ & $1$  & $1$ \\
$\eta$ & $0.0005$ &  $-$ & $-$\\
$\delta$ & $3$ & $-$ & $-$\\
$n$ & $5$ &  $-$ & $-$\\
$N_0$ & $150$ & $-$ & $-$\\
$N_i$ & $15$ & $-$ & $-$\\
\bottomrule
\end{tabular}

\end{table}

% Mountain Car

\begin{table}[ht!]
\centering
\caption{Hyper-parameters for MountainCar}
\label{tab:hyperparam_mc}
\begin{tabular}{lccc}
\toprule
\textbf{Hyper Parameter} & \textbf{BARFI Value} & \textbf{REINFORCE Value} & \textbf{Actor-Critic Value}\\
\midrule
$\alpha_\theta$ & $0.015625$ & $0.125$ & $0.03125$\\
$\alpha_\phi$ & $0.0625$ &  $-$ & $-$\\
$\alpha_\varphi$ & $0.0625$ &  $-$ & $-$\\
% $\alpha_{\text{critic}} & $$ &  $-$ & $-$ $
$\texttt{optim}$ & RMSprop & RMSprop & RMSprop \\
$\lambda_\theta$ & $0.0$ & $0.0$ & $0.25$ \\
$\lambda_\phi$ & $0.0$  &  $-$ & $-$\\
$\lambda_\varphi$ & $0.25$ &  $-$ & $-$\\
Buffer & $50$ & $-$ & $-$\\
Batch Size & $1$ & $1$  & $1$ \\
$\eta$ & $0.001$ &  $-$ & $-$\\
$\delta$ & $3$ & $-$ & $-$\\
$n$ & $5$ &  $-$ & $-$\\
$N_0$ & $50$ & $-$ & $-$\\
$N_i$ & $15$ & $-$ & $-$\\
\bottomrule
\end{tabular}

\end{table}

\begin{table}[ht!]
\centering
\caption{Hyper-parameters for CartPole}
\label{tab:hyperparam_cp}
\begin{tabular}{lccc}
\toprule
\textbf{Hyper Parameter} & \textbf{BARFI Value} & \textbf{REINFORCE Value} & \textbf{Actor-Critic Value}\\
\midrule
$\alpha_\theta$ & $1\times10^{-3}$ & $1\times10^{-3}$ & $5\times10^{-4}$\\
$\alpha_\phi$ & $1\times10^{-3}$ &  $-$ & $-$\\
$\alpha_\varphi$ & $5\times10^{-3}$ &  $-$ & $-$\\
$\texttt{optim}$ & RMSprop & RMSprop & RMSprop \\
$\lambda_\theta$ & $1.0$ & $1.0$ & $0.0$ \\
$\lambda_\phi$ & $0.0$  &  $-$ & $-$\\
$\lambda_\varphi$ & $4.0$ &  $-$ & $-$\\
Buffer & $10000$ & $-$ & $-$\\
Batch Size & $1$ & $1$  & $1$ \\
$\eta$ & $0.0005$ &  $-$ & $-$\\
$\delta$ & $3$ & $-$ & $-$\\
$n$ & $5$ &  $-$ & $-$\\
$N_0$ & $150$ & $-$ & $-$\\
$N_i$ & $15$ & $-$ & $-$\\
\bottomrule
\end{tabular}

\end{table}

% {'actor_lr': 0.0005, 'actor_reg': 0, 'batch_size': 1, 'buffer_size': 10000, 'critic_lr': 0.005, 'optim': 'rmsprop', 'NN_basis_dim': '16', 'agent': 'ActorCritic', 'basis': 'fourier', 'delta': 3, 'env_name': 'CartPoleBestAux2', 'gamma': 0.99, 'max_episodes': 1000, 'raw_basis': False, 'seed': [1, 2, 3, 4, 5, 6, 7, 8, 9, 10, 11, 12, 13, 14, 15, 16, 17, 18, 19, 20], 'state_lr': 0.0}

% {'Neumann_alpha': 0.0001, 'Neumann_loops': 5, 'actor_lr': 0.001, 'actor_reg': 1, 'batch_size': 32, 'buffer_size': 10000, 'gamma_L2': 4, 'gamma_lr': 0.004, 'optim': 'rmsprop', 'reward_L2': 0, 'reward_lr': 0.001, 'NN_basis_dim': '16', 'agent': 'BarfiNeumann', 'approx': 'Neumann', 'basis': 'fourier', 'critic_lr': 0.0, 'delta': 3, 'env_name': 'CartPoleBestAux2', 'gamma': 0.99, 'max_episodes': 1000, 'raw_basis': False, 'seed': [1, 2, 3, 4, 5, 6, 7, 8, 9, 10, 11, 12, 13, 14, 15, 16, 17, 18, 19, 20], 'state_lr': 0.0}

% {'actor_lr': 0.001, 'actor_reg': 1, 'batch_size': 1, 'buffer_size': 10000, 'optim': 'rmsprop', 'NN_basis_dim': '16', 'agent': 'REINFORCE', 'basis': 'fourier', 'critic_lr': 0.0, 'delta': 3, 'env_name': 'CartPoleBestAux2', 'gamma': 0.99, 'max_episodes': 1000, 'raw_basis': False, 'seed': [1, 2, 3, 4, 5, 6, 7, 8, 9, 10, 11, 12, 13, 14, 15, 16, 17, 18, 19, 20], 'state_lr': 0.0}

\begin{table}[ht!]

\centering
\caption{Hyper-parameters for MuJoco}
\label{tab:hyperparam_mj}
\begin{tabular}{lccc}
\toprule
\textbf{Hyper Parameter} & \textbf{BARFI Value} & \textbf{REINFORCE Value} & \textbf{Actor-Critic Value}\\
\midrule
$\alpha_\theta$ & $7.5\times10^{-5}$ & $5\times10^{-4}$ & $2.5\times10^{-4}$\\
$\alpha_\phi$ & $2.5 \times 10^{-3}$ &  $-$ & $-$\\
$\alpha_\varphi$ & $0.0$ &  $-$ & $-$\\
$\texttt{optim}$ & Adam & Adam & Adam \\
$\lambda_\phi$ & $0.0625$  &  $-$ & $-$\\
$\lambda_\varphi$ & $0.0$ &  $-$ & $-$\\
Buffer & $50$ & $-$ & $-$\\
Batch Size & $1$ & $1$  & $1$ \\
$\eta$ & $0.0005$ &  $-$ & $-$\\
$\delta$ & $3$ & $-$ & $-$\\
$n$ & $5$ &  $-$ & $-$\\
$N_0$ & $30$ & $-$ & $-$\\
$N_i$ & $15$ & $-$ & $-$\\
\bottomrule
\end{tabular}

\end{table}

{\bf Hyperparameter Sweep} : Here we list the details about how we swept the values for different hyper-params. We used PyTorch \cite{paszke2019pytorch} for all our implementations. We usually used an optimizer between RMSProp or Adam with default parameters as provided in Pytorch. For $\alpha_\theta \in \{5\times 10^{-3},2.5\times 10^{-3}, 1\times 10^{-3}, 5\times 10^{-4}, 2.5\times 10^{-4}, 1\times 10^{-4}, 7.5 \times 10^{-5} \}$ , we use similar ranges for $\alpha_\phi, \alpha_\varphi$ (which tend to be larger). For $\lambda_\theta$ and $\lambda_\phi$, we sweeped from $[0,0.25,0.5,1.0]$ and for $\lambda_\gamma$ we sweeped from $[0,0.25,1.0, 4.0, 16.0]$. We simply list ranges for different values and later we present sensitivity curves showing that these values are usually robust for $\texttt{BARFI}$ across different methods as we can see from the tables above. $\delta \in [1,3,5]$, $n\in[1,3,5]$, $N_i\in[1,3,6,9,12,15]$, $\eta \in [1\times 10^{-3},5\times 10^{-4}, 1\times 10^{-4}] $, $N_0\in[30,50,100,150]$, buffer $\in [25, 50,100,1000]$. $\alpha$ for Tilecoding was adopted from \cite{sutton2018reinforcement} and hence similar ranges were sweeped in that case. 
Most sweeps were done with around 10 seeds, and later the parameter ranges were reduced and performed with more seeds.

%  and the same baseline learning algorithm, followed by subsequent reward and policy optimization in the proposed bi-level formulation. We preform $15$ inner optimization steps between subsequent updates of $\phi, \varphi$, and make use of the Neumann approximation \citep{lorraine2020optimizing} with five iterations.  

% Hyper-parameters for the experiments were obtained by doing a randomized search. Learning rate was searched between $[1e-1, 1e-3]$, and the $\lambda$ regularizer was searched between $[1e-0, 1e-3]$.
%

\subsection{Compute}
The computer is used for a cluster where the CPU class is Intel Xeon Gold 6240 CPU $@ 2.60$GHz. The total compute required for GW was around 3 CPU years\footnote{1 CPU year $\coloneqq$ Compute equal to running a CPU thread for a year. }, CP  also required around 3 CPU years, and MC required around 4 CPU years. For MJ we needed around 5-6 CPU years. In total we utilized around 15-16 CPU years, where we needed around 1 GB of memory per thread. 

% \subsection{Things required to make \texttt{BARFI} Work}
% \begin{itemize}
%     \item Exploding Gradients: A lot of parameters like reward and $\gamma$ were getting substantially large gradient values, and using SGD in that case was not really helpful because of over shooting, switching to a adaptive optimizer, like RMSprop was helpful, because of the normalization of the gradient using a moving average helped to get a suitable learning rate. 
%     But one more thing to take care in this case is that if there is a sudden explosion in gradients, that can cause the denominator in RMSprop to saturate or reach a very high value, and hence make the learning rate / step size for that parameter small for a very long time. 
%     \item Reward Regularization 
%     \item Negative Definite Neumann 
%     \item Proper $\alpha$  usage for Neumann Approximation 
%     \item 
    
% \end{itemize}

\section{Extra Results \& Ablations}\label{apx:ablation}

\subsection{Experiment on partially misspecified $r_\texttt{aux}$}
In these set of experiments, we consider the case where auxiliary reward signals could be helpful only in a few—possibly arbitrary—state-action pairs. In general, we anticipate that solutions in this scenario would be such that assigns weightings, allowing the agent to avoid regions where might be misspecified. Meanwhile, the agent would still make use of the places where is well specified and useful.
 
We consider another $r_\texttt{aux}$ in the GridWorld domain in which the auxiliary reward is 
  misspecified for a subset of states near the starting position. Meanwhile, it is still well-specified for states near the goal (Figure \ref{apx:fig:extra_exp} (a)). Figures \ref{apx:fig:extra_exp} (b) and (c) illustrate the learned and the weighting on, showcasing the agent's ability to depict the expected behavior described above.

\begin{figure}[h]
  \centering
  \begin{subfigure}{0.45\textwidth}\label{fig:a}
    \includegraphics[width=\linewidth]{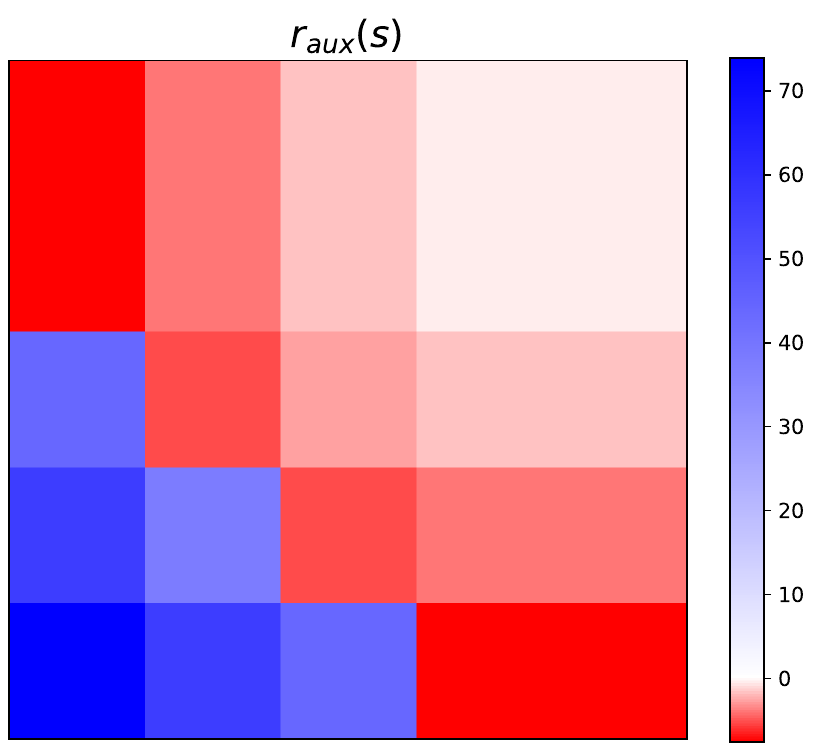} % Include your subfigure file here
    \caption{ $r_\texttt{aux} (s)$ }
  \end{subfigure}
  \begin{subfigure}{0.45\textwidth}\label{fig:b}
    \includegraphics[width=\linewidth]{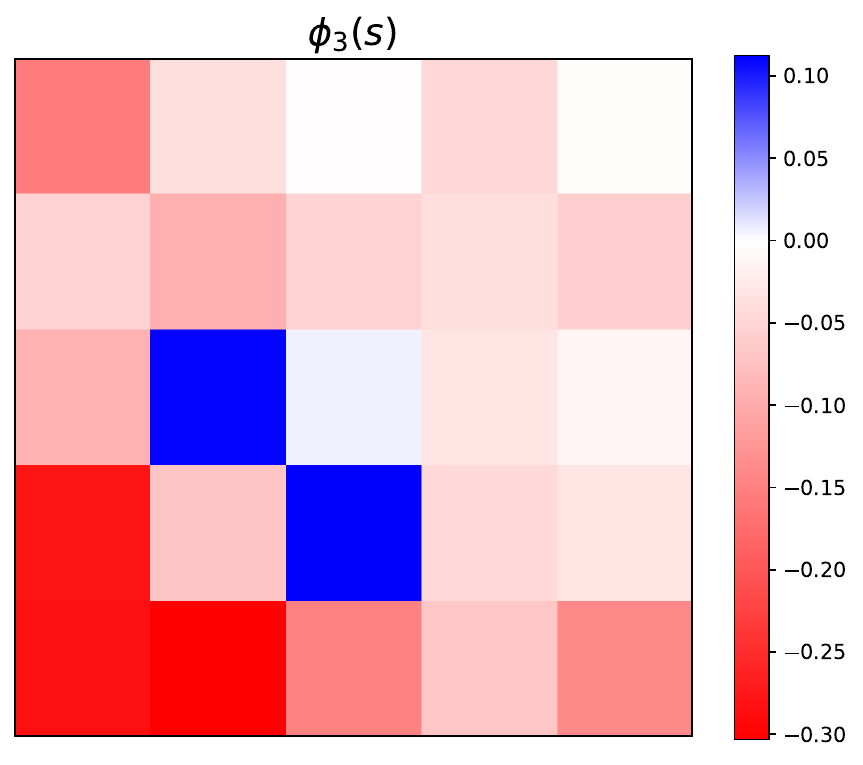} % Include your subfigure file here
    \caption{Learned $\phi_3(s)$, the weighting on $r_\texttt{aux}(s)$}
  \end{subfigure}
  \\
  \begin{subfigure}{0.45\textwidth}\label{fig:c}
    \includegraphics[width=\linewidth]{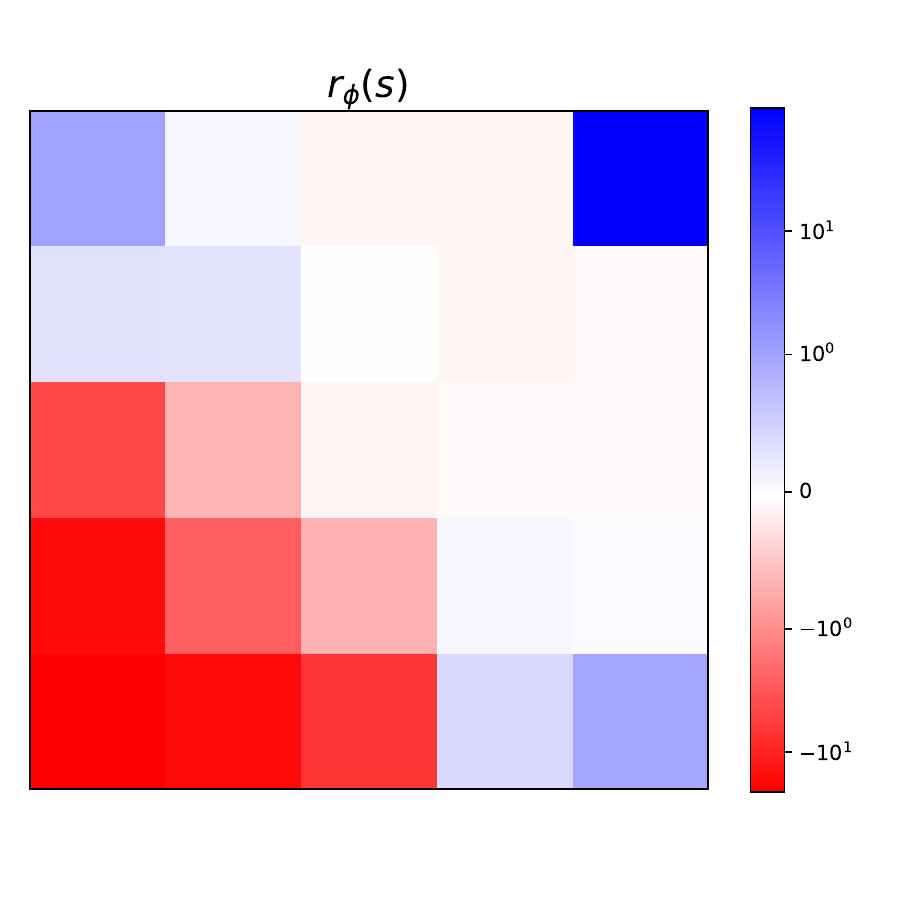} % Include your subfigure file here
    \caption{The net reward induced i.e., $r_p(s) + \phi_3(s) r_\texttt{aux}(s)$}
  \end{subfigure}
  \begin{subfigure}{0.45\textwidth}\label{fig:d}
  \centering
    \includegraphics[width=\linewidth]{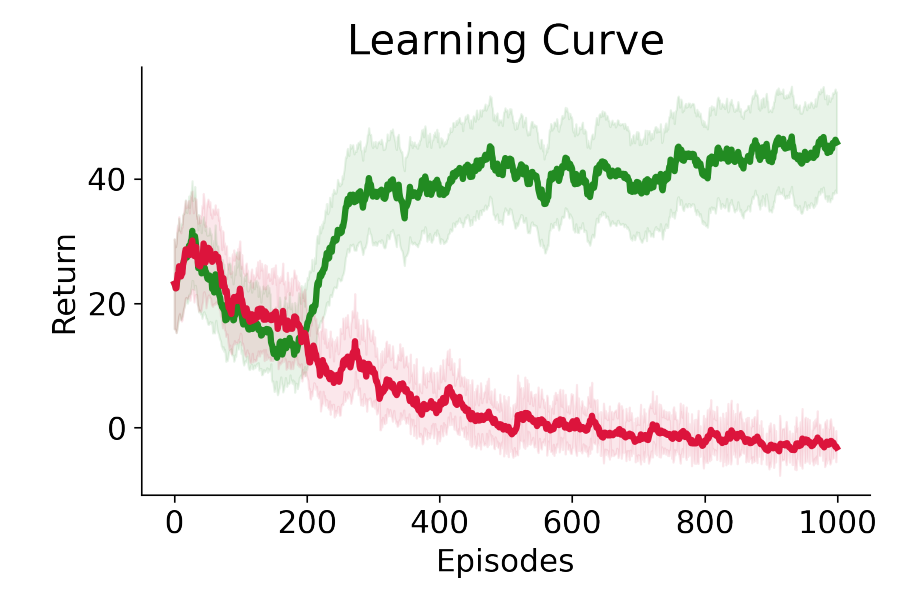} % Include your subfigure file here
    \includegraphics[width=0.75\linewidth]{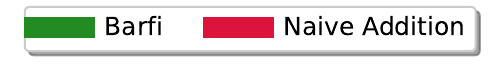} % Include your subfigure file here
    
    \caption{Learning performance}
  \end{subfigure}

  \caption{  
  40 random seeds were used to generate the plots. The starting state is at the bottom left and the goal state is at the top right corner. The primary reward $r_p$ is $+100$ when the agent reaches the goal and $0$ otherwise. \textbf{(a)} A  state-dependent $r_\texttt{aux}$ function that is partially misspecified (in the blue region $r_\texttt{aux}$ provides a value equal to the \textbf{Manhattan distance}, thereby incentivizing the agent to stay near the start), and partially well specified  (in the red region, it is the \textbf{negative Manhattan distance} and encourages movement towards the goal). \textbf{(b)} The state-dependent weighting $\phi_3(s)$ learned by BARFI negates the positive value from $r_\texttt{aux}$ near the start state.
  % and converts the negative values on the top right into positive values by assigning small positive values. 
  \textbf{(c)} The effective reward function $r_p(s) + \phi_3(s) r_\texttt{aux}(s)$ learned by BARFI. \textbf{(d)} Learning curves for BARFI, and the baseline that uses a naive addition ($ r_p(s) + r_\texttt{aux}(s)$) of the above auxiliary reward.}
  \label{apx:fig:extra_exp}
\end{figure}

\subsection{Return based on learned $r_\phi$ and $\gamma_\varphi$}
Figure \ref{apx:fig:aux_return} and Figure \ref{apx:fig:gamma} summarize the achievable return based on $r_\phi$ and the $\gamma$ learned by the agent across different domains and reward specification. We observe that REINFORCE often optimizes the naive combination of reward for sure, but that doesn't really lead to a good performance on $r_p$, whereas \texttt{BARFI} does achieve an appropriate return on $r_\phi$, but is also able to successively decay $\gamma$ as the learning progress across different domains. Particularly notice Figure \ref{apx:fig:aux_return} (a) Bottom, where REINFORCE does optimize aux return a lot, but actually fails to solve the problem, as it simply learns to loop around the center state. 

\begin{figure}
    \centering
    \includegraphics[width=\textwidth]{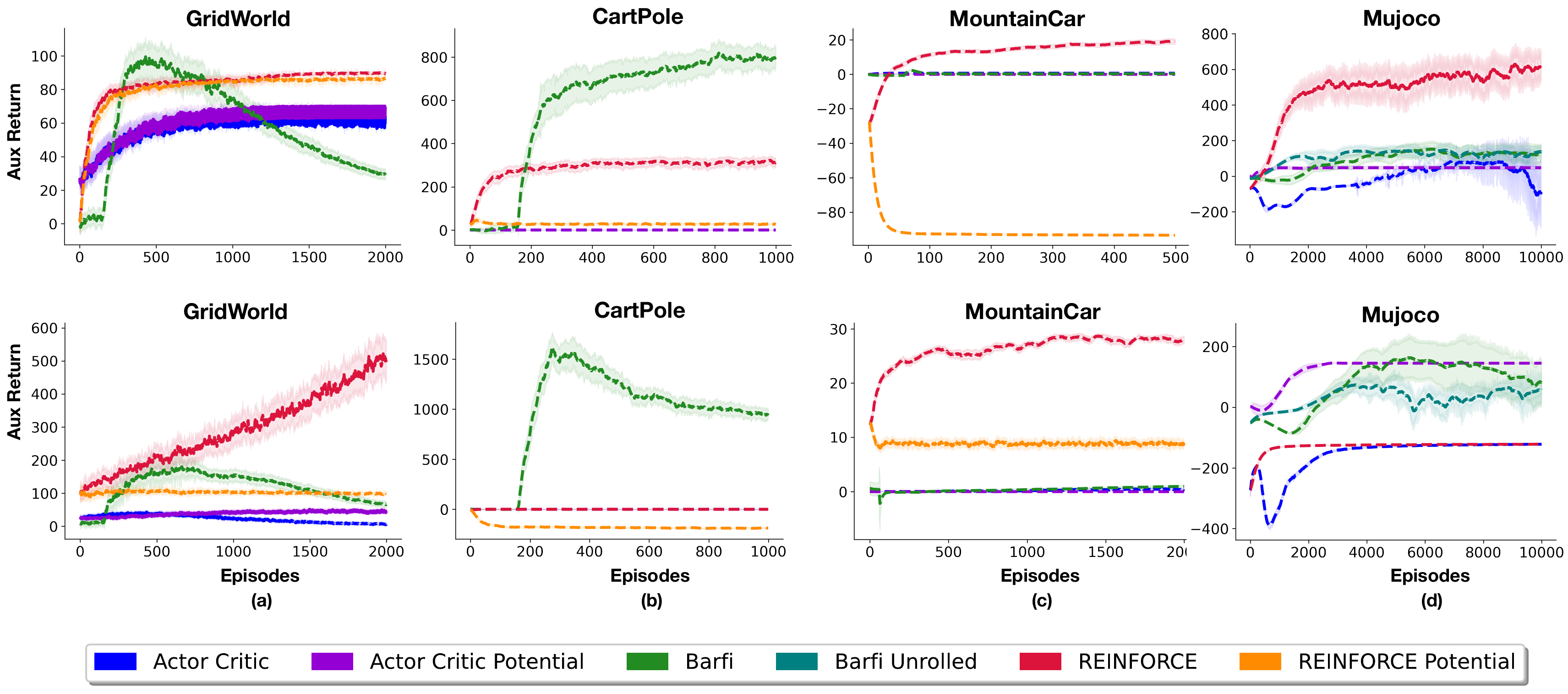}
    \caption{ \textbf{Return induced by learned reward functions: } This figure illustrates the aux return collected by agent based on the learned $r_\phi$, the curves are chosen based on best-performing curves on $r_p$, and averaged over 20 runs (except 40 for GW). \textbf{(a) Top} --  $r^1_{\texttt{aux}, \texttt{GW}}$, \textbf{Bottom} -- $r^2_{\texttt{aux}, \texttt{GW}}$, \textbf{(b) Top} --  $r^1_{\texttt{aux}, \texttt{CP}}$, \textbf{Bottom} -- $r^2_{\texttt{aux}, \texttt{CP}}$, \textbf{(c) Top} --  $r^2_{\texttt{aux}, \texttt{MC}}$, \textbf{Bottom} -- $r^1_{\texttt{aux}, \texttt{MC}}$,  \textbf{(d) Top} --  $r^1_{\texttt{aux}, \texttt{MJ}}$, \textbf{Bottom} -- $r^2_{\texttt{aux}, \texttt{MJ}}$.
}
    \label{apx:fig:aux_return}
\end{figure}

\begin{figure}
    \centering
    \includegraphics[width=\textwidth]{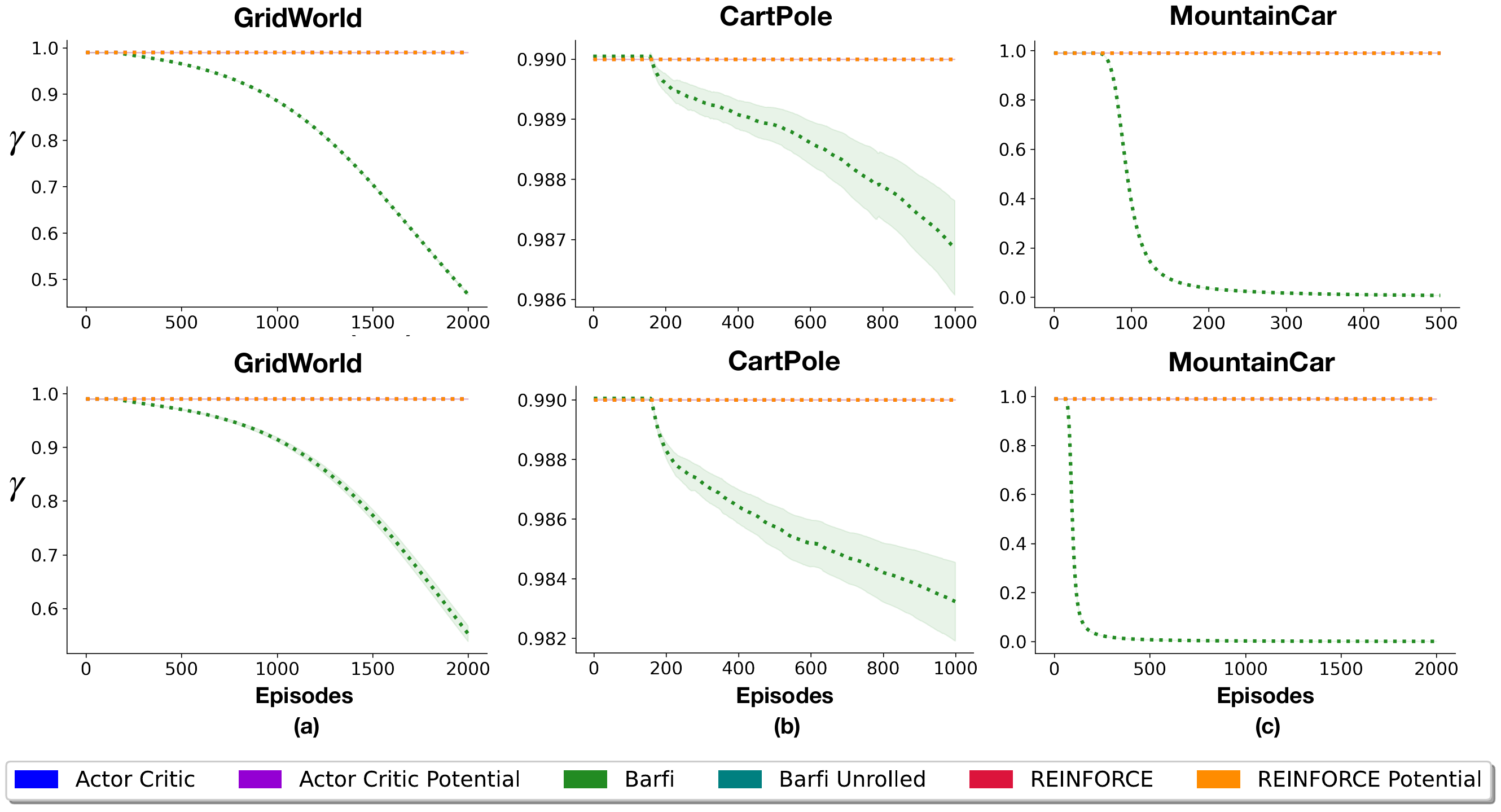}
    \caption{\textbf{Learned discounting $\gamma_\varphi$: } This figure illustrates the learned $\gamma_\varphi$ for \texttt{BARFI} and normal $\gamma$ for other  methods, the curves are chosen based on best-performing curves on $r_p$, and averaged over 20 runs (except 40 for GW). \textbf{(a) Top} --  $r^1_{\texttt{aux}, \texttt{GW}}$, \textbf{Bottom} -- $r^2_{\texttt{aux}, \texttt{GW}}$, \textbf{(b) Top} --  $r^1_{\texttt{aux}, \texttt{CP}}$, \textbf{Bottom} -- $r^2_{\texttt{aux}, \texttt{CP}}$, \textbf{(c) Top} --  $r^2_{\texttt{aux}, \texttt{MC}}$, \textbf{Bottom} -- $r^1_{\texttt{aux}, \texttt{MC}}$. Mujoco is not included as the $\gamma$ was not learned in that case. We can observe that the agents start to learn to decay $\gamma$ at the appropriate pace. Note that the curves for methods other than \texttt{BARFI} and \texttt{BARFI Unrolled} are overlapping as the baselines don't change the value of $\gamma$. 
}
    \label{apx:fig:gamma}
\end{figure}

\begin{figure}
    \centering
    \includegraphics[width=\textwidth]{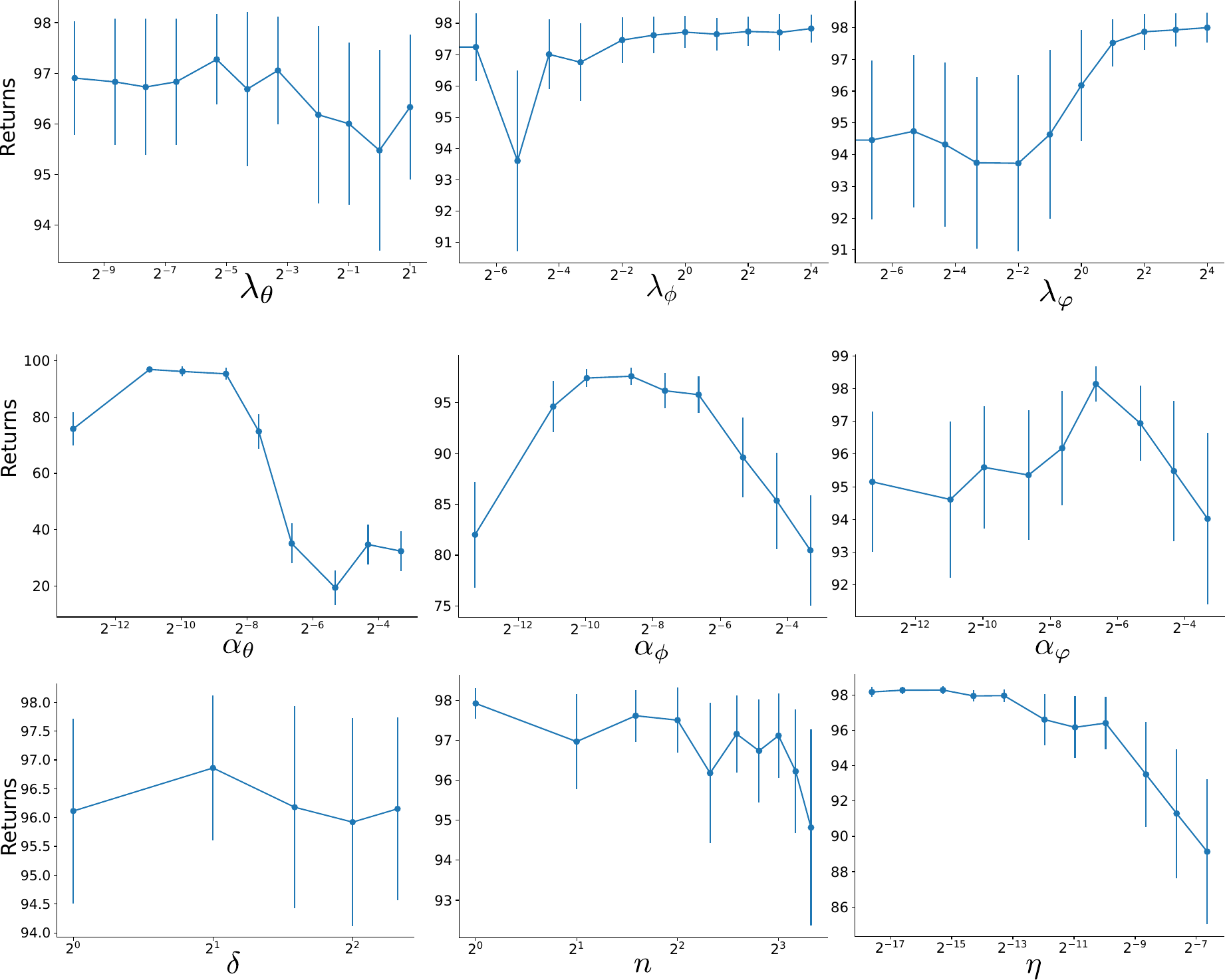}
    \caption{ \textbf{Sensitivity Curves: } The set of graphs representing the sensitivity of different hyper-params keeping all the other params fixed. The sensitivity is for \texttt{BARFI} in GW with $r^2_{\texttt{aux,GW}}$, i.e., the misspecified reward. We choose the best-performing parameters and vary each parameter to see its influence. The curves are obtained for 50 runs (seeds) in each case, and error bars are standard errors. We can notice that $\alpha_\theta$ and $\alpha_\phi$ can have a large influence, and tend to stay around similar values. $\lambda_{\theta, \phi, \varphi}$ tends to help but doesn't really influence a lot in terms of its magnitude, except larger values of $\lambda_\varphi$ seem to do better. Smaller values of $\eta$ seems to work fine, hence something around $5\times 10^{-4}, 1\times 10^{-3}$ usually should suffice. $n,\delta$ can be chosen to around 5 and 3, and usually workout fine. We also defined $N_i = 5 \times \delta$ in this case. 
}
    \label{apx:fig:sensitivity}
\end{figure}

% {\bf Ablations: } 
\subsection{Ablations}
Figure \ref{apx:fig:sensitivity} represents the ablation of \texttt{BARFI} on GridWorld with the misspecified reward for its different params. We can see that usually having $\eta = 0.001, 0.0005$, $n=5$ works for the approximation.

\end{document}

% --- supplement: Notes/appendix_old.tex ---

\onecolumn 

\setcounter{thm}{0}

\section{Detailed Related Work}

\paragraph{Bi/Multi-level optimization:}
\begin{itemize}
    \item
    %
    \item Meta-learning: Backpropagate through the steps of the inner optimization routine.
    Drawback: Cannot unroll inner optimization for a large number of steps, and therefore fails to characterize the optima for inner optimization.
    %
    \item Implicit functions: Characterize the optima of the inner-optimization routine, and differentiate that characteristic equation with respect to the parameters of the outer optimization process.
    Drawback: Exact computation can be expensive and thus requires approximation.
    %
\end{itemize}

\paragraph{Tackling smooth non-stationarity: }

In an earlier work we showed how a least-squares based performance forecast for a policy $\pi$ can be constructed using its (off-policy) performance estimates in the past episodes. 
%
Gradient of that performance forecast resulted in a weighted combination of off-policy gradient for the past $K$ episodes, $ \Delta_{NS}\coloneqq \sum_{k=0}^K \omega(k) \Delta_\text{Off}^k$, where $\omega(k)$ are the weights derived from the regression coefficients, and superscript of $k$ represents the variable for $k^\text{th}$ episode.
%
%
This can also be seen as a special case, when $\gamma_\varphi = 0$ and $r_\phi(s, a, k) \coloneqq \omega(k)\frac{d^{\pi_\theta}(s,a, k)}{d^\beta(s,a,k)}q^\pi(s,a,k)$,
%
\begin{prop}
There exists $\gamma_\varphi \in \Gamma$ and $r_\phi \in \mathcal R$ such that,
\begin{align}
     \Delta_{NS} = \mathbb E_\beta \left[\sum_{k=0}^K \sum_{t=0}^T \Psi_\theta(S_t^k, A_t^k) r_\phi(S_t^k, A_t^k, k)  \right]
\end{align}
\end{prop}

\section{Detailed Proofs}

\paragraph{Problem Statement 1}

\begin{align}
    r_\phi^*, \gamma_\varphi^* &\coloneqq \underset{r_\phi,\gamma_\varphi}{\texttt{argmax}} \,\, \Big[\mathcal{F}(\pi^*(r_\phi, \gamma_\varphi)) - \gamma_\varphi\Big], \\
    %
    \text{s.t.} \,\,\, \pi^*(r_\phi, \gamma_\varphi) &\coloneqq \underset{\pi_\theta}{\texttt{argmax}} \,\, \mathcal J(\pi_\theta, r_\phi, \gamma_\varphi).
\end{align}

\begin{thm}[Implicit gradient]

let
\begin{align}
    \bm H &\coloneqq \frac{\partial^2 \mathcal J(\pi^*(r_\phi, \gamma_\varphi),r_\phi, \gamma_\varphi )}{\partial^2 \pi^*(r_\phi, \gamma_\varphi)} 
    \\
    \bm A &\coloneqq \frac{\partial^2 \mathcal J(\pi^*(r_\phi, \gamma_\varphi),r_\phi, \gamma_\varphi )}{\partial \phi \, \partial \pi^*(r_\phi, \gamma_\varphi)}
    \\
    %
    b &\coloneqq \frac{\partial^2 \mathcal J(\pi^*(r_\phi, \gamma_\varphi),r_\phi, \gamma_\varphi )}{\partial \varphi \, \partial \pi^*(r_\phi, \gamma_\varphi)}
    \\
    \bm c &\coloneqq \frac{\partial \mathcal F(\pi^*(r_\phi, \gamma_\varphi))}{\partial \pi^*(r_\phi, \gamma_\varphi)}
    %
\end{align}

\begin{align}
    \frac{\partial \mathcal F(\pi^*(r_\phi, \gamma_\varphi))}{\partial \phi} &= - \,\bm c \bm H^{-1} \bm A,
    \\
    \frac{\partial \mathcal F(\pi^*(r_\phi, \gamma_\varphi))}{\partial \varphi} &= - \, \bm c \bm H^{-1}  b - \frac{\partial \gamma_\varphi}{\partial \varphi} \label{eqn:456}
\end{align}

\end{thm}

\paragraph{Proof}

\begin{align}
    \frac{\partial \mathcal F(\pi^*(r_\phi, \gamma_\varphi))}{\partial \phi} &= \underbrace{\frac{\partial \mathcal F(\pi^*(r_\phi, \gamma_\varphi))}{\partial \pi^*(r_\phi, \gamma_\varphi)}}_{(a)} \underbrace{\frac{\partial \pi^*(r_\phi, \gamma_\varphi)}{\partial \phi}}_{(b)}  \label{eqn:123}
\end{align}
%
where terms (a) and (b) arise due to our use of the chain rule.
%
Computing (a) is straightforward as it corresponds to policy gradient for policy $\pi^*(r_\phi, \gamma_\varphi)$ when using reward function $r'$.
%
To compute the term (b), We first note that from the policy gradient theorem at the optimal $\pi^*(r_\phi, \gamma_\varphi)$, 
\begin{align}
    \frac{\partial \mathcal J(\pi^*(r_\phi, \gamma_\varphi), r_\theta, \gamma_\varphi)}{\partial \pi^*(r_\phi, \gamma_\varphi)} &= 0. \label{eqn:abc}
\end{align}
%
Therefore, taking the total derivative of \eqref{eqn:abc} with respect to the reward function,
\begin{align}
    \frac{d}{d\phi}\left( \frac{\partial \mathcal J(\pi^*(r_\phi, \gamma_\varphi), r_\phi, \gamma_\varphi)}{\partial \pi^*(r_\phi, \gamma_\varphi)} \right) &= \frac{\partial^2 \mathcal J(\pi^*(r_\phi, \gamma_\varphi),r_\phi, \gamma_\varphi)}{\partial \phi \, \partial \pi^*(r_\phi, \gamma_\varphi) } + \frac{\partial^2 \mathcal J(\pi^*(r_\phi, \gamma_\varphi),r_\phi, \gamma_\varphi)}{\partial^2 \pi^*(r_\phi, \gamma_\varphi)} \frac{\partial \pi^*(r_\phi, \gamma_\varphi)}{\partial \phi} = 0
    \\
    \therefore \,\,\,\,\,\,\,  \frac{\partial^2 \mathcal J(\pi^*(r_\phi, \gamma_\varphi),r_\phi, \gamma_\varphi)}{\partial^2 \pi^*(r_\phi, \gamma_\varphi)} \frac{\partial \pi^*(r_\phi, \gamma_\varphi)}{\partial \phi} &= - \frac{\partial^2 \mathcal J(\pi^*(r_\phi, \gamma_\varphi),r_\phi, \gamma_\varphi)}{\partial \phi \, \partial \pi^*(r_\phi, \gamma_\varphi) }
    \\
    \therefore \hspace{115pt} \frac{\partial \pi^*(r_\phi, \gamma_\varphi)}{\partial \phi} &= - \left (\frac{\partial^2 \mathcal J(\pi^*(r_\phi, \gamma_\varphi), r_\phi, \gamma_\varphi)}{\partial^2 \pi^*(r_\phi, \gamma_\varphi)} \right)^{-1} \frac{\partial^2 \mathcal J(\pi^*(r_\phi, \gamma_\varphi),r_\phi, \gamma_\varphi )}{\partial \phi \, \partial \pi^*(r_\phi, \gamma_\varphi)} . \label{eqn:def}
\end{align}
%
Combining \eqref{eqn:123} and \eqref{eqn:def},
%
%
\begin{align}
    \frac{\partial  \mathcal F(\pi^*(r_\phi, \gamma_\varphi))}{\partial \phi} &= -\, \frac{\partial \mathcal F(\pi^*(r_\phi, \gamma_\varphi))}{\partial \pi^*(r_\phi, \gamma_\varphi)}  \left (\frac{\partial^2 \mathcal J(\pi^*(r_\phi, \gamma_\varphi), r_\phi, \gamma_\varphi)}{\partial^2 \pi^*(r_\phi, \gamma_\varphi)} \right)^{-1}
    \frac{\partial^2 \mathcal J(\pi^*(r_\phi, \gamma_\varphi),r_\phi, \gamma_\varphi )}{\partial \phi \, \partial \pi^*(r_\phi, \gamma_\varphi)},
    \\
    &= - \, \bm c \bm H^{-1} \bm A. \label{eqn:456}
\end{align}
%
% where (c) is just the policy gradient for the performance with respect outer reward function $r'$, (d) is the inverse Hessian for the policy gradient at the optima of the inner optimization, and (e) is the coupled derivative with respect to the policy and the reward function, 

Similarly for $\gamma_\varphi$,

\begin{align}
    \frac{\partial \Big[\mathcal{F}(\pi^*(r_\phi, \gamma_\varphi)) - \gamma_\varphi \Big]}{\partial \varphi} &= \frac{\partial \mathcal{F}(\pi^*(r_\phi, \gamma_\varphi)}{\partial \gamma_\varphi} - \frac{\partial \gamma_\varphi}{\partial \varphi}
    \\
    &= \bm c \bm H^{-1}  b - \frac{\partial \gamma_\varphi}{\partial \varphi},
\end{align}
%
where the simplification to $\bm c \bm H^{-1} \bm b$ can be obtained using the same steps used in \eqref{eqn:456} but by replacing $\phi$ with $\varphi$.

\begin{thm}(Exact forms) Let  $\pi$ be the shorthand for  $\pi^*(r_\phi, \gamma_\varphi)$ and  $\Psi(s,a)$ be the shorthand for $\partial \ln \pi(a|s)/\partial \pi$, then

\todo{correction: Hessian is for $\mathcal J$ and not $\mathcal F$.}
    \begin{align}
        \bm H &=  \mathbb{E}\left[\sum_{t=0}^T \left( \left( \sum_{k=0}^t \Psi(S_k, A_k)  \right) \left( \sum_{k=0}^t \Psi(S_k, A_k)  \right)^\top + \left( \sum_{k=0}^t \frac{\partial^2\Psi(S_k, A_k)}{\partial^2 \pi}  \right)  \right)f(H_t) \middle| \pi\right] 
        \\
        \bm A &=  \mathbb{E} \left[\sum_{t=0}^T\Psi(S_t, A_t)\sum_{k=0}^\infty \gamma_\varphi^k \frac{\partial  r_\phi(S_{t+k}, A_{t+k})}{\partial \phi}^\top \middle| \pi \right]
        \\
         b &= \mathbb{E} \left[\sum_{t=0}^T \Psi(S_t, A_t)\sum_{k=0}^\infty k \gamma_\varphi^{k-1}  r_\phi(S_{t+k}, A_{t+k})\frac{\partial \gamma_\varphi}{\partial \varphi} \middle| \pi \right].\\
        \bm c &=  \mathbb{E}\left[ \sum_{t=0}^T  \left( \sum_{k=0}^t \Psi(S_k, A_k)  \right) f(H_t) \middle| \pi \right]
    \end{align}
\end{thm}

\noindent \textbf{Proof}
\begin{align}
    \bm A =  \frac{\partial^2 \mathcal J(\pi,r_\phi, \gamma_\varphi)}{\partial \phi \, \partial \pi}\Big|_{\pi = \pi^*(r_\phi, \gamma_\varphi)} \,\,,
\end{align}

\begin{align}
    \frac{\partial \mathcal J(\pi,r_\phi, \gamma_\varphi)}{\partial \pi} &= \mathbb{E} \left[\sum_{t=0}^\infty \gamma^t \frac{\partial \ln \pi(A_t|S_t)}{\partial \pi}\sum_{k=0}^\infty \gamma_\varphi^k \left( r_\phi(S_{t+k}, A_{t+k}) - \lambda \ln \pi(A_{t+k}|S_{t+k}) \right) \middle| \pi\right]
    \\
    &= \mathbb{E}\left[\sum_{t=0}^T \left(\sum_{k=0}^t \frac{\partial \ln \pi(A_k|S_k)}{\partial \pi} \right) \gamma_\varphi^t \left( r_\phi(S_{t+k}, A_{t+k}) - \lambda \ln \pi(A_{t+k}|S_{t+k}) \right) \middle| \pi \right]
\end{align}

\begin{align}
    \frac{\partial^2 \mathcal J(\pi,r_\phi, \gamma_\varphi)}{\partial \phi \, \partial \pi}  &= \frac{\partial}{\partial \phi} \mathbb{E}\left[\sum_{t=0}^T \left(\sum_{k=0}^t \frac{\partial \ln \pi(A_k|S_k)}{\partial \pi} \right) \gamma_\varphi^t \left( r_\phi(S_{t+k}, A_{t+k}) - \lambda \ln \pi(A_{t+k}|S_{t+k}) \right) \middle| \pi \right]
    \\
     &=  \mathbb{E}\left[\sum_{t=0}^T \left(\sum_{k=0}^t \frac{\partial \ln \pi(A_k|S_k)}{\partial \pi} \right) \gamma_\varphi^t \frac{\partial r_\phi(S_{t+k}, A_{t+k})^\top}{\partial \phi}  - 0 \middle| \pi \right]
\end{align}

Similarly, 

\begin{align}
    \bm b =  \frac{\partial^2 \mathcal J(\pi,r_\phi, \gamma_\varphi)}{\partial \varphi \, \partial \pi}\Big|_{\pi = \pi^*(r_\phi, \gamma_\varphi)} \,\,,
\end{align}
where,
\begin{align}
   \frac{\partial^2 \mathcal J(\pi,r_\phi, \gamma_\varphi)}{\partial \varphi \, \partial \pi}  &= \frac{\partial}{\partial \varphi} \mathbb{E}\left[\sum_{t=0}^T \left(\sum_{k=0}^t \frac{\partial \ln \pi(A_k|S_k)}{\partial \pi} \right) \gamma_\varphi^t \left( r_\phi(S_{t+k}, A_{t+k}) - \lambda \ln \pi(A_{t+k}|S_{t+k}) \right) \middle| \pi \right]
    \\
    &=  \mathbb{E}\left[\sum_{t=0}^T \left(\sum_{k=0}^t \frac{\partial \ln \pi(A_k|S_k)}{\partial \pi} \right) t\gamma_\varphi^{t-1}\frac{\partial \gamma_\varphi}{\partial \varphi} \left( r_\phi(S_{t+k}, A_{t+k}) - \lambda \ln \pi(A_{t+k}|S_{t+k}) \right) \middle| \pi \right].
\end{align}

{\color{red} OLD: 
where
\begin{align}
    \frac{\partial^2 \mathcal J(\pi,r_\phi, \gamma_\varphi)}{\partial \phi \, \partial \pi}  &= \frac{\partial}{\partial \phi} \mathbb{E} \left[\sum_{t=0}^\infty \frac{\partial \ln \pi(A_t|S_t)}{\partial \pi(A_t|S_t)}\sum_{k=0}^\infty \gamma_\varphi^k \left( r_\phi(S_{t+k}, A_{t+k}) - \lambda \ln \pi(A_{t+k}|S_{t+k}) \right) \middle| \pi\right]
    \\
    &= \mathbb{E} \left[\sum_{t=0}^\infty\frac{\partial \ln \pi(A_t|S_t)}{\partial \pi(A_t|S_t)}\sum_{k=0}^\infty \gamma_\varphi^k \frac{\partial  r_\phi(S_{t+k}, A_{t+k})}{\partial \phi}^\top - 0 \middle| \pi \right].
\end{align}

% Therefore, for $\pi \coloneqq \pi^*(r_\phi, \gamma_\varphi)$,

% \begin{align}
%     \bm A = \frac{\partial^2 \mathcal J(\pi^*(r_\phi, \gamma_\varphi),r_\phi, \gamma_\varphi)}{\partial \phi \, \partial \pi^*(r_\phi, \gamma_\varphi)}  &= \mathbb{E} \left[\sum_{t=0}^\infty \frac{\partial \ln \pi^*(r_\phi, \gamma_\varphi)(S_t,A_t)}{\partial \pi^*(r_\phi, \gamma_\varphi)(S_t,A_t)}\sum_{k=0}^\infty \gamma_\varphi^k\frac{\partial  R_\phi(S_{t+k}, A_{t+k})}{\partial \phi}^\top \middle| \pi^*(r_\phi, \gamma_\varphi)\right]
% \end{align}

Similarly, 

\begin{align}
    \bm b =  \frac{\partial^2 \mathcal J(\pi,r_\phi, \gamma_\varphi)}{\partial \varphi \, \partial \pi}\Big|_{\pi = \pi^*(r_\phi, \gamma_\varphi)} \,\,,
\end{align}
where,
\begin{align}
   \bm b =  \frac{\partial^2 \mathcal J(\pi,r_\phi, \gamma_\varphi)}{\partial \varphi \, \partial \pi}  &= \frac{\partial}{\partial \varphi} \mathbb{E} \left[\sum_{t=0}^\infty \frac{\partial \ln \pi(A_t|S_t)}{\partial \pi(A_t|S_t)}\sum_{k=0}^\infty \gamma_\varphi^k  \left( r_\phi(S_{t+k}, A_{t+k}) -  \lambda \ln \pi(A_{t+k}|S_{t+k}) \right) \middle| \pi\right]
    \\
    &= \mathbb{E} \left[\sum_{t=0}^\infty\frac{\partial \ln \pi(A_t|S_t)}{\partial \pi(A_t|S_t)}\sum_{k=0}^\infty \frac{\partial \gamma_\varphi^k}{\partial \varphi}  \left( r_\phi(S_{t+k}, A_{t+k}) -  \lambda \ln \pi(A_{t+k}|S_{t+k}) \right) \middle| \pi \right]
    \\
    &= \mathbb{E} \left[\sum_{t=0}^\infty\frac{\partial \ln \pi(A_t|S_t)}{\partial \pi(A_t|S_t)}\sum_{k=0}^\infty k \gamma_\varphi^{k-1} \frac{\partial \gamma_\varphi}{\partial \varphi}  \left( r_\phi(S_{t+k}, A_{t+k}) -  \lambda \ln \pi(A_{t+k}|S_{t+k}) \right) \middle| \pi \right].
\end{align}
}
Similarly, 

\begin{align}
    \bm c = \frac{\partial \mathcal F(\pi)}{\partial \pi} \Big|_{\pi = \pi^*(r_\phi, \gamma_\varphi)} \,\,,
\end{align}
where
\begin{align}
    \frac{\partial \mathcal F(\pi)}{\partial \pi} &= \frac{\partial }{\partial \pi} \mathbb{E}\left[ \sum_{t=0}^T f(H_t) \middle| \pi\right]
    \\
    &= \frac{\partial }{\partial \pi} \sum_{h \in \Omega} \Pr(h|\pi) \sum_{t=0}^T   f(h_t)
    \\
    &= \frac{\partial }{\partial \pi} \sum_{h \in \Omega} \sum_{t=0}^T \Pr(h_t|\pi)  f(h_t)
    \\
    &=  \sum_{h \in \Omega} \sum_{t=0}^T  \Pr(h_t|\pi)\frac{\partial \ln \Pr(h_t|\pi) }{\partial \pi} f(h_t)
    \\
    &=  \sum_{h \in \Omega} \sum_{t=0}^T \Pr(h_t|\pi)\frac{\partial }{\partial \pi} \ln \left( \prod_{k=0}^t \Pr(S_{k}|h_{k-1}) \pi(A_k| S_k)  \right) f(h_t)
    \\
    &=  \sum_{h \in \Omega} \sum_{t=0}^T \Pr(h_t|\pi)\frac{\partial }{\partial \pi}  \left( \sum_{k=0}^t \ln\Pr(S_T|h_k) + \ln\pi(A_k| S_k)  \right) f(h_t)
    \\
    &=  \sum_{h \in \Omega}\sum_{t=0}^T  \Pr(h_t|\pi)  \left( \sum_{k=0}^t \frac{\partial  \ln\pi(A_k| S_k)}{\partial \pi}  \right) f(h_t) \label{eqn:midstep}
    \\
    &=  \sum_{h \in \Omega} \Pr(h|\pi)   \sum_{t=0}^T  \left( \sum_{k=0}^t \frac{\partial  \ln\pi(A_k| S_k)}{\partial \pi}  \right) f(h_t)
    \\
    &=  \mathbb{E}\left[ \sum_{t=0}^T  \left( \sum_{k=0}^t \frac{\partial  \ln\pi(A_k| S_k)}{\partial \pi}  \right) f(H_t) \middle| \pi \right].
\end{align}

Similarly, for the Hessian,

\begin{align}
    \bm H &\coloneqq \frac{\partial^2 \mathcal J(\pi, r_\phi, \gamma_\varphi )}{\partial^2 \pi} \Big|_{\pi=\pi^*(r_\phi, \gamma_\varphi)} \,\,\, , 
\end{align}
where
\begin{align}
     \frac{\partial^2 \mathcal J(\pi, r_\phi, \gamma_\varphi)}{\partial^2 \pi} &= \frac{\partial}{\partial \pi}  \sum_{h \in \Omega}\sum_{t=0}^T  \Pr(h_t|\pi)  \left( \sum_{k=0}^t  \frac{\partial  \ln\pi(A_k| S_k)}{\partial \pi}  \right) \gamma_\varphi^{t} (r_\phi(S_t, A_t) - \lambda \ln \pi(A_t|S_t)), \label{eqn:hessstep}
\end{align}
which was obtained using the form for $\partial \mathcal F(\pi)/\partial \pi$ presented in \eqref{eqn:midstep}. Simplifying \eqref{eqn:hessstep} further.

\begin{align}
    \frac{\partial^2 \mathcal J(\pi, r_\phi, \gamma_\varphi)}{\partial^2 \pi} &=   \sum_{h \in \Omega}\sum_{t=0}^T \Bigg( \frac{\partial}{\partial \pi}\Pr(h_t|\pi)  \left( \sum_{k=0}^t \frac{\partial  \ln\pi(A_k| S_k)}{\partial \pi}  \right)  \gamma_\varphi^{t} \Big(r_\phi(S_t, A_t) - \lambda \ln \pi(A_t|S_t)\Big)
    \\
    &\quad\quad\quad\quad\quad + \Pr(h_t|\pi)  \left( \sum_{k=0}^t \frac{\partial^2  \ln\pi(A_k| S_k)}{\partial^2 \pi}  \right) \gamma_\varphi^{t} \Big(r_\phi(S_t, A_t) - \lambda \ln \pi(A_t|S_t)\Big) 
    \\
    & \quad\quad\quad\quad\quad  - \Pr(h_t|\pi)  \left( \sum_{k=0}^t  \frac{\partial  \ln\pi(A_k| S_k)}{\partial \pi}  \right) \gamma_\varphi^{t}  \lambda \frac{\partial \ln \pi(A_t|S_t)^\top}{\partial \pi} \Bigg),
    \\
    &=   \sum_{h \in \Omega}\sum_{t=0}^T  \Bigg(\Pr(h_t|\pi) \frac{\partial}{\partial \pi}\ln\Pr(h_t|\pi)  \left( \sum_{k=0}^t \frac{\partial  \ln\pi(A_k| S_k)}{\partial \pi}  \right)  \gamma_\varphi^{t} \Big(r_\phi(S_t, A_t) - \lambda \ln \pi(A_t|S_t)\Big)
    \\
    &\quad\quad\quad\quad\quad + \Pr(h_t|\pi)  \left( \sum_{k=0}^t \frac{\partial^2  \ln\pi(A_k| S_k)}{\partial^2 \pi}  \right) \gamma_\varphi^{t} \Big(r_\phi(S_t, A_t) - \lambda \ln \pi(A_t|S_t)\Big) 
    \\
    & \quad\quad\quad\quad\quad  - \Pr(h_t|\pi)  \left( \sum_{k=0}^t  \frac{\partial  \ln\pi(A_k| S_k)}{\partial \pi} \frac{\partial \ln \pi(A_t|S_t)^\top}{\partial \pi}  \right) \gamma_\varphi^{t}  \lambda \Bigg),
    \\
    &=   \sum_{h \in \Omega}\sum_{t=0}^T  \Bigg(\Pr(h_t|\pi) \left( \sum_{k=0}^t \frac{\partial  \ln\pi(A_k| S_k)}{\partial \pi}  \right) \left( \sum_{k=0}^t \frac{\partial  \ln\pi(A_k| S_k)}{\partial \pi}  \right)^\top  \gamma_\varphi^{t} \Big(r_\phi(S_t, A_t) - \lambda \ln \pi(A_t|S_t)\Big)
    \\
    &\quad\quad\quad\quad\quad + \Pr(h_t|\pi)  \left( \sum_{k=0}^t \frac{\partial^2  \ln\pi(A_k| S_k)}{\partial^2 \pi}  \right) \gamma_\varphi^{t} \Big(r_\phi(S_t, A_t) - \lambda \ln \pi(A_t|S_t)\Big) 
    \\
    & \quad\quad\quad\quad\quad  - \Pr(h_t|\pi)  \left( \sum_{k=0}^t  \frac{\partial  \ln\pi(A_k| S_k)}{\partial \pi} \frac{\partial \ln \pi(A_t|S_t)^\top}{\partial \pi}  \right) \gamma_\varphi^{t}  \lambda \Bigg),
    \\
    &=  \mathbb{E}\Bigg[ \left(\left( \sum_{k=0}^t \frac{\partial  \ln\pi(A_k| S_k)}{\partial \pi}  \right) \left( \sum_{k=0}^t \frac{\partial  \ln\pi(A_k| S_k)}{\partial \pi}  \right)^\top +  \left( \sum_{k=0}^t \frac{\partial^2  \ln\pi(A_k| S_k)}{\partial^2 \pi}  \right) \right) \gamma_\varphi^{t} \Big(r_\phi(S_t, A_t) - \lambda \ln \pi(A_t|S_t)\Big)
    \\
    % &\quad\quad\quad\quad\quad +  \left( \sum_{k=0}^t \frac{\partial^2  \ln\pi(A_k| S_k)}{\partial^2 \pi}  \right) \gamma_\varphi^{t} \Big(r_\phi(S_t, A_t) - \lambda \ln \pi(A_t|S_t)\Big) 
    % \\
    & \quad\quad\quad\quad\quad  -   \lambda \gamma_\varphi^{t}  \left( \sum_{k=0}^t  \frac{\partial  \ln\pi(A_k| S_k)}{\partial \pi} \frac{\partial \ln \pi(A_t|S_t)^\top}{\partial \pi}  \right) \Bigg| \pi \Bigg],
    % \\
    % &=   \sum_{h \in \Omega}\sum_{t=0}^T \Pr(h_t|\pi) \left( \frac{\partial}{\partial \pi} \ln \Pr(h_t|\pi)  \left( \sum_{k=0}^t \frac{\partial  \ln\pi(A_k| S_k)}{\partial \pi}  \right) + \left( \sum_{k=0}^t \frac{\partial^2  \ln\pi(A_k| S_k)}{\partial^2 \pi}  \right)  \right)f(h_t),
    % \\
    % &\overset{(a)}{=}   \sum_{h \in \Omega}\sum_{t=0}^T \Pr(h_t|\pi) \left( \left( \sum_{k=0}^t \frac{\partial  \ln\pi(A_k| S_k)}{\partial \pi}  \right) \left( \sum_{k=0}^t \frac{\partial  \ln\pi(A_k| S_k)}{\partial \pi}  \right)^\top + \left( \sum_{k=0}^t \frac{\partial^2  \ln\pi(A_k| S_k)}{\partial^2 \pi}  \right)  \right)f(h_t),
    % \\
    % &=   \mathbb{E}\left[\sum_{t=0}^T \left( \left( \sum_{k=0}^t \frac{\partial  \ln\pi(A_k| S_k)}{\partial \pi}  \right) \left( \sum_{k=0}^t \frac{\partial  \ln\pi(A_k| S_k)}{\partial \pi}  \right)^\top + \left( \sum_{k=0}^t \frac{\partial^2  \ln\pi(A_k| S_k)}{\partial^2 \pi}  \right)  \right)f(H_t) \middle| \pi\right],
\end{align}
where the simplification in (a) is done using the same steps used in \eqref{eqn:midstep}.

\section{Unified View}

\paragraph{Notation}
\begin{align}
    \mathcal F(\pi) &\coloneqq \mathbb{E}\left[ \sum_{t=0}^T r^*(S_t, A_t) \middle | \pi \right]
    \\
    % \mathcal G(\pi, q) &\coloneqq \mathbb{E} \left[ \sum_{t=0}^T \gamma^t\left (q(S_t, A_t) - \gamma^{t+1} q(S_{t+1}, A_{t+1}) \right) \right]
    \mathcal G(\pi, \nu) &\coloneqq \mathbb{E} \left[  \nu(S_0, A_1, ..., S_T) | \pi \right]
    \\
    \mathcal H(\pi) &\coloneqq \mathbb{E}\left[ \sum_{t=0}^T \gamma^t \ln \pi(A_t|S_t) \middle | \pi \right]
    \\
    \mathcal J(\pi,r,\gamma) &\coloneqq \mathbb{E}\left[\sum_{t=0}^T \gamma^t r(S_t, A_t) \middle | \pi \right]
\end{align}

\subsection{Problem Statement 1 (Searching for internal rewards)}

\begin{align}
    r_\phi^*, \gamma_\varphi^* &\coloneqq \underset{r_\phi,\gamma_\varphi}{\texttt{argmax}} \,\, \Big[\mathcal{F}(\pi^*(r_\phi, \gamma_\varphi)) - \gamma_\varphi\Big], \\
    %
    \text{s.t.} \,\,\, \pi^*(r_\phi, \gamma_\varphi) &\coloneqq \underset{\pi_\theta}{\texttt{argmax}} \,\, \mathcal J(\pi_\theta, r_\phi, \gamma_\varphi).
\end{align}

\subsection{Problem Statement 1 (Searching for interval value function)}

\begin{align}
    v_\phi^* &\coloneqq \underset{v_\phi}{\texttt{argmax}} \,\, \mathcal{J}(\pi^*(v_\phi)) , \\
    %
    \text{s.t.} \,\,\, \pi^*(v_\phi) &\coloneqq \underset{\pi_\theta}{\texttt{argmax}} \,\, \Big[\mathcal G(\pi_\theta, v_\phi) - \mathcal H(\pi_\theta) \Big].
\end{align}

\subsection{Problem Statement 2 (Value Iteration)}

\begin{align}
    \pi_\theta^* &\coloneqq \underset{\pi_\theta}{\texttt{argmax}} \,\, \Big[\mathcal{G}(v^*(\pi_\theta)) - \mathcal H(\pi) \Big], \\
    %
    \text{s.t.} \,\,\, v^*(\pi_\theta) &\coloneqq \underset{v_\phi}{\texttt{argmax}} \,\, \mathcal J(\pi_\theta, r_\phi, \gamma_\varphi).
\end{align}

\paragraph{General Utilities \citep{zhang2020variational}} 

Unlike the the first one, the remaining utilities are not linear in $d^\pi$ but are convex in $d^\pi$ instead.

\begin{align}
    \text{cumulative rewards:} & \quad \langle d^\pi, r \rangle,
    \\
    \text{Exploration:} &\quad \texttt{Entropy}( d^\pi) ,
    \\
    \text{Safety:} &\quad \langle d^\pi, r \rangle- \alpha \texttt{LogBarrier}(\langle d^\pi, c \rangle),
    \\
    \text{Imitation:} &\quad \texttt{KL}(d^\pi || d^*).
\end{align}

\todo{Are there any useful objectives that are non-convex in $d^\pi$?}

\begin{rem}
    One option to optimize these are by searching in the parameter space (instead of the action space) of the policy, which essentially turns the problem into continuous arm bandit.
\end{rem}

\section{Learning Internal Value Function}

\begin{align}
    \nu_\phi^* &= \argmax_{\nu_\phi} \,\, \mathcal F(\pi^*(\nu_\phi))
    \\
    \pi^*(\nu_\phi) &= \texttt{Alg} (\pi_\theta, \nu_\phi)
\end{align}

Assume that $\texttt{Alg}$ returns a policy and uses a convergent update procedure without returning that policy.

Let the update delta used by $\texttt{Alg}$ be,

\begin{align}
    \Delta(\pi, \nu) \coloneqq \mathbb{E}_{\mathcal D}\left[ \sum_{t=0}^\infty \frac{\partial \ln \pi_\theta(S_t, A_t)}{\partial \theta} \nu(S_t, A_t) \right]. \label{eqn:delta}
\end{align}

Notice that when $\nu = q^\pi$ and $\mathcal D$ contains on-policy trajectories  then this update rule is the $\gamma^t$ dropped version of on-policy policy gradient \citep{thomas2014} (which is not a gradient of any function \citep{nota2020}).
%

Now to find $\nu_\phi^*$, we aim to compute
%
\begin{align}
    \frac{\partial \mathcal F(\pi^*(\nu_\phi))}{\partial \phi} &=     \underbrace{\frac{\partial \mathcal F(\pi^*(\nu_\phi))}{\partial \pi^*(\nu_\phi)}}_{(a)} \underbrace{\frac{\partial \pi^*(\nu_\phi)}{\partial \phi}}_{(b)}, \label{eqn:deltachain}
\end{align}
%
where the term (a) in \eqref{eqn:deltachain} is the standard on-policy policy gradient,
\begin{align}
    \bm c &\coloneqq \frac{\partial \mathcal F(\pi_\theta)}{\partial \theta} \Big|_{\pi_\theta = \pi^*(\nu_\phi)}, 
    %
    \\
    \frac{\partial \mathcal F(\pi_\theta)}{\partial \theta} &= \mathbb{E}_{\pi_\theta}\left[ \sum_{t=0}^\infty {\color{red} \gamma^t} \frac{\partial \ln \pi_\theta(S_t, A_t)}{\partial \theta} \sum_{j=0}^\infty \gamma^j R(S_{t+j}, A_{t+j}) \right], \label{eqn:deltaC}
\end{align}

where $\theta^*(\nu_\phi)$ are the parameters of $\pi^*(\nu_\phi)$.
%
Probably drop the $\gamma^t$ term in red during actual implementation.
%
To compute term (b) in \eqref{eqn:deltachain}, 
using the assumption that the update rule eventually converges, we know that,
%
\begin{align}
    \Delta(\pi^*(\nu_\phi), \nu_\phi) = 0.
\end{align}
%
Therefore, using total derivatives,
%
\begin{align}
 \frac{d \Delta(\pi^*(\nu_\phi), \nu_\phi)}{d \phi} =  \frac{\partial \Delta(\pi^*(\nu_\phi), \nu_\phi)}{\partial \phi}    +  \frac{\partial \Delta(\pi^*(\nu_\phi), \nu_\phi)}{\partial \pi^*(\nu_\phi)} \frac{\partial \pi^*(\nu_\phi)}{\partial \phi} = 0. \label{eqn:deltatotald}
\end{align}
%
Using \eqref{eqn:delta}, the first term in the RHS of \eqref{eqn:deltatotald} can be expressed as,
%
\begin{align}
    \bm A &\coloneqq \frac{\partial \Delta(\pi_\theta, \nu_\phi)}{\partial \phi}\big|_{\pi_\theta= \pi^*(\nu_\phi)}
    \\
    \frac{\partial \Delta(\pi_\theta, \nu_\phi)}{\partial \phi} &= \mathbb{E}_{\mathcal D}\left[ \sum_{t=0}^\infty \frac{\partial \ln \pi_\theta(S_t, A_t)}{\partial \theta} \frac{\nu_\phi(S_t, A_t)}{\partial \phi}^\top \right]. \label{eqn:deltaA}
\end{align}
%
Similarly, 
\begin{align}
     \bm H &\coloneqq \frac{\partial \Delta(\pi_\theta, \nu_\phi)}{\partial \theta}\big|_{\pi_\theta  =\pi^*(\nu_\phi)} 
     \\
     \frac{\partial \Delta(\pi_\theta, \nu_\phi)}{\partial \theta} &= \frac{\partial }{\partial \theta} \mathbb{E}_{\mathcal D}\left[ \sum_{t=0}^\infty \frac{\partial \ln \pi_\theta(S_t, A_t)}{\partial \theta} \nu(S_t, A_t) \right]
     \\
     &= \mathbb{E}_{\mathcal D}\left[ \sum_{t=0}^\infty \frac{\partial^2 \ln \pi_\theta(S_t, A_t)}{\partial^2 \theta} \nu(S_t, A_t) \right], \label{eq:deltaHess}
    %  &= \frac{\partial }{\partial \pi^*(\nu_\phi)} \sum_{s,a}  \sum_{t=0}^\infty \Pr(S_t=s, A_t=a) \frac{\partial \ln \pi^*(\nu_\phi)(s, a)}{\partial \theta} \nu(s, a)
    %  \\
\end{align}
{\color{blue}where the last step follows because $\mathcal D$ is any arbitrary distribution and is not collected using $\pi$. (If it was collected using $\pi$, we would need to take derivatives with respect to the sampling distribution as well.)}

Now, combining \eqref{eqn:deltatotald} with \eqref{eqn:deltaA} and \eqref{eq:deltaHess},
\begin{align}
     \frac{\partial \pi^*(\nu_\phi)}{\partial \phi}  &= -\bm H^{-1} \bm A. \label{eqn:deltaHA}
\end{align}
%
Combining, \eqref{eqn:deltachain} with \eqref{eqn:deltaC} and \eqref{eqn:deltaHA},
\begin{align}
    \frac{\partial \mathcal F(\pi^*(\nu_\phi))}{\partial \phi} &= - \bm c \bm H^{-1} \bm A. 
\end{align}

\subsection{Approximations}

Use diagonal Fisher matrix approximation for Hessian $\bm H \approx \hat{\bm H} \coloneqq  \texttt{diag}({ \bm h})$, where
%
 \begin{align}
    \bm  h &=  \mathbb{E}_{\mathcal D}\left[ \sum_{t=0}^\infty \frac{\partial \ln \pi_\theta(S_t, A_t)}{\partial \theta} \odot \frac{\partial \ln \pi_\theta(S_t, A_t)}{\partial \theta} \big| \nu(S_t, A_t) \big| \right],
    \end{align}
and $\odot$ represents Hadamard product. 
%
{\color{red} Note the absolute value around $\nu(s,a)$. This is done to ensure Hessian approximation is a PSD matrix always. }
%
%
Therefore, we similarly also
use diagonal approximation for $\bm A \approx \hat{\bm A} \coloneqq \texttt{diag}({\bm a})$, where
%
    \begin{align}
        \bm a &=  \mathbb{E}_{\mathcal D}\left[ \sum_{t=0}^\infty \frac{\partial \ln \pi_\theta(S_t, A_t)}{\partial \theta} \odot \frac{\nu_\phi(S_t, A_t)}{\partial \phi} \right].
    \end{align}
    %
Using these approximations, 
%
\begin{align}
    \frac{\partial \mathcal F(\pi^*(\nu_\phi))}{\partial \phi} &= - \bm c \widehat{\bm H}^{-1} \widehat{\bm A}. \label{eqn:deltaApprox}
\end{align}

	\IncMargin{1em}
	\begin{algorithm2e}[h]
		% \SetAlgoLined
		\textbf{Input:} Stopping time (criteria) $T_1, T_2, T_3$
		\\
		({\color{red} T1 is essentially infinite loop. T2 is till some notion of convergence. Might want to keep T3 small, say 10. }) 
		\\
		\textbf{Initialize:} $\pi_\theta, \nu_\phi$
		\\
		$\mathcal D \leftarrow \{\}$
		\\
		\For{$\texttt{i} \in [1,T_1]$}
		{
		    \For{ $\texttt{j} \in [1, T_2]$}
		    {
		        {\color{red} My hunch is that we may \textbf{not} need to reset $\pi$.}
		        \\
		        Sample a batch of trajectories $B_j$ from $\mathcal D$
		        \\
		        Update $\pi_\theta$ using trajectories $B_j$ and update rule $\Delta(\pi_\theta, \nu_\phi)$ in \eqref{eqn:delta}
		    }
		    $\mathcal D_2 \leftarrow \{\}$
		    \\
		    %
		    \For{ $\texttt{j} \in [1, T_3]$}
		    {
	        Generate trajectory $\tau_j$ using $\pi_\theta$
	        \\
	        Save $\tau_j$ in $\mathcal D_2$
		    }
		    
	    Update $\nu_\phi$ using \eqref{eqn:deltaApprox} and $\mathcal D_2$
		}
		$\mathcal D \leftarrow \mathcal D \cup \mathcal D_2$
    % 		\vspace{8pt}
    % 		 \nonl \textcolor[rgb]{0.5,0.5,0.5}{\# Use trajectories from both $\mathcal D_1$ and $\mathcal D_2$}
		\caption{Generic template for learning internal rewards using implicit gradients}
		\label{apx:Alg:1}  
	\end{algorithm2e}
	\DecMargin{1em}

\subsection{Auxiliary Rewards}

Let $f(s,a,s')$ be the potential function (for example negative Manhattan distance) or say an auxiliary reward (for example $+1$ for taking a step). {\color{red} Note that $f$ cannot only depend on $s$; It needs to depend on $a$ and/or $s'$. If it only depends on $s$  then it will not help the policy decide which action is useful in that state.} Also note that I do not think that we need to, although feel free to try if you want to, use the cumulant  $\sum_{j=t}^\infty f(S_j, A_j, S_{j+1})$ instead of $f(S_t, A_t, S_{t+1})$. (we can discuss this sometime).

\begin{align}
    \Delta(\pi, \nu) \coloneqq \mathbb{E}_{\mathcal D}\left[ \sum_{t=0}^\infty \frac{\partial \ln \pi_\theta(S_t, A_t)}{\partial \theta} \big(\nu(S_t, A_t) + f(S_t, A_t, S_{t+1}) \big) \right].
\end{align}

\begin{align}
    \bm c &\coloneqq \mathbb{E}_{\pi_\theta}\left[ \sum_{t=0}^\infty {\color{red} \gamma^t} \frac{\partial \ln \pi_\theta(S_t, A_t)}{\partial \theta} \sum_{j=0}^\infty \gamma^j R(S_{t+j}, A_{t+j}) \right], \label{eqn:deltaC}
\end{align}

 \begin{align}
    \bm  h &=  \mathbb{E}_{\mathcal D}\left[ \sum_{t=0}^\infty \frac{\partial \ln \pi_\theta(S_t, A_t)}{\partial \theta} \odot \frac{\partial \ln \pi_\theta(S_t, A_t)}{\partial \theta} \big| \nu(S_t, A_t) + f(S_t, A_t, S_{t+1}) \big| \right],
\end{align}

    \begin{align}
        \bm a &=  \mathbb{E}_{\mathcal D}\left[ \sum_{t=0}^\infty \frac{\partial \ln \pi_\theta(S_t, A_t)}{\partial \theta} \odot \frac{\nu_\phi(S_t, A_t)}{\partial \phi} \right].
    \end{align}
    %

That is, let $\pi(\phi)$ be the policy returned by $\texttt{Alg}(r_\phi)$, then 
\begin{align}
    \frac{\partial \mathcal F(\pi(\phi))}{\partial \phi} = \frac{\partial \mathcal F(\pi(\phi))}{\partial \pi(\phi)}\frac{\partial \pi(\phi)}{\partial \phi}
\end{align}

\begin{align}
    r_\phi &: \mathcal S \times \mathcal A \rightarrow [0, 1],
    \\
    \gamma_\varphi &: \mathcal S \rightarrow [0, 1),
    \\
    \pi_\theta &: \mathcal S \rightarrow \Delta(\mathcal A).
\end{align}

\begin{align}
    r_\phi^*, \gamma_\varphi^* &\coloneqq \underset{r_\phi,\gamma_\varphi}{\texttt{argmax}} \,\, \Big[\mathcal{F}(\pi^*(r_\phi, \gamma_\varphi)) - \frac{1}{2}\lVert\varphi\rVert^2 \Big], \\
    %
    \text{s.t.} \,\,\, \pi^*(r_\phi, \gamma_\varphi) &\coloneqq \underset{\pi_\theta}{\texttt{argmax}} \,\, \mathcal J(\pi_\theta, r_\phi, \gamma_\varphi).
\end{align}

\todo{add entropy regularization?}

\begin{rem}
    Need to balance the relative magnitudes of $\mathscr F$ and $\gamma_\varphi$ in the outer objective. This might be trickier than it seems.
\end{rem}

\begin{rem}
    Can make $\gamma_\varphi$ state and action dependent, i.e., $\gamma_\varphi : \mathcal S \times \mathcal A \rightarrow [0,1)$. 
    \todo{IMP: State (and) action dependent $\gamma$ can mess up with the Bellman recursion. Check that.}
    % However, in that case, it is not clear how to obtain a computable regularization using it.
\end{rem}

\begin{rem}
Need to drop the $\gamma^t$ prefix to avoid ignoring the future states, otherwise when $\gamma = 0$, then the (true) policy gradient will only account for the first action.  
\end{rem}

\begin{rem}
    What is the qualitative nature of $r_\phi^*$ and $\gamma_\varphi^*$? How do they relate to $q^*$?
\end{rem}

\begin{thm}

let
\begin{align}
    \bm H &\coloneqq \frac{\partial^2 \mathcal J(\pi^*(r_\phi, \gamma_\varphi),r_\phi, \gamma_\varphi )}{\partial^2 \pi^*(r_\phi, \gamma_\varphi)} 
    \\
    \bm A &\coloneqq \frac{\partial^2 \mathcal J(\pi^*(r_\phi, \gamma_\varphi),r_\phi, \gamma_\varphi )}{\partial \phi \, \partial \pi^*(r_\phi, \gamma_\varphi)}
    \\
    %
    b &\coloneqq \frac{\partial^2 \mathcal J(\pi^*(r_\phi, \gamma_\varphi),r_\phi, \gamma_\varphi )}{\partial \varphi \, \partial \pi^*(r_\phi, \gamma_\varphi)}
    \\
    \bm c &\coloneqq \frac{\partial \mathcal F(\pi^*(r_\phi, \gamma_\varphi))}{\partial \pi^*(r_\phi, \gamma_\varphi)}
    %
\end{align}
then,
\begin{align}
    \frac{\partial \mathcal F(\pi^*(r_\phi, \gamma_\varphi))}{\partial \phi} &= - \,\bm c \bm H^{-1} \bm A, 
    \\
    \frac{\partial \mathcal F(\pi^*(r_\phi, \gamma_\varphi))}{\partial \varphi} &= - \, \bm c \bm H^{-1}  b - \varphi. 
\end{align}
\thlabel{thm:implicit}
\end{thm}

\begin{proof}
    See the appendix.
\end{proof}

Notice that $\bm b$ is scalar because $\varphi$ is a scalar quantity.

\begin{thm}(Exact forms) Let  $\pi$ be the shorthand for  $\pi^*(r_\phi, \gamma_\varphi)$ and  $\Psi(s,a)$ be the shorthand for $\partial \ln \pi(a|s)/\partial \pi$, then
    \begin{align}
        \bm H &=  \mathbb{E}\Bigg[\sum_{t=0}^T \Bigg( \left( \sum_{k=0}^t \Psi(S_k, A_k)  \right) \left( \sum_{k=0}^t \Psi(S_k, A_k)  \right)^\top + \left( \sum_{k=0}^t \frac{\partial^2\ln \pi(S_k, A_k)}{\partial^2 \pi}  \right)  \Bigg)\gamma_\varphi^{t} \Big(r_\phi(S_t, A_t) - \lambda \ln \pi(A_t|S_t)\Big) \Bigg| \pi\Bigg] 
        \\
        \bm A &=  \mathbb{E}\left[\sum_{t=0}^T \left(\sum_{k=0}^t \Psi(S_k, A_k) \right) \gamma_\varphi^t \frac{\partial r_\phi(S_{t+k}, A_{t+k})^\top}{\partial \phi}  \middle| \pi \right]
        % \mathbb{E} \left[\sum_{t=0}^T\Psi(S_t, A_t)\sum_{k=0}^\infty \gamma_\varphi^k \frac{\partial  r_\phi(S_{t+k}, A_{t+k})}{\partial \phi}^\top \middle| \pi \right]
        \\
        %
         b &= \mathbb{E}\Bigg[\sum_{t=0}^T \left(\sum_{k=0}^t \Psi(S_k, A_k) \right) t\gamma_\varphi^{t-1}\frac{\partial \gamma_\varphi}{\partial \varphi} \left(  r_\phi(S_{t+k}, A_{t+k}) -  \lambda \ln \pi(A_{t+k}|S_{t+k}) \right) \Bigg| \pi \Bigg]\\
         %
        \bm c &=  \mathbb{E}\left[ \sum_{t=0}^T  \left( \sum_{k=0}^t \Psi(S_k, A_k)  \right) f(H_t) \middle| \pi \right].
    \end{align}
\end{thm}

{\color{red}Note: $f(H_t) = R_t$, when domain is Markovian.}

{\color{red}Note: $\gamma_\varphi \in [0, 1)$ above is a parameterized scalar, and does not depend on either state or action.}

{\color{red}Caution 1: These are equivalent to REINFORCE \textit{without} baseline and thus might have high variance.}

{\color{red}Caution 2: These do $\gamma$-discounting \textit{exactly}, unlike popular algorithms, and thus may be sample inefficient.}

{\color{blue} IMP: Mix prior knowledge with internal reward function.}

\begin{proof}
    See the appendix.
\end{proof}

\citet{furmston2015gauss} had also presented an analytical form for the Hessian for Markovian settings, however, our form generalizes to the non-Markovian setting (and off-policy setting as discussed in the appendix). Further, the derivation of our form for the Hessian is relatively much simpler and can be of independent interest.

\section{Practical Algorithm for Learning Reward Signal}

Use Gauss-Newton and diagonal approximation for Hessian $\bm H \approx \hat{\bm H} \coloneqq  \texttt{diag}({ \bm h})$, where
%
 \begin{align}
    \bm  h &=  \mathbb{E}\Bigg[\sum_{t=0}^T  \left( \sum_{k=0}^t \Psi(S_k, A_k)  \right) \odot \left( \sum_{k=0}^t \Psi(S_k, A_k)  \right)  \gamma_\varphi^{t} \Big(r_\phi(S_t, A_t) - \lambda \ln \pi(A_t|S_t)\Big)  \Bigg| \pi\Bigg],
    \end{align}
and $\odot$ represents Hadamard product. Such Gauss Newton and diagonal approximations have been previously explored in the standard context of policy search where no internal rewards are learnt \citep{furmston2015gauss,romoff2020tdprop}.
%
Therefore, we similarly also
use diagonal approximation for $\bm A \approx \hat{\bm A} \coloneqq \texttt{diag}({\bm a})$, where
%
    \begin{align}
        \bm a &=  \mathbb{E}\left[\sum_{t=0}^T \left(\sum_{k=0}^t \frac{\partial \ln \pi(A_k|S_k)}{\partial \pi} \right) \odot \gamma_\varphi^t \frac{\partial r_\phi(S_{t+k}, A_{t+k})^\top}{\partial \phi}  \middle| \pi \right]
    \end{align}

\todo{Hadamard product for $\mathbf{a}$ would require the number of parameters in policy and reward functions to be the same.}

\todo{Could potentially explore other approximation techniques as well.}

\todo{Take care of non-convexity and Hessian's Eigen values. Take absolute of $f(H)$.}

Using the above approximations in \thref{thm:implicit},

\begin{align}
    \frac{\partial \mathcal F(\pi^*(r_\phi, \gamma_\varphi))}{\partial \phi} &\approx - \,\bm c \hat {\bm H}^{-1} \hat{\bm A}, \label{eqn:updater}
    \\
    \frac{\partial \mathcal F(\pi^*(r_\phi, \gamma_\varphi))}{\partial \varphi} &\approx - \, \bm c \hat{\bm H}^{-1}  b - \frac{\partial \gamma_\varphi}{\partial \varphi}.  \label{eqn:udpategamma}
\end{align}

	\IncMargin{1em}
	\begin{algorithm2e}[h]
		% \SetAlgoLined
		\textbf{Input:} {Stopping time (criteria) $T_1, T_2, T_3$ } 
		\\
		\textbf{Initialize:} $\pi_\theta, r_\phi, \gamma_\varphi$
		\\
		$\mathcal D \leftarrow \{\}$
		\\
		\For{$\texttt{i} \in [1,T_1]$}
		{
		    \For{ $\texttt{j} \in [1, T_2]$}
		    {
		        Sample a batch of trajectories $B_j$ from $\mathcal D$
		        \\
		        Evaluate $B_j$ using $r_\phi$ and $\gamma_\varphi$
		        \\
		        Update $\pi_\theta$ using policy gradient algo and $B_j$ 
		        \\
		        Update critic (if any)
		    }
		    \For{ $\texttt{j} \in [1, T_3]$}
		    {
	        Generate trajectory $\tau_j$ using $\pi_\theta$
	        \\
	        Save $\tau_j$ in $\mathcal D$
	        \\
		    Update $r_\phi$ using \eqref{eqn:updater} and $\tau_j$
		    \\
		    Update $\gamma_\varphi$ using \eqref{eqn:udpategamma} and $\tau_j$
		    }
		    
		}
    % 		\vspace{8pt}
    % 		 \nonl \textcolor[rgb]{0.5,0.5,0.5}{\# Use trajectories from both $\mathcal D_1$ and $\mathcal D_2$}
		\caption{Generic template for learning internal rewards using implicit gradients}
		\label{apx:Alg:1}  
	\end{algorithm2e}
	\DecMargin{1em}   

\todo{Critic for outer loop variables?}

\section{Injecting Reward Priors}

\subsection{A Bayesian Perspective:}
\todo{Is there a Bayesian perspective that can be brought in here? If we treat additional domain knowledge as `reward priors' then. }

\todo{There should be a simple result here stating that no matter what was the prior, eventually one would converge to the optimal policy w.r.t the main reward.}

\todo{I think the learned reward function should be able to cope with any prior knowledge that can be expressed as an affine transform of the reward function.}

\todo{Is it sometimes easier to inject Q* values directly instead of the reward values? Our method can handle that as well by setting $\gamma=0$ and using the injected `reward'/Q-values. Think about navigation task, isn't Manhattan distance to the goal a good Q value as well?}

\subsection{A Planning Perspective:} \todo{Can we think of the inner loop as a planner, given the learned reward? It does not explicitly construct the model but one can consider the past samples suggestive of a model. Inner loop does an entire optimization process using that.}

\section{(Non-Stationary) Off-Policy}

\todo{Off-policy correction can be seen as a weighted-policy gradient. However, this weight can be combined with the return $G$ to form a new $G'$. Therefore, at a high-level off-policy policy gradient can be obtained from on-policy policy gradient with a different reward function. Ideally, our method should be able to discover such a reward function to correct for off-policy distribution. (Think about TRPO style one-step correction or full trajectory correction?)}

\todo{Recall the prognosticator is nothing but weighted off-policy policy gradient. In principle this can again be seen as off-policy policy gradient of a stationary environment with a different reward function that gives $G'$. Extending this idea further, this can be seen as on-policy policy gradient of a stationary environment with a different reward function $G''$ that incorporates both the weights required for distribution correction and non-stationarity correction. Note that this will require the reward function to depend on the episode number (or rather how old the episode is within a fixed window memory).}

\todo{Recall from the cross-validation project that not all steps needs to be importance weighted for off-policy correction. I wonder if this bi-level optimization method can implicitly capture that.}

Using \eqref{eqn:delta}, the first term in the RHS of \eqref{eqn:deltatotald} can be expressed as,

%
\begin{align}
    \bm A &\coloneqq \frac{\partial \Delta(\theta, \phi, \varphi)}{\partial \phi}\big|_{\theta=\pi^*(\phi, \varphi)}
    \\
    \frac{\partial \Delta(\pi_\theta, \nu_\phi)}{\partial \phi} &= \mathbb{E}_{\mathcal D}\left[ \sum_{t=0}^\infty \frac{\partial \ln \pi_\theta(S_t, A_t)}{\partial \theta} \frac{\nu_\phi(S_t, A_t)}{\partial \phi}^\top \right]. \label{eqn:deltaA}
\end{align}
%
Similarly, 
\begin{align}
     \bm H &\coloneqq \frac{\partial \Delta(\pi_\theta, \nu_\phi)}{\partial \theta}\big|_{\pi_\theta  =\pi^*(\nu_\phi)} 
     \\
     \frac{\partial \Delta(\pi_\theta, \nu_\phi)}{\partial \theta} &= \frac{\partial }{\partial \theta} \mathbb{E}_{\mathcal D}\left[ \sum_{t=0}^\infty \frac{\partial \ln \pi_\theta(S_t, A_t)}{\partial \theta} \nu(S_t, A_t) \right]
     \\
     &= \mathbb{E}_{\mathcal D}\left[ \sum_{t=0}^\infty \frac{\partial^2 \ln \pi_\theta(S_t, A_t)}{\partial^2 \theta} \nu(S_t, A_t) \right], \label{eq:deltaHess}
    %  &= \frac{\partial }{\partial \pi^*(\nu_\phi)} \sum_{s,a}  \sum_{t=0}^\infty \Pr(S_t=s, A_t=a) \frac{\partial \ln \pi^*(\nu_\phi)(s, a)}{\partial \theta} \nu(s, a)
    %  \\
\end{align}
{\color{blue}where the last step follows because $\mathcal D$ is any arbitrary distribution and is not collected using $\pi$. (If it was collected using $\pi$, we would need to take derivatives with respect to the sampling distribution as well.)}

Now, combining \eqref{eqn:deltatotald} with \eqref{eqn:deltaA} and \eqref{eq:deltaHess},
\begin{align}
     \frac{\partial \pi^*(\nu_\phi)}{\partial \phi}  &= -\bm H^{-1} \bm A. \label{eqn:deltaHA}
\end{align}
%
Combining, \eqref{eqn:deltachain} with \eqref{eqn:deltaC} and \eqref{eqn:deltaHA},
\begin{align}
    \frac{\partial \mathcal F(\pi^*(\nu_\phi))}{\partial \phi} &= - \bm c \bm H^{-1} \bm A. 
\end{align}

\subsection{Approximations}

Use diagonal Fisher matrix approximation for Hessian $\bm H \approx \hat{\bm H} \coloneqq  \texttt{diag}({ \bm h})$, where
%
 \begin{align}
    \bm  h &=  \mathbb{E}_{\mathcal D}\left[ \sum_{t=0}^\infty \frac{\partial \ln \pi_\theta(S_t, A_t)}{\partial \theta} \odot \frac{\partial \ln \pi_\theta(S_t, A_t)}{\partial \theta} \big| \nu(S_t, A_t) \big| \right],
    \end{align}
and $\odot$ represents Hadamard product. 
%
{\color{red} Note the absolute value around $\nu(s,a)$. This is done to ensure Hessian approximation is a PSD matrix always. }
%
%
Therefore, we similarly also
use diagonal approximation for $\bm A \approx \hat{\bm A} \coloneqq \texttt{diag}({\bm a})$, where
%
    \begin{align}
        \bm a &=  \mathbb{E}_{\mathcal D}\left[ \sum_{t=0}^\infty \frac{\partial \ln \pi_\theta(S_t, A_t)}{\partial \theta} \odot \frac{\nu_\phi(S_t, A_t)}{\partial \phi} \right].
    \end{align}
    %
Using these approximations, 
%
\begin{align}
    \frac{\partial \mathcal F(\pi^*(\nu_\phi))}{\partial \phi} &= - \bm c \widehat{\bm H}^{-1} \widehat{\bm A}. \label{eqn:deltaApprox}
\end{align}

%%%%%%%%%%%%%%%%%%%%%%%%%%%%%%%%%%%%%%%%%%%%%%%%%%%%%%%%%%%%

There are two important questions that need to be answered in order to devise methods for learning the reward signal.

\begin{quest}
    Is learning the reward signal easier than learning the policy directly from $\mathcal F$? 
\end{quest}

\begin{quest}
   Is learning the policy using the learnt reward signal easier than learning the policy directly from $\mathcal F$?
\end{quest}

\begin{quest}
   What are the different ways used by prior works to approximate an entire lifetime?
\end{quest}

\section{Entropy Regularized}

\begin{align}
    \Delta(\theta, \phi, \varphi) \coloneqq \mathbb{E}_{\mathcal D}\left[ \sum_{t=0}^\infty \Psi_{\theta}(S_t, A_t) \sum_{j=t}^\infty \gamma_\varphi^{j-t}\left( r_\phi(S_j, A_j) - \lambda \ln \pi_\theta(S_j, A_j) \right) \right]. \label{eqn:deltaa}
\end{align}

{
\small
\begin{align}
    \bm c &= \mathbb{E}_{\pi_{\theta^*}}\left[ \sum_{t=0}^\infty \gamma^t \Psi_{\theta^*}(S_t, A_t)
    % \frac{\partial \ln \pi_{\theta^*}(S_t, A_t)}{\partial \theta^*}
    \left(\sum_{j=t}^\infty  \gamma^{j-t} r(S_j, A_j)  \right) \right],
    \\
    \bm A &= \mathbb{E}_{\mathcal D}\left[ \sum_{t=0}^\infty \Psi_{\theta^*}(S_t, A_t)
    % \frac{\partial \ln \pi_{\theta^*}(S_t, A_t)}{\partial \theta^*}
    \left(\sum_{j=t}^\infty \gamma_\varphi^{j-t} \frac{r_\phi(S_j, A_j)}{\partial \phi}\right)^\top \right],
    % \label{eqn:deltaA}
    \\
    \bm B &= \mathbb{E}_{\mathcal D}\left[ \sum_{t=0}^\infty \Psi_{\theta^*}(S_t, A_t)
    % \frac{\partial \ln \pi_{\theta^*}(S_t, A_t)}    {\partial \theta^*}
    \left(\sum_{j=t}^\infty \frac{\partial \gamma_\varphi^{j-t}}{\partial \varphi} \left( r_\phi(S_j, A_j) - \lambda \ln \pi_{\theta^*}(S_j, A_j) \right) \right) \right], 
    %\label{eqn:deltaB}
    \\
    \bm H &= \mathbb{E}_{\mathcal D}\left[ \sum_{t=0}^\infty \frac{\partial \Psi_{\theta^*}(S_t, A_t)}{\partial \theta^*} \left(\sum_{j=t}^\infty  \gamma_\varphi^{j-t} \left( r_\phi(S_j, A_j)  - \lambda \ln \pi_{\theta^*}(S_j, A_j)\right)\right) - \lambda  \Psi_{\theta^*}(S_t, A_t)\left(\sum_{j=t}^\infty  \gamma_\varphi^{j-t} \Psi_{\theta^*}(S_j, A_j)^\top\right)  \right].
    \label{eqn:deltac}
\end{align}
}